\pdfoutput=1
\documentclass{article}
\ifdefined\pdfminorversion\pdfminorversion=7\fi 
\usepackage[T1]{fontenc}
\usepackage{preprint,times}

\usepackage{amsmath,amsfonts,bm}

\def\eqref#1{equation~\ref{#1}}

\def\1{\bm{1}}

\DeclareMathAlphabet{\mathsfit}{\encodingdefault}{\sfdefault}{m}{sl}
\SetMathAlphabet{\mathsfit}{bold}{\encodingdefault}{\sfdefault}{bx}{n}

\newcommand{\E}{\mathbb{E}}

\newcommand{\R}{\mathbb{R}}

\newcommand{\Var}{\mathrm{Var}}

\newcommand{\Cov}{\mathrm{Cov}}

\DeclareMathOperator{\sign}{sign}

\usepackage{hyperref}
\hypersetup{pdftitle={Feedback-Robust AI for Patient Knowledge Graphs},pdfauthor={Mohammed Sameer Syed}}
\usepackage{url}
\usepackage{graphicx}
\usepackage{booktabs}
\usepackage{multirow}
\usepackage{amsthm}
\usepackage{xcolor}
\usepackage{enumitem}
\usepackage{microtype}
\usepackage{caption}
\newtheorem{proposition}{Proposition}

\newtheorem{assumption}{Assumption}
\newtheorem{corollary}{Corollary}
\theoremstyle{definition}

\title{Feedback-Robust AI for\\ Patient Knowledge Graphs}

\author{Mohammed Sameer Syed\\
Roshan AI\\
\texttt{mohammedsameersyed1@gmail.com}}

\begin{document}

\maketitle

\begin{abstract}
Patient knowledge graphs from bedside monitoring should type their relations and state whether the data support their signs. In anesthesia and intensive care, clinicians titrate drugs and ventilation in response to the physiology, so temporal relations mix the patient's response with the clinician's policy. We introduce ClosedLoopBench: 29 relations with signs fixed by physics, pharmacology or clinical practice, on 3,442 VitalDB surgical cases (12,653 h) with negative-control action streams. When each patient's actions are replaced by another patient's, six of 12 estimators declare on average 11-18 of their 19-29 distinct relation estimates significant without calibration, and after calibration cross-correlation and Granger tests still assign ventilator rate$\to$end-tidal CO$_2$ the sign of the clinician's policy. We propose feedback-robust patient graphs that combine concept nodes with evidence pointers, typed relations admitted against negative controls, and beat-level couplings. On VitalDB under null streams, our graphs contain 0.06-0.10 false concept-level relation instances per graph, versus 10-12 for correlational construction. Patient-specific estimates of 11 slow drug and ventilator responses predict later data no better than population estimates, whereas the pulse-arrival-time-systolic-pressure slope is negative in 94.3\% of 2,884 cases and patient-specific (early-late correlation 0.67 [0.63, 0.70]).
\end{abstract}

\section{Introduction}
\label{sec:intro}

Knowledge graphs (KGs) organize clinical information for prediction and interpretation \citep{rotmensch2017learning,choi2017gram,santos2022knowledge,jiang2024graphcare,jiang2025kare}. A patient KG from bedside monitoring should contain concepts the signals support, with pointers to their evidence, and typed relations marked by whether the data support their sign. Graphs from physiological time series take their edges from lagged cross-correlation \citep{bashan2012network} or Granger tests \citep{granger1969investigating}; we call this family, applied with co-occurrence to concept activation series, correlational KG construction (CKG).

Two properties of clinical monitoring limit which relations such a graph can contain. First, the clinician closes the loop: in anesthesia and intensive care, drugs and ventilator settings are titrated in response to the physiology they act on, so an action-vital association mixes the patient's response with the clinician's policy and can take the sign opposite to the drug effect, as in the drug titration paradox \citep{schnider2021drug,schamberg2021drug} and the direct-method bias of closed-loop identification \citep{forssell1999closed}. Second, a surgical case contains few changes of each drug target, so a within-patient slow relation rests on a handful of transitions.

We measure the first property with ClosedLoopBench (3,442 VitalDB cases; \citealp{lee2022vitaldb}), whose negative-control action streams \citep{lipsitch2010negative} give each patient another patient's actions (cross-patient transplant) or its own action steps at random times (random-time). Brackets give 95\% confidence intervals (CIs) throughout. Under transplant, six of 12 estimators declare on average 11 to 18 of their 19-29 distinct relation estimates significant without calibration. After calibration, cross-correlation and Granger tests still assign ventilator rate$\to$end-tidal CO$_2$ (EtCO$_2$) the sign of the clinician's policy, whereas design-based estimators recover the physical sign (Sec.~\ref{sec:exp-benchmark}).

\begin{figure}[t]
\centering
\includegraphics[width=0.96\linewidth]{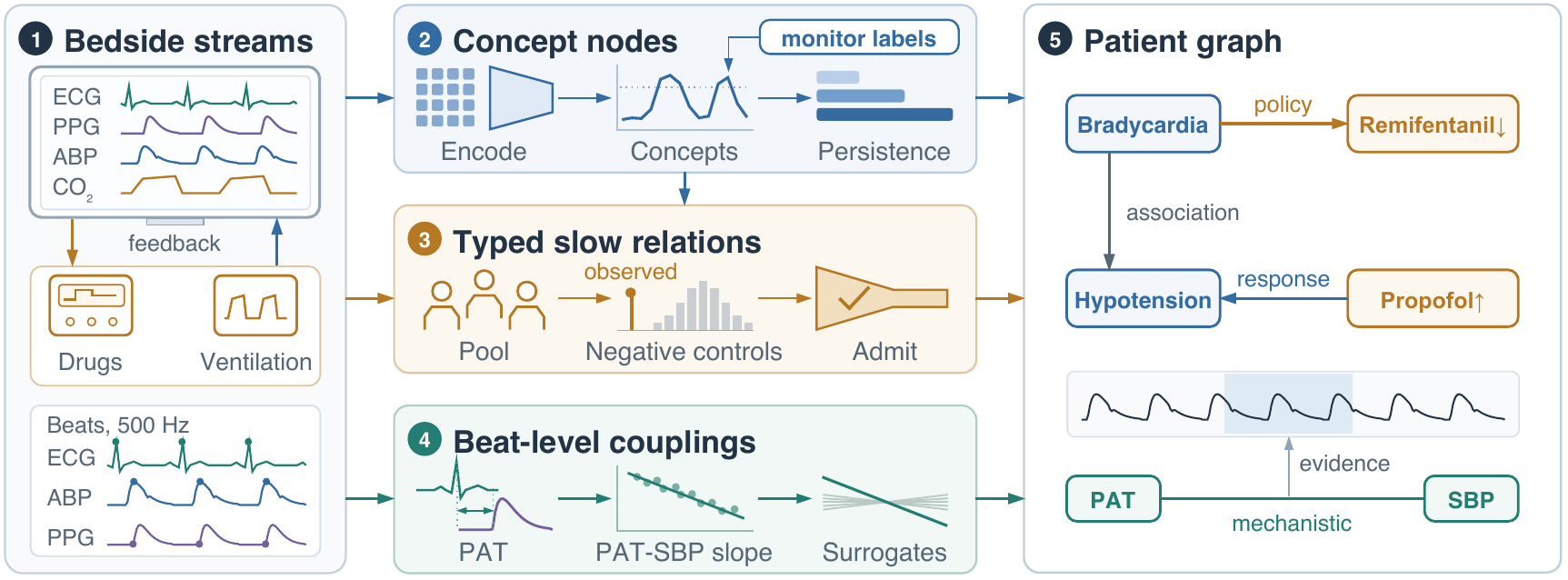}
\caption{Overview. Bedside streams (1), where clinicians titrate drugs and ventilation on the vitals, yield concept nodes (2), typed slow relations admitted against negative controls (3) and surrogate-tested beat-level couplings (4), which form a patient graph with evidence pointers (5).}
\label{fig:architecture}
\end{figure}

We propose feedback-robust patient KGs (Figure~\ref{fig:architecture}). Their concept nodes carry persistence tiers and evidence pointers; their slow relations are typed (policy, response, association), pooled across patients and admitted against negative controls under false-discovery-rate control \citep{benjamini1995controlling,benjamini2001control}; their beat-level couplings are estimated per patient. On VitalDB under null streams, our graphs contain 0.057 [0.014, 0.107] (transplant) and 0.103 [0.035, 0.184] (circular shift) false concept-level relation instances per graph, against 12.40 and 10.07 for CKG (Sec.~\ref{sec:exp-graphs}).

Finally, pre-specified split-half tests (App.~\ref{app:prereg}) locate patient-specific structure: for 11 slow drug and ventilator responses, a patient's own early-half estimate predicts the late half no better than the population estimate. In contrast, the slope of pulse arrival time (PAT, from the electrocardiogram R wave to the peripheral pulse) on systolic blood pressure (SBP) is negative in 94.3\% [93.4, 95.1] of 2,884 cases and patient-specific (early-late correlation 0.67 [0.63, 0.70]; Sec.~\ref{sec:exp-specificity}). Our contributions are:
\begin{enumerate}[leftmargin=*,itemsep=0pt,topsep=1pt]
\item \textbf{ClosedLoopBench}, a real-data benchmark for temporal relation discovery under clinical feedback: 29 relations of a priori sign, two negative-control streams and 12 estimators (Sec.~\ref{sec:benchmark}).
\item \textbf{Feedback-robust patient KGs}, whose tiered concept nodes, typed relations admitted against negative controls, and beat-level couplings point to their signal evidence (Sec.~\ref{sec:method}).
\item \textbf{Analysis} of feedback: sign reversal and future-inaction bias, specializing the titration paradox and conditioning on future treatment \citep{schnider2021drug,suissa2008immortal} to temporal relation discovery (Sec.~\ref{sec:setting}), and split-half tests in which beat-level couplings, but none of 11 slow responses, are patient-specific (Sec.~\ref{sec:exp-specificity}).
\end{enumerate}

\section{Related work}
\label{sec:related}

\paragraph{Clinical and patient KGs.}
Clinical KGs have been learned from co-occurrence in health records \citep{rotmensch2017learning}, taken from ontologies \citep{choi2017gram}, assembled from biomedical databases and literature \citep{santos2022knowledge}, and combined with large language models for patient-level prediction \citep{jiang2024graphcare,jiang2025kare}. Graphs from monitoring have correlational edges \citep{bashan2012network}. We derive typed relations from monitoring and clinician actions, attach signal evidence to every node and edge, and map concepts to SNOMED Clinical Terms (SNOMED CT; \citealp{donnelly2006snomed}) where a code exists.

\paragraph{Temporal causal discovery.}
Granger causality \citep{granger1969investigating}, PCMCI+ \citep{runge2019detecting,runge2020discovering} and VAR-LiNGAM \citep{hyvarinen2010estimation} infer lagged and contemporaneous links from time series; surrogate and shift tests control false positives under serial dependence \citep{kramer2009network,novelli2019large,harris2020shift,liang2025false}. Benchmarks simulate series with known graphs \citep{cheng2024causaltime}, use river networks \citep{stein2025causalrivers} or stress-test assumptions \citep{stein2026tcdarena}; to our knowledge, none uses real data in which an agent acts on the recorded variables and the signs of both its policy and the system's response are fixed in advance.

\paragraph{Feedback, titration and observational designs.}
Feedback biases direct identification under noise-model error \citep{forssell1999closed}, as for the baroreflex \citep{kamiya2011closed,kawada2021closed}. In anesthesia, titration makes more drug correlate with less effect \citep{schnider2021drug,schnider2022drug,schamberg2021drug,minto2024drug,sarraf2024drug}, with a sign that depends on how titration is performed \citep{sarraf2025drug}, and pharmacometric models recover the concentration-effect relation when the drivers of titration are modeled \citep{kristensen2022dose,goulooze2025drug}. \citet{hizli2023causal} model policy and response jointly; we estimate them separately. Conditioning on future treatment causes immortal-time bias \citep{suissa2008immortal,hernan2016specifying}, which target-trial and sequential-trial emulation avoid \citep{hernan2016using,hernan2008observational,danaei2013observational}; related problems arise with event-dependent exposures \citep{farrington2009case} and in event studies whose covariates respond to past outcomes and treatment \citep{botosaru2026event}. Individual responses are identified from repeated exposures \citep{senn2016mastering} or well-excited induction phases \citep{soltesz2013individualized}.

\paragraph{Concepts and beat-level coupling.}
Concept bottleneck models predict through interpretable concepts \citep{koh2020concept,syed2026shifamind}, interpretable models suit high-stakes settings \citep{rudin2019stop}, and rationale faithfulness is often scored by evidence deletion \citep{deyoung2020eraser}; we edit the signals counterfactually instead. At the beat timescale, closed-loop models \citep{porta2013model} and the sequence method \citep{parati1988evaluation} estimate baroreflex sensitivity, pulse transit time shortens as blood pressure rises and pulse arrival time tracks systolic pressure \citep{mukkamala2015toward}, and respiratory pulse pressure variation predicts fluid responsiveness \citep{michard2000relation}.

\section{Relations under clinical feedback}
\label{sec:setting}

\paragraph{Notation.}
Records (surgical cases or intensive-care stays) are indexed by $p=1,\dots,P$ and called patients. Time $t$ lies on a grid of step $\Delta$ (15\,s for concept states, 2\,s for vitals in ClosedLoopBench); beat-level quantities are indexed by beat $n$. The vitals are $y_{p,j}(t)$, $j\in\mathcal{J}$: heart rate (HR), mean arterial pressure (MAP), EtCO$_2$, oxygen saturation (SpO$_2$) and bispectral index (BIS). An action $a\in\mathcal{A}$ (an infusion target or rate, or a ventilator setting) has value $u_{p,a}(t)$ and events $\mathcal{E}_{p,a}=\{\tau: u_{p,a}(\tau)\neq u_{p,a}(\tau^-)\}$, with $u_{p,a}(\tau^-)$ the value just before $\tau$ and direction $d\in\{\uparrow,\downarrow\}$. A concept $c\in\mathcal{C}$ (e.g., hypotension) has a binary state $S_{p,c}(t)$ where observed; an episode is a maximal run of $S_{p,c}=1$. A node $v=(c,\kappa)$ pairs a concept with a persistence tier $\kappa\in\{\text{transient},\text{episodic},\text{persistent}\}$ and carries an evidence set $\Pi(v)$ of the windows or beat ranges that activated it.

A relation $r=(x\to z,\ \mathrm{type}_r,\ \sigma_r,\ h)$ links a source $x$ to a target $z$ (concepts, vitals, actions or beat variables) with expected sign $\sigma_r\in\{-1,+1\}$ at horizon $h$. A \emph{policy} relation (vital or concept$\to$action) describes the clinician's decision rule and a \emph{response} relation (action$\to$vital or concept) the effect of an action; an \emph{association} links concepts on different vitals without a causal reading, and a \emph{mechanistic} relation is a beat-level coupling. An estimator returns $\hat\theta_r$; $\theta_r(h)$ is its limit when actions are independent of the physiology (the open-loop estimand at horizon $h$).

\paragraph{Identification under feedback.}
Negative-control streams \citep{lipsitch2010negative} carry neither a response of the patient nor a policy toward it; estimates on them show what schedules and serial dependence alone produce. A null stream $\tilde u^{(k)}$, $k=1,\dots,K$, is a \emph{cross-patient transplant} (a derangement of action streams among the patients receiving $a$) or a \emph{random-time} stream (the patient's own steps at uniformly random times); concept streams are also nulled by \emph{within-patient circular shifts}.

\begin{proposition}[Sign reversal; informal]\label{prop:reversal}
In the linear closed-loop model of App.~\ref{app:theory}, a lagged or event contrast for a response relation $r$ converges to $\theta_r(h)+B_r(h)$, where the bias $B_r(h)$ depends on the clinician's policy and on the footprint of the unrecorded drivers the clinician reacts to. For a post-minus-pre level contrast, Gaussian vital fluctuations with exponentially decaying autocorrelation and a clinician who corrects deviations of the displayed vitals (Corollary~\ref{cor:crossover}), $B_r(h)$ opposes $\theta_r(h)$ below a crossover horizon, when one exists, and reinforces it above. The estimate has the wrong sign if and only if $\sign B_r(h)=-\sign\theta_r(h)$ and $|B_r(h)|>|\theta_r(h)|$.
\end{proposition}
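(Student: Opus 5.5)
I will write down the linear closed-loop model explicitly, since App.~\ref{app:theory} is not yet visible, and keep it minimal. Take a single vital and action: the displayed vital is $y(t)=x(t)+\sum_{s\ge 0}g(s)\,u(t-s)$, where $g$ is the open-loop impulse response and $x$ is an unrecorded driver. The driver $x$ is a stationary Gaussian process with autocovariance $\gamma(s)=\sigma^2\rho^{|s|}$ for $0<\rho<1$. The clinician acts by $u(t)=-k\,y(t-\ell)+\varepsilon(t)$, with $\varepsilon$ independent of $x$; at a step time this yields an event.

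The contrast is the post-minus-pre level contrast, normalised by the action increment: $\hat\theta_r(h)=\widehat{\Cov}\bigl(y(\tau+h)-y(\tau^-),\,\Delta u(\tau)\bigr)/\widehat{\Var}(\Delta u)$. By ergodicity, the first step shows that it converges to a ratio of population moments. Substituting the model splits the numerator into two parts:
\begin{itemize}
\item the part through $g$, whose limit is $\theta_r(h)$ by definition, since that is what remains when $u$ is independent of $x$;
\item a cross term $\Cov\bigl(x(\tau+h)-x(\tau^-),\,\Delta u(\tau)\bigr)$, which defines $B_r(h)$.
\end{itemize}
The cross term depends on the policy through $k$ and $\ell$, and on the driver's footprint through $\gamma$. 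This gives the first claim of Proposition~\ref{prop:reversal}. The last sentence of the proposition is elementary. If $\operatorname{sign}B_r(h)=-\operatorname{sign}\theta_r(h)$ and $|B_r(h)|>|\theta_r(h)|$, then $\theta_r(h)+B_r(h)$ takes the sign of $B_r(h)$. Conversely, if the signs agree, or if $|B_r(h)|\le|\theta_r(h)|$, the sum keeps the sign of $\theta_r(h)$; a tie gives zero, which is not a wrong sign.

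Next, for the corrective clinician of Corollary~\ref{cor:crossover}, I will evaluate the cross term to leading order in $k$, treating feedback into $x$'s covariance with $u$ only through the explicit policy. This gives
\begin{equation*}
B_r(h)\propto -k\bigl[\gamma(h+\ell)-\gamma(\ell)\bigr]-k\sum_{s}g(s)\bigl[\text{feedback terms}\bigr].
\end{equation*}
With exponential autocovariance, $\gamma(h+\ell)-\gamma(\ell)=\sigma^2\rho^{\ell}(\rho^{h}-1)<0$, so the driver term equals $k\sigma^2\rho^\ell(1-\rho^h)$ times a positive constant. Its sign is fixed by the policy direction: the clinician raises the drug when the vital is high. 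For a response with $\theta_r<0$, such as a vasopressor-type or drug-lowers-vital relation, this means $B_r$ opposes $\theta_r$ at short horizons.

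The crossover comes from comparing this regression-to-the-mean term with the feedback-through-$g$ terms. The regression term grows as $1-\rho^h$ and saturates, whereas the accumulated effect of subsequent corrective actions grows with $\sum_{s\le h}g(s)$. Because the later actions are themselves correlated with the event, they enter with the sign of $\theta_r$. I expect to reduce $B_r(h)$ to $A(1-\rho^h)-C(h)$, where $C(h)$ is monotone with the sign of $\theta_r$. A single crossing then follows from one function being monotone and the other saturating; the proposition's "when one exists" covers the case of no crossing.

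The main obstacle is this last step. I must show that $B_r(h)$ changes sign at most once for general $g$, and this needs some monotonicity of $g$'s cumulative response. I would first try assuming $g$ has a constant sign, so that its cumulative sum is monotone. I would then verify a single-crossing property of $h\mapsto(1-\rho^h)/C(h)$, and fall back on a first-order-in-$k$ expansion if the exact closed-loop covariance is intractable.
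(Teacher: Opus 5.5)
Your decomposition and the final ``if and only if'' are fine in spirit, but the crossover argument rests on a sign that is wrong in your own model. You take the baseline at the single point $y(\tau^-)$ and read the decision from $y(\tau-\ell)$. The baseline is then always closer in time to the decision sample than any post-event sample is. Under a monotone autocorrelation the triggering deviation can therefore only regress, and the driver bias has the sign of the effect at every horizon.

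Your formula says so. A clinician who raises the dose when the vital is high has $k<0$ in $u=-k\,y(t-\ell)+\varepsilon$, so $k\sigma^2\rho^\ell(1-\rho^h)<0$, the same sign as $\theta_r<0$. That is reinforcement, not opposition. It is the analogue of the no-crossover case (b) of Corollary~\ref{cor:crossover}, whose example uses a one-sample baseline.

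The missing idea is a mismatch of windows. The naive contrast subtracts a long pre-window mean ($w_{\mathrm{pre}}$ samples), while the clinician selects on a short trailing display mean $s_t$ ($W_d$ samples). Gaussian conditioning gives $B_r(h)=\Cov(\Psi^{\mathrm{N}}_hy^0_t,s_t)\,\Var(s_t)^{-1}\,\E[s_t-\E s_t\mid s_t>\eta]$, with $\Cov(\Psi^{\mathrm{N}}_hy^0_t,s_t)=\sigma^2\bigl(\bar\rho_{\mathrm{post}}(h)-\bar\rho_{\mathrm{pre}}\bigr)$. The conditional mean is positive, so the sign of $B_r(h)$ is that of $\bar\rho_{\mathrm{post}}(h)-\bar\rho_{\mathrm{pre}}$.

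Selection lifts the recent samples more than the pre-window average. At short horizons the post value therefore still exceeds the baseline, giving a bias of sign $\sigma_\pi$ that opposes a corrective drug. At long horizons $\bar\rho_{\mathrm{post}}(h)\to0$ while $\bar\rho_{\mathrm{pre}}>0$ is fixed, so the bias reinforces. With $\rho(k)=e^{-|k|/T}$, every lag in $\bar\rho_{\mathrm{post}}(h)$ is nonnegative and each term decreases strictly in $h$. Hence the sign changes at most once, and a crossover exists exactly when $\bar\rho_{\mathrm{post}}(1)>\bar\rho_{\mathrm{pre}}$.

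It follows that the competition you plan between a saturating regression term and accumulated feedback through $g$ is not the mechanism. The step you call the main obstacle also disappears. The corollary assumes that no other change of $u$ affects $[t-w_{\mathrm{pre}},t+h]$, so no monotonicity or sign condition on $g$ is needed. Your continuously acting linear policy cannot satisfy this hypothesis, which is why the feedback terms appeared; the paper's event-triggered clinician can. Your claim that later actions enter with the sign of $\theta_r$ is unsupported in any case, since corrective follow-up actions can go either way.

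Your first step also mis-specifies the bias. Under feedback, the part running through $g$ does not converge to $\theta_r(h)$ ``by definition'', because earlier and later changes of $u$ are correlated with the focal one through the loop. The paper instead applies the zero-sum contrast to the superposition $y_t=\sum_i\delta_i\Gamma(t-t_i)+f_t+\nu_t$ at each event. The focal change contributes exactly $\theta_r(h)$. $B_r$ collects the selected driver footprint and noise ($b_f$, $b_\nu$) and the responses to changes before the pre-window and within $(t,t+h]$ ($b_-$, $b_+$). The strong law for the event-weighted ratio then gives convergence.
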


\begin{proposition}[Future-inaction bias; informal]\label{prop:pools}
If control (placebo) times are restricted to times not followed by an action within the outcome horizon, event contrasts are biased; if later actions are one-directional threshold decisions, the bias has the sign of the policy, which for a corrective policy opposes the true effect. Control times defined from information available before $t$ remove this bias under sequential exchangeability and overlap.
\end{proposition}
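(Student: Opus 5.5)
I would prove Proposition~\ref{prop:pools} in the linear closed-loop model of App.~\ref{app:theory} by writing the event contrast as a difference of conditional expectations and isolating the term created by conditioning on the future. Fix a response relation $r$ for action $a$ and vital $j$ at horizon $h$. The event contrast compares the mean of $y_{p,j}(\tau+h)-y_{p,j}(\tau^-)$ over events $\tau\in\mathcal{E}_{p,a}$ with direction $d$ against the same increment over control times $t$. The control pool in the proposition is $\mathcal{T}^{\mathrm{fut}}=\{t:\mathcal{E}_{p,a}\cap(t,t+h]=\emptyset\}$, while the alternative is a pool defined by a rule $\mathcal{T}^{\mathrm{past}}$ that is measurable with respect to the history $\mathcal{H}_t$. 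The first step is to decompose the vital increment after a control time. Under the linear model it equals the drift of the unrecorded drivers plus the responses to any actions inside $(t,t+h]$. The estimand $\theta_r(h)$ is the contrast against the counterfactual increment without the action.

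The second step is the bias term for $\mathcal{T}^{\mathrm{fut}}$. Restricting to times with no later action is the same as conditioning on the event $N_{(t,t+h]}=0$. Under the policy, that event depends on the vital trajectory over $(t,t+h]$, so the control mean becomes $\E[\Delta y\mid N=0]$ rather than $\E[\Delta y\mid\mathcal{H}_t]$. This is the immortal-time structure of \citet{suissa2008immortal}.

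For a one-directional threshold policy, write it as: act upward when the displayed vital crosses a threshold $c$ in a fixed direction. Then $N=0$ is the event that the path $y_j$ stays on one side of $c$ throughout $(t,t+h]$. I would show that conditioning on this event shifts the mean increment monotonically in the direction away from the threshold. For example, if the clinician raises a vasopressor whenever MAP falls below $c$, inaction selects paths whose MAP did not fall, so the control increment is biased upward.

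The bias is therefore $B=\E[\Delta y]-\E[\Delta y\mid N=0]$ in the event-minus-control contrast. I would show it has the sign of the policy relation, namely the association between vital deviation and action. For a corrective policy that sign is opposite to $\sigma_r=\sign\theta_r(h)$: the vasopressor raises MAP, but the policy raises the vasopressor when MAP is low. The tool for the monotonicity step is a first-passage or stochastic-monotonicity argument for a Gaussian process with exponentially decaying autocorrelation (Ornstein--Uhlenbeck), reusing the setup of Corollary~\ref{cor:crossover}. Concretely, $\Delta y$ and the indicator $\1\{\min_{s\le h} y(t+s)>c\}$ are both increasing functionals of the path, so the FKG/Harris inequality gives $\Cov\ge0$ and hence a signed bias. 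Equality is excluded whenever the threshold is crossed with positive probability.

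The third step is the positive claim. If the control rule is $\mathcal{H}_t$-measurable, sequential exchangeability gives $Y_{t+h}(\text{no action at }t)\perp A_t\mid\mathcal{H}_t$, and overlap gives positive probability of both actions given $\mathcal{H}_t$. The standard g-formula argument, with reweighting or matching on $\mathcal{H}_t$ (sequential-trial emulation, \citealp{hernan2016using}), then identifies the contrast at time $t$. Later actions are part of the policy-induced regime and are not conditioned on, so no future-dependent selection enters.

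I expect the main obstacle to be the monotonicity step. Showing the conditional-on-no-crossing bias is signed requires care, because the increment and the survival indicator must both be increasing in the same path order. Later actions also feed back into $y$, which breaks pure monotonicity. I would first handle the case of a single action per horizon, which is exactly the situation where $N=0$ matters. The feedback of earlier actions is absorbed into $\mathcal{H}_t$.
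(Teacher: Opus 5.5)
Your outline follows the paper's proof closely. $B=\E[\Delta y]-\E[\Delta y\mid N=0]$ is the paper's $b_{\mathrm{FI}}=\pi_F(\mu_1-\mu_0)$, averaged over the events' strata. The sign comes from positive association between the outcome and the monotone no-action indicator, which is your FKG/Harris step. Your third step is part (a) of the formal proposition in App.~\ref{app:theory-pools}.

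\textbf{The sign step has a gap.} The remedy you propose for the obstacle you name does not close it. By your own decomposition, the control-time increment includes the response to the later action in $(t,t+h]$. On $\{N\ge1\}$ that response has the sign of the true effect, so it pushes $\mu_1-\mu_0$ against the selection effect. In your vasopressor example, suppose a dip below $c$ triggers an increase that raises MAP quickly and strongly enough. Then $\E[\Delta y\mid N\ge1]>\E[\Delta y\mid N=0]$, and the bias takes the sign of the effect, not of the policy. Put differently, the closed-loop path is not associated: low MAP raises the dose, which raises later MAP. So association of the open-loop Gaussian path does not transfer to the observed path. Restricting to a single action per horizon does not help, because that single later action is exactly the one whose response enters $\mu_1$. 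Absorbing earlier actions into $\mathcal{H}_t$ is beside the point. FKG also gives only $\ge0$, not the strict inequality you claim. The paper does not derive the sign in this generality. It assumes (c3): association of $(Y_{t'},s^{\mathrm{rec}}_{t'+1},\dots,s^{\mathrm{rec}}_{t'+h})$ given the stratum and the check times. It notes that the condition on $\E[Y_{t'}\mid\cdot]$ in (c3) holds exactly for an inert drug when all checks in the window are routine, and it tests the active-drug case by simulation. You need an explicit assumption of this kind.

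\textbf{Your decision model breaks overlap.} If the clinician acts whenever the display crosses $c$, the action at $t$ is a deterministic function of $\mathcal{H}_t$, so overlap fails. This is the paper's alarm path, where the eligible pool is itself biased (between $-1.72$ and $-2.48$\,bpm for an inert drug in App.~\ref{app:theory-results}). The future-inaction estimate is then no longer $\tau_{\mathrm{ITT}}(h)+B$, so your steps 2 and 3 rest on incompatible models. The paper gets overlap from routine checks at times independent of the physiology, at which the threshold rule (c2) applies. Then $F_{t'}=\max_{t_c}\mathbf{1}[d\sigma_\pi s^{\mathrm{rec}}_{t_c}>\eta]$ is monotone in $s^{\mathrm{rec}}$ only conditionally on the check times. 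The sign survives averaging over check times, via the law of total covariance, only if $\E[Y_{t'}\mid\text{stratum, check times}]$ does not depend on them. That is the second clause of (c3), and it also fails for an active drug, because more checks mean more triggered doses.

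\textbf{The association must be conditional.} $\Delta y$ decreases in the pre-event values, so the association argument has to be made conditionally on the stratum. For an Ornstein--Uhlenbeck process conditioned on its full past, the future path has nonnegative conditional covariances and is associated. For a coarser matching statistic such as $M_t$ this can fail, which is why the paper assumes association rather than deriving it.
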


Proposition~\ref{prop:reversal} specializes the titration paradox \citep{schnider2021drug} and direct-method bias \citep{forssell1999closed} to event-triggered, anticipatory controllers, with its horizon dependence; Proposition~\ref{prop:pools} is the event-study form of conditioning on future treatment \citep{suissa2008immortal,hernan2016specifying}, remedied by sequential trial emulation \citep{hernan2008observational}. Proofs and simulations are in App.~\ref{app:theory}.

\section{Feedback-robust patient graphs}
\label{sec:method}

\subsection{Concept nodes and evidence}
\label{sec:method-nodes}

Concept states are defined on 30-s windows $w$ at a 15-s stride. A vital concept is a threshold on a bedside-monitor numeric (e.g., hypotension if MAP $<65$\,mmHg; App.~\ref{app:concepts-vocab}), applied to the window median. Where only waveforms are available, a multimodal model predicts it: Transformer encoders \citep{vaswani2017attention} for the electrocardiogram (ECG), photoplethysmogram (PPG), arterial blood pressure (ABP) and respiration, pretrained by masked-token reconstruction \citep{he2022masked}, attention fusion and one linear head per concept \citep{koh2020concept} (App.~\ref{app:concepts-model}). A head outputs the probability $\hat q_{p,c}(w)$, and $S_{p,c}(t)=\mathbf{1}[\hat q_{p,c}(w_t)\ge\lambda_c]$ for a threshold $\lambda_c$, with $w_t$ the window ending at step $t$. \emph{Monitor-supervised} heads are trained on the monitor's numeric for the same window, \emph{rule-supervised} heads on the same thresholds applied to waveform-derived estimates. Beat concepts (e.g., irregular R-R intervals, prolonged PAT) are computed from beat series and describe signal properties, not diagnoses. An episode is transient, episodic or persistent if its duration is at most $\vartheta^{(1)}_c$, at most $\vartheta^{(2)}_c$ (the 33rd and 67th duration percentiles), or longer. Node $v=(c,\kappa)$ collects the episodes of its tier, and $\Pi(v)$ stores their time spans and, for beat concepts, their beat ranges.

\subsection{Typed slow relations}
\label{sec:method-slow}

\paragraph{Estimation.}
For a concept relation $r=(x\to z)$, a discrete-time hazard of the onset of $z$ is fitted pooled across patients on the 15-s grid, with patient fixed effects $\alpha_p$:
\begin{equation}\label{eq:hazard}
\operatorname{logit}\Pr\bigl(S_{p,z}(t)=1 \,\big|\, S_{p,z}(t-1)=S_{p,z}(t-2)=0,\ \cdot\,\bigr)=\alpha_p+\beta_r^{\top}e_{p,x}(t)+\gamma^{\top}w_p(t),
\end{equation}
where the exposure $e_{p,x}(t)$ holds lag-bin indicators of the source state and of source onsets (for an action source, its changes in direction $d$) up to 10\,min earlier, with coefficients $\beta_r$, and $w_p(t)$ holds adjustment covariates (time since $z$ was last on, case phase, recent doses and actions) with coefficients $\gamma$. The estimate $\hat\theta_r=\hat\omega_r$ is the average of $\hat\beta_r^{\top}e_{p,x}(t)$ over exposed time steps, an average conditional log odds ratio of onset; policy relations model changes of action $a$ in direction $d$ instead of onsets (App.~\ref{app:relations}). For vital relations, a direction-split vector autoregression with exogenous inputs (VARX) with patient fixed effects gives the response of $y_{p,j}$ at horizon $h$ to a unit change of $u_{p,a}$ or, for a policy, the effect of the vital's 2-min trend on the action's increments (App.~\ref{app:benchmark-estimators}).

\paragraph{Calibration and admission.}
Each estimator is rerun on $K$ null streams, giving the calibrated estimate $\tilde\theta_r=\hat\theta_r(u)-K^{-1}\sum_{k=1}^{K}\hat\theta_r(\tilde u^{(k)})$, where $u$ denotes the recorded streams. Its standard error $\widehat{\mathrm{se}}_r$ adds the between-draw variance to a paired case-bootstrap variance, and $p_r=2\Phi(-|\tilde\theta_r|/\widehat{\mathrm{se}}_r)$, with $\Phi$ the standard normal distribution function. Relation $r$ is admitted if and only if at least five cases contribute and
\begin{equation}\label{eq:admission}
0\notin\tilde\theta_r\pm1.96\,\widehat{\mathrm{se}}_r
\;\;\wedge\;\;
p_r^{\mathrm{BY}}\le0.05
\;\;\wedge\;\;
\frac{1}{K}\sum_{k=1}^{K}\mathbf{1}\bigl[\mathrm{call}_k(r)=\sign\tilde\theta_r\bigr]\ge 0.75,
\end{equation}
where $p_r^{\mathrm{BY}}$ is the Benjamini-Yekutieli adjusted $p$-value over the relation family \citep{benjamini2001control}, and $\mathrm{call}_k(r)$ is the sign of the paired 95\% CI of $\hat\theta_r(u)-\hat\theta_r(\tilde u^{(k)})$, or 0 if that CI covers 0. We use $K=8$ transplants. Applied to VARX, Eq.~\ref{eq:admission} is our \emph{typed pooled estimator}; ClosedLoopBench applies it to every estimator. Concept relations are calibrated against 38 null replicates (19 transplants, 19 circular shifts) and admitted if they pass Benjamini-Hochberg control in their family \citep{benjamini1995controlling} and their Wald statistic exceeds every null replicate's.

\paragraph{Instantiation.}
An admitted relation carries its type, the sign of $\tilde\theta_r$ and the population estimate with its CI, but no patient-specific weight; it is instantiated in patient $p$ if $p$'s record contains an evidence pair within its lag window (e.g., a source episode, then a target onset), stored as a pointer.

\subsection{Beat-level couplings}
\label{sec:method-fast}

From ECG, arterial pressure and PPG at 500\,Hz we extract beats $n$ with R-R interval $\mathrm{RRI}_n$, systolic pressure $\mathrm{SBP}_n$, $\mathrm{PAT}_n$ and capnographic respiratory phase $\phi_n$ (App.~\ref{app:fast}); PAT contains a device delay that is constant within a record, and its relation to blood pressure is subject-specific \citep{mukkamala2015toward}, so only its within-patient variation is used. On each 5-min epoch $e$, $b_{p,e}$ is the least-squares slope of detrended $\mathrm{PAT}_n$ on detrended $\mathrm{SBP}_n$; respiration$\to$SBP and respiration$\to$RRI (respiratory sinus arrhythmia, RSA) are first-harmonic amplitudes on $\phi_n$. Per patient, the epoch mean is tested against 499 surrogates that destroy the coupling but keep each series, with Holm control across couplings \citep{holm1979simple}, and epochs passing Benjamini-Hochberg control become pointers. A signed coupling is tested per patient only if the share of cases with its expected sign has a 95\% CI lower bound of at least 0.90; otherwise the graph stores an abstention with its reason. Mechanistic edges from the respiratory phase are directed; the PAT-SBP edge is undirected.

\subsection{Graph assembly}
\label{sec:method-graph}

A patient graph has concept$\times$tier nodes with their episodes, action nodes with time-stamped changes, and variable nodes. Each edge is a relation instance storing type, population sign, estimate with 95\% CI, lag, pointers and a sign status: a relation is \emph{sign-stable} if its sign is admitted consistently (for concept relations, in both halves of the records; App.~\ref{app:graphs-status}), \emph{sign-unresolved} if a conflicting sign is admitted or its sign contradicts a sign check, and \emph{unconfirmed} otherwise; concept responses are never sign-stable because their family fails pre-specified pharmacological sign checks (App.~\ref{app:relations-signs}). Sign stability is an empirical check, not causal identification. In the graphs of Sec.~\ref{sec:exp-graphs}, the vital-level layer applies the typed pooled estimator with one transplant draw and no stability condition to 18 relations.

\section{ClosedLoopBench}
\label{sec:benchmark}

\paragraph{Data and relations.}
We use 3,442 cases (12,653 recorded hours) of the 6,388 in the single-center VitalDB \citep{lee2022vitaldb}, with vitals on a 2-s grid from monitor numerics at their native time stamps, and 171,758 action events: propofol and remifentanil infusion targets, four ventilator settings, three vasoactive infusion rates and derived volatile-anesthetic steps (App.~\ref{app:benchmark-data}). The 29 relations (App.~\ref{app:benchmark-relations}) comprise 19 responses (five gas-exchange relations with signs from respiratory physics, eleven drug responses with signs from pharmacology, two volatile-exposure relations and positive end-expiratory pressure (PEEP)$\to$MAP), scored at $h=60$\,s (ventilation and PEEP) or 120\,s (drugs, volatile steps and oxygenation), and 10 policies with signs from documented practice (e.g., rising MAP$\to$remifentanil increase), which relate the vital's 2-min trend to the action. Increases and decreases are separate relations.

\paragraph{Estimators and metrics.}
We evaluate 12 estimators (App.~\ref{app:benchmark-estimators}): lag-based estimators (positive-peak, signed-peak and direction-aware lagged cross-correlation; pairwise Granger \citep{granger1969investigating}; PCMCI+ \citep{runge2020discovering}; VAR-LiNGAM \citep{hyvarinen2010estimation}; VARX) and event designs (naive event and level contrasts, and three design-based estimators: a matched event design with bias-corrected placebo times \citep{abadie2011bias}, an interrupted-trend design \citep{hausman2018regression} and a sequential-trial contrast \citep{hernan2008observational,danaei2013observational}). We report (i) uncalibrated null discoveries, the estimates whose 95\% CI excludes 0 on a transplant draw, counted over distinct estimates because some estimators share one estimate across relations; (ii) calibrated admissions on real data under Eq.~\ref{eq:admission}, correct when the sign equals $\sigma_r$; and (iii) false admissions when a held-out transplant draw or the random-time stream replaces the real streams.

\section{Experiments}
\label{sec:experiments}
\subsection{Temporal relation discovery under clinical feedback}
\label{sec:exp-benchmark}

\begin{figure}[t]
\centering
\includegraphics[width=\linewidth]{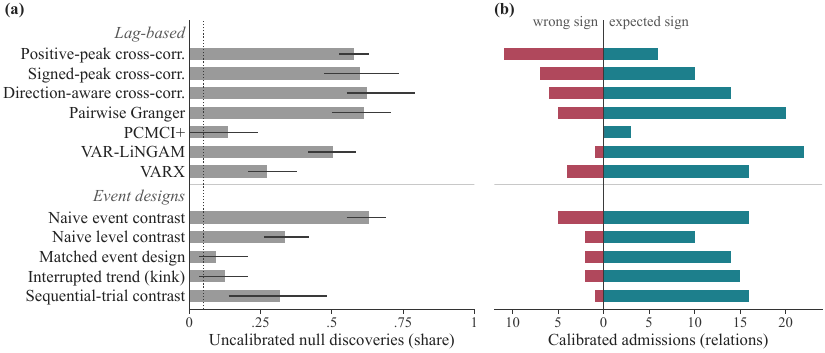}
\caption{ClosedLoopBench by estimator (2-s grid). (a) Share of distinct estimates significant without calibration under transplanted actions (mean and range over 8 draws; dotted: nominal 5\%). (b)~Calibrated admissions on real data with the expected and the wrong sign. PCMCI+ runs on 30-s block means of 60 cases.}
\label{fig:benchmark}
\end{figure}

\begin{table}[t]
\caption{ClosedLoopBench (2-s grid). Null: uncalibrated significant/distinct estimates per transplant draw (mean of 8). Calibrated admissions with the expected/wrong sign (PEEP$\to$MAP: none); PCMCI+: 30-s block means, 60 cases; Vol.: volatile steps; RT: random-time false admissions. Top: lag-based; bottom: event designs. $^\dagger$Typed pooled estimator.}
\label{tab:benchmark}
\centering
\footnotesize
\setlength{\tabcolsep}{4pt}
\begin{tabular}{@{}lcccccc@{}}
\toprule
 & Null & \multicolumn{4}{c}{Calibrated, expected/wrong} & RT \\
\cmidrule(lr){3-6}
Estimator & (uncal.) & Gas (5) & Drug (11) & Vol. (2) & Policy (10) & \\
\midrule
Positive-peak cross-corr. & 11.00/19 & 1/3 & 1/6 & 0/0 & 4/2 & 13 \\
Signed-peak cross-corr. & 11.38/19 & 2/2 & 5/0 & 2/0 & 1/5 & 13 \\
Direction-aware cross-corr. & 18.12/29 & 2/2 & 6/0 & 2/0 & 4/4 & 10 \\
Pairwise Granger & 14.75/24 & 2/3 & 8/0 & 2/0 & 8/2 & 16 \\
PCMCI+ & 3.88/28-29 & 0/0 & 0/0 & 1/0 & 2/0 & 0 \\
VAR-LiNGAM & 12.12/24 & 5/0 & 8/0 & 2/0 & 7/1 & 12 \\
VARX$^\dagger$ & 7.88/29 & 4/1 & 4/1 & 2/0 & 6/2 & 4 \\
\midrule
Naive event contrast & 18.25/29 & 4/0 & 2/4 & 2/0 & 8/1 & 7 \\
Naive level contrast & 6.38/19 & 3/0 & 5/2 & 2/0 & n/a & 2 \\
Matched event design & 2.75/29 & 4/0 & 3/0 & 2/0 & 5/2 & 0 \\
Interrupted trend (kink) & 3.62/29 & 4/0 & 4/0 & 2/0 & 5/2 & 0 \\
Sequential-trial contrast & 9.25/29 & 4/0 & 3/0 & 2/0 & 7/1 & 4 \\
\bottomrule
\end{tabular}
\end{table}

\paragraph{Null discoveries and calibration.}
Table~\ref{tab:benchmark} and Figure~\ref{fig:benchmark} summarize ClosedLoopBench (details in App.~\ref{app:benchmark}). Although transplanted streams carry no response or policy, the naive event contrast and direction-aware cross-correlation declared 18.25 and 18.12 of 29 estimates per draw significant, and pairwise Granger, VAR-LiNGAM and the positive- and signed-peak rules 11.00-14.75 of their 19-24 distinct estimates, against about 1.5 of 29 expected at the nominal level. After calibration, no estimator admitted any relation with a held-out transplant draw as input; with random-time input, the matched, interrupted-trend and PCMCI+ estimators admitted none and the other nine 2 to 16.

\paragraph{Responses.}
The four ventilation$\to$EtCO$_2$ relations were admitted with the physical sign by the naive event contrast, VAR-LiNGAM and the three design-based estimators (matched design: $-0.69$ [$-0.73$, $-0.64$]\,mmHg 60\,s after a set-rate increase, 3,099 cases). The cross-correlation rules and Granger admitted the set-rate relations with the opposite sign ($+0.053$ [0.044, 0.063] and $+0.0019$ [0.0013, 0.0025] for rate increases), the sign of the policy of raising the rate when EtCO$_2$ is high, consistent with Proposition~\ref{prop:reversal}. Propofol increase$\to$MAP and propofol increase$\to$BIS were admitted with the pharmacological sign by 10 of 12 estimators, and the matched and interrupted-trend estimates keep this sign after the first 30\,min of propofol changes (App.~\ref{app:benchmark-sensitivity}). Remifentanil decreases are consistent with the titration paradox \citep{schnider2021drug}: remifentanil decrease$\to$HR was admitted with the wrong sign by both naive contrasts, the positive-peak rule and VARX ($+0.22$ [0.12, 0.32]\,bpm per ng\,mL$^{-1}$) and with the pharmacological sign by no design-based estimator.

\paragraph{Policies and summary.}
Rising MAP$\to$remifentanil increase was admitted with the expected sign by 10 of the 11 policy estimators and rising EtCO$_2{\to}$set-rate increase by 9; the policies on tidal-volume and derived volatile steps were admitted with the opposite sign by 6 and 8 (App.~\ref{app:benchmark-results}). The matched and interrupted-trend designs had no false admissions under either null and admitted 9 and 10 responses, all with the expected sign.
\subsection{Patient graphs}
\label{sec:exp-graphs}

We build one graph per VitalDB case ($P=3{,}442$) and per record of the MIMIC-IV Waveform Database (MIMIC-IV-WDB, 167 records; \citealp{moody2022mimic4wdb}). MIMIC-IV-WDB graphs carry association relations only (no action records), estimated on 51 patients, with rule-supervised concept nodes. The baseline CKG derives edges from the same concept series by lagged cross-correlation, Jaccard co-occurrence and pairwise Granger tests (App.~\ref{app:graphs-ckg}).

\begin{table}[t]
\centering
\caption{Our graphs vs.\ CKG. $\overline{\mathrm{FE}}$: false concept-level relation instances per graph (mean [95\% CI] over 19 null replicates; CKG: one replicate, per-patient tests of 48 concept pairs; ours: 176 candidate relations). Replication: first to second half (presence-matched chance). Graph size: all 3,442 VitalDB and 167 MIMIC-IV-WDB graphs.}
\label{tab:graphs}
\footnotesize
\setlength{\tabcolsep}{2.5pt}
\begin{tabular}{@{}lcccc@{}}
\toprule
 & \multicolumn{2}{c}{VitalDB} & \multicolumn{2}{c}{MIMIC-IV-WDB} \\
\cmidrule(lr){2-3}\cmidrule(l){4-5}
 & Ours & CKG & Ours & CKG \\
\midrule
$\overline{\mathrm{FE}}$, transplant null & 0.057 [0.014, 0.107] & 12.40 & 0.095 [0.000, 0.237] & 11.10 \\
$\overline{\mathrm{FE}}$, shift null & 0.103 [0.035, 0.184] & 10.07 & 0.254 [0.098, 0.458] & 9.53 \\
Relation edges with pointers & 191{,}490 / 191{,}490 & n/a & 2{,}586 / 2{,}586 & n/a \\
Replication, policy & 0.430 (0.344) & n/a & n/a & n/a \\
Replication, response & 0.244 (0.160) & n/a & n/a & n/a \\
Replication, concept pairs & 0.341 (0.317) & 0.352 (0.347) & 0.491 (0.286) & 0.683 (0.638) \\
Nodes / edges per graph & 28.14 / 58.05 & 13.12 / 188.11 & 14.05 / 15.49 & 14.05 / 120.91 \\
\bottomrule
\end{tabular}
\end{table}

\paragraph{False relations.}
We replace real concept or action streams by null streams (19 transplant and 19 circular-shift replicates), repeat admission and instantiation, and count the resulting relation instances in patient $p$'s graph, $\mathrm{FE}_p$, all of them false; $\overline{\mathrm{FE}}$ is its mean over patients and replicates (Table~\ref{tab:graphs}). On VitalDB, our graphs contain $\overline{\mathrm{FE}}=0.057$ [0.014, 0.107] false concept-level relation instances per graph under transplant and 0.103 [0.035, 0.184] under shift (0.20\% and 0.37\% of real concept-level instances), against 12.40 and 10.07 for CKG (per-patient Granger with Benjamini-Hochberg control: 2.78 and 1.86; PCMCI+: 0.79 and 0.85; App.~\ref{app:graphs-fe}); on MIMIC-IV-WDB, 0.095 [0.000, 0.237] and 0.254 [0.098, 0.458] against 11.10 and 9.53. App.~\ref{app:graphs-fe} evaluates the vital-level layer.

\paragraph{Evidence and replication.}
Every edge of our graphs points to the events that instantiate it (episode pairs, action events or beat epochs; on VitalDB, 191,490 of 191,490 relation edges and 8,312 of 8,312 mechanistic edges; all 12,346 pointers checked in 100 cases are valid); CKG stores none. Relations re-estimated on the first and second halves of each record replicate beyond presence-matched chance (the replication expected if each half's relations were random present pairs; App.~\ref{app:relations-replication}) by 0.086 [0.084, 0.088] (policy), 0.084 [0.081, 0.087] (response) and 0.024 [0.020, 0.027] (association), and CKG temporal edges on the same concept pairs by 0.005 [0.002, 0.008]. Per-patient deviations from pooled concept relations lower the held-out log-likelihood by 0.257 [0.241, 0.274] nats per onset (App.~\ref{app:relations-ll}), so graphs store population relations only.

\paragraph{Content.}
Of the 35.7 relation instances per VitalDB graph (55.6 node-level edges; App.~\ref{app:graphs-size}), 20.4 are sign-stable, 2.0 sign-unresolved and 13.3 unconfirmed. The respiration$\to$SBP, RSA and PAT-SBP couplings are instantiated in 100.0\%, 98.3\% and 88.8\% of 2,904, 2,904 and 2,877 eligible patients (App.~\ref{app:graphs-fast}). A cross-case pairing null (600 cases) yields false respiration$\to$SBP and RSA edges in 11.5\% [9.1, 14.3] and 3.2\% [1.9, 4.9] of patients, and three within-patient nulls yield false PAT-SBP edges in at most 1\%. Figure~\ref{fig:casestudy} (App.~\ref{app:graphs-case}) shows one graph.

\subsection{Patient-specific structure}
\label{sec:exp-specificity}

A patient-specific edge requires the patient's own data to predict its later behavior better than the population estimate \citep[cf.][]{senn2016mastering}. For each relation, the units of patient $p$ (action events for slow responses, 5-min epochs for beat-level couplings) are split in time into an early and a late half with unit means $\hat\theta^{\mathrm{E}}_p$ and $\hat\theta^{\mathrm{L}}_p$. From early halves we form the pooled mean $\mu$ and an empirical-Bayes shrinkage estimate $\hat m_p$ of each patient's value (App.~\ref{app:fast-spec}). A relation is patient-specific if the 95\% patient-bootstrap CI of the difference in mean squared error, $\mathrm{MSE}(\hat m)-\mathrm{MSE}(\mu)$, on late-half units lies below 0; the early-late correlation $r_{\mathrm{EL}}$ correlates $\hat\theta^{\mathrm{E}}_p$ and $\hat\theta^{\mathrm{L}}_p$ across patients (App.~\ref{app:fast-spec}).

\begin{figure}[t]
\centering
\includegraphics[width=0.88\linewidth]{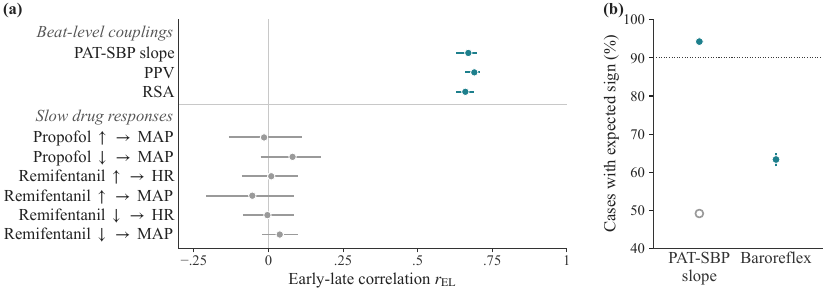}
\caption{(a) $r_{\mathrm{EL}}$ (95\% CI) of beat-level couplings (adjusted for epoch mean HR and SBP) and six slow drug responses (30-s windows). (b) Cases with the expected coupling sign (95\% CI); open circle: within-case shift null; dotted: pre-specified 90\% bar.}
\label{fig:specificity}
\end{figure}

\begin{table}[t]
\centering
\caption{Concept grounding on held-out MIMIC-IV-WDB patients: AUROC against the monitor reference (heads: mean $\pm$ s.d., 3 seeds); Gain: monitor- minus rule-supervised, seed-pooled [95\% CI]; T1: seeds passing the counterfactual test.}
\label{tab:grounding}
\footnotesize
\setlength{\tabcolsep}{2pt}
\begin{tabular}{@{}lcccccc@{}}
\toprule
 & \multicolumn{3}{c}{AUROC vs.\ monitor reference} & & \multicolumn{2}{c}{T1 passes} \\
\cmidrule(lr){2-4}\cmidrule(l){6-7}
Concept & Detector & Rule-sup. & Monitor-sup. & Gain [95\% CI] & Rule & Mon. \\
\midrule
Tachycardia (ECG) & 0.823 & $0.899\pm0.028$ & $0.990\pm0.001$ & $+0.091$ [0.023, 0.170] & 3/3 & 3/3 \\
Bradycardia (ECG) & 0.941 & $0.887\pm0.031$ & $0.985\pm0.007$ & $+0.098$ [0.039, 0.257] & 0/3 & 3/3 \\
Tachycardia (PPG) & 0.533 & $0.812\pm0.016$ & $0.979\pm0.001$ & $+0.167$ [0.040, 0.282] & 3/3 & 3/3 \\
Tachypnea & 0.584 & $0.814\pm0.004$ & $0.924\pm0.002$ & $+0.110$ [0.080, 0.142] & 3/3 & 3/3 \\
Bradypnea & 0.633 & $0.674\pm0.015$ & $0.888\pm0.018$ & $+0.214$ [0.082, 0.323] & 1/3 & 2/3 \\
Hypotension & 0.988 & $0.972\pm0.009$ & $0.983\pm0.002$ & $+0.012$ [$-0.033$, 0.039] & 3/3 & 3/3 \\
Hypertension & 0.994 & $0.995\pm0.001$ & $0.987\pm0.003$ & $-0.008$ [$-0.032$, $-0.0005$] & 3/3 & 3/3 \\
\midrule
Mean / total & 0.785 & $0.865\pm0.008$ & $0.962\pm0.002$ & & 16/21 & 20/21 \\
\bottomrule
\end{tabular}
\end{table}

\paragraph{Slow response relations.}
None of the 11 slow response relations passes, with 30-s windows (3,442 cases) or on the 2-s grid (a 500-case subsample; App.~\ref{app:fast-spec}), and $r_{\mathrm{EL}}$ is near zero even for the best-sampled relations, e.g., $-0.05$ [$-0.21$, 0.09] for remifentanil increase$\to$MAP (898 cases; Figure~\ref{fig:specificity}a).

\paragraph{Beat-level couplings.}
On the 2,954 VitalDB cases that pass beat quality control, the rule declares all five pre-specified beat-level statistics patient-specific (Holm-adjusted permutation $p$-value 0.005), including the PAT-SBP slope, pulse pressure variation (PPV) and RSA, with $r_{\mathrm{EL}}=0.67$ [0.63, 0.70] (2,861 cases), 0.69 [0.66, 0.71] and 0.66 [0.63, 0.69] (2,921 cases each) after residualizing on epoch mean HR and SBP. PPV and RSA remain patient-specific after regressing out surrogates that keep each epoch's signal properties but destroy the coupling ($r_{\mathrm{EL}}=0.71$ [0.69, 0.74] and 0.65 [0.62, 0.68]); sequence baroreflex sensitivity does not (0.16 [0.08, 0.25]). The slope $b_{p,e}$ has a negative case median in 94.3\% [93.4, 95.1] of 2,884 cases (median $-0.58$ [$-0.59$, $-0.56$]\,ms/mmHg), which meets the pre-specified positive-control bar, against 49.1\% and 52.3\% under within-case nulls that break beat alignment (Figure~\ref{fig:specificity}b). The beat-level baroreflex has the expected sign in 63.3\% [61.5, 64.9] of 2,944 cases, below the bar, and our graphs abstain from this edge.

\subsection{Concept grounding}
\label{sec:exp-grounding}

We evaluate the concept heads on held-out MIMIC-IV-WDB test patients (25 for rate concepts, 8 with an arterial line; App.~\ref{app:grounding}) against the thresholded monitor numeric of each window. Against a rule detector that thresholds a waveform-derived estimate and rule-supervised heads trained on its labels, monitor-supervised heads rank windows better: mean area under the receiver operating characteristic curve (AUROC; mean $\pm$ s.d.\ over three seeds) $0.962\pm0.002$, against $0.865\pm0.008$ and 0.785, with paired-gain CIs above 0 for the five rate concepts only (Table~\ref{tab:grounding}).

\paragraph{Counterfactual faithfulness.}
Time-warps and level edits set the rate or MAP of normal windows to 16 or 17 targets; a concept passes the counterfactual test (T1) if its probability increases with the edited quantity, its decisions match the counterfactual labels (balanced accuracy $>0.70$), and sham and amplitude edits leave its decisions nearly unchanged, all with 95\% patient-cluster CIs (App.~\ref{app:grounding-protocol}). Rule-supervised heads pass 5, 5 and 6 of 7 concepts across seeds and monitor-supervised heads 7, 7 and 6; re-initialized models fail all seven concepts and randomized heads pass at most one. Evidence-pointer (T2) and modality-masking (T3) tests pass for 3-4 and 3-5 of 7 concepts (App.~\ref{app:grounding}).

\section{Limitations}
\label{sec:limitations}

The surgical data come from one center, the intensive-care grounding cohort is small (26 test patients), and monitor-supervised heads are scored against the monitor's own labels. Calibration does not control random-time false admissions for 9 of 12 estimators (Table~\ref{tab:benchmark}), and the vital-level graph layer (one transplant draw, not pre-specified) yields 1.15 false edges per graph under a random-time null at 30\,s. Admission controls errors within, not across, relation families (a false population relation in 7 of 19 transplant and 9 of 19 shift replicates). Concept-level responses fail their sign checks, vasoactive and PEEP relations are underpowered, and boluses and surgical stimulation are not time-stamped. Concept hazard models adjust for time to recording end and so estimate retrospective associations; typed relations improve held-out onset prediction over a history-only baseline in only one of three codings (App.~\ref{app:relations-ll}). Pointers record the events that instantiate a relation, not evidence for its direction, and omit the estimator's prior-dose restriction for 147 of 23,998 policy instances. PAT includes a per-case device delay, and circular shifts are not a valid null for ventilator-locked couplings (App.~\ref{app:graphs-fast}). Inference resamples cases (96 people contribute 214 cases); patient specificity is measured within one record and refers to routine maintenance titration, not to repeated or induction-phase exposures \citep{senn2016mastering,soltesz2013individualized}; tiers and beat-concept thresholds are cohort-relative. No clinician has evaluated the graphs, which are not validated for clinical use.

\section{Conclusion}
\label{sec:conclusion}

On ClosedLoopBench, uncalibrated estimators report relations under null action streams, whereas at 2\,s every response admitted by matched and interrupted-trend designs has the expected sign. Our graphs reduce false concept-level relation instances on VitalDB under null streams from 10-12 to 0.06-0.10 per graph, and split-half tests find patient-specific beat-level couplings but no patient-specific slow response (0 of 11).

\section*{AI use statement}

In this work, we used generative AI assistants for code implementation, analysis scripting, drafting of analysis protocols, re-checking of results, literature search and reference verification, and drafting and editing of the text. We reviewed all AI-assisted work. The core library has unit tests, confirmatory analyses ran from protocols fixed before the runs (App.~\ref{app:prereg}), every reported number was traced to the analysis outputs that produced it, and every reference was checked against its DOI, arXiv or proceedings record. We take full responsibility for the content of this paper, including text, code, claims and artifacts produced with the aid of generative AI.

\section*{Ethics statement}

This retrospective study uses de-identified data released by their creators under institutional review board approval \citep{lee2022vitaldb,johnson2023mimic,moody2022mimic4wdb}; it involved no contact with patients, and no attempt was made to re-identify individuals. The PhysioNet release of VitalDB \citep{lee2022vitaldbphysionet} is openly available under the Creative Commons Attribution 4.0 International License. The MIMIC-IV Waveform Database is openly available under the Open Data Commons Open Database License; the MIMIC-IV clinical tables used to link waveform records to intensive-care stays require credentialed access, human-subjects research training and a signed data use agreement, all of which were obtained. No data are redistributed. Every relation in a patient graph carries its type (policy, response, association or mechanistic), so that a record of clinician behavior is not read as a drug effect. The graphs and estimators are research artifacts: they have not been evaluated by clinicians or validated for clinical decision-making and must not be used to guide patient care.

\section*{Reproducibility statement}

Data sources, cohorts, preprocessing, clock alignment, calibration and quality gates are described in App.~\ref{app:data}. The ClosedLoopBench relation table, estimators, null constructions, admission rule and metrics are in App.~\ref{app:benchmark}; the formal model, proofs and simulator are in App.~\ref{app:theory}. The analysis protocols, their SHA-256 hashes, deviations, compute and software versions are listed in App.~\ref{app:prereg}; every confirmatory output records the hash of the protocol it was run under, and random seeds are fixed. Both datasets are publicly available (App.~\ref{app:data}). Code for data processing, graph construction, ClosedLoopBench and all analyses will be released together with the protocols.

\bibliography{references}
\bibliographystyle{preprint}

\appendix
\section{Data and preprocessing}
\label{app:data}

\subsection{Cohorts}
\label{app:data-cohorts}

\paragraph{VitalDB.}
VitalDB \citep{lee2022vitaldb}, distributed through PhysioNet \citep{lee2022vitaldbphysionet,goldberger2000physiobank}, contains 6,388 non-cardiac surgical cases from one tertiary hospital. Each case is a continuous recording in which waveforms, patient-monitor numerics, bispectral index (BIS), anesthesia-ventilator records and infusion-pump records share one clock. We use the 3,442 cases whose track index lists lead-II electrocardiogram (ECG), photoplethysmogram (PPG), arterial pressure and capnogram waveforms (12,653 recorded hours; Table~\ref{tab:data-cohorts}). Reading the file headers confirms all four waveforms in 3,227 cases (12,199~h); 206 cases have no arterial waveform. Cases without a required waveform enter only the analyses that do not use it. Each case is one patient $p$ in the notation of Sec.~\ref{sec:setting}; the 3,442 cases come from 3,324 individuals, of whom 96 contribute 214 cases (at most 12 each). Bootstrap inference across patients resamples cases, and no person-level clustering is applied. Waveforms are sampled at 500~Hz (ECG, PPG, arterial pressure; 100~Hz in 4 cases) and 62.5~Hz (capnogram, airway pressure). Native update intervals of the numerics are about 2~s for the patient monitor, 1~s for BIS and the infusion pumps, and 7~s for the ventilator.

\paragraph{MIMIC-IV-WDB.}
The MIMIC-IV Waveform Database \citep{moody2022mimic4wdb} contains 200 intensive-care records from 198 individuals; 172 records link to an intensive care unit (ICU) stay in the MIMIC-IV clinical tables \citep{johnson2023mimic}. Records were split 140/30/30 into training, validation and test sets, stratified by quartile of ICU length of stay (the 28 records without an ICU link are in the training set). After preprocessing, 167 records have windows (114/27/26). One individual contributes one validation and one test record, both with windows; no training patient contributes an evaluation record. The test split is the primary evaluation set for concept grounding (App.~\ref{app:grounding}): 25 of its 26 patients have monitor heart-rate and respiratory-rate references, and 8 have arterial-pressure windows with a monitor pressure reference. Concept relations are estimated on the 51 validation and test patients with at least 30~min of windows, from 50 individuals (both records of that individual are included and resampled as separate patients; App.~\ref{app:relations}), and graphs are built for all 167 records (App.~\ref{app:graphs}).

\begin{table}[!htbp]
\centering
\caption{Cohorts. Counts or median (interquartile range, IQR). MIMIC-IV-WDB hours and waveform counts refer to the 167 records with windows. BMI: body-mass index; ASA: American Society of Anesthesiologists physical status (\emph{other}: class VI in 11 cases, missing in 79).}
\label{tab:data-cohorts}
\footnotesize
\setlength{\tabcolsep}{4pt}
\begin{tabular}{@{}lll@{}}
\toprule
 & VitalDB & MIMIC-IV-WDB \\
\midrule
Setting & non-cardiac surgery & intensive care \\
Records (individuals) & 3,442 (3,324) & 200 (198) \\
Records with windows & 3,442 & 167 \\
Recorded hours & 12,653 & 7,371 \\
Hours per record & 3.43 (2.36-4.71) & 25.8 (8.2-59.9) \\
Waveforms & ECG, PPG, arterial, capnogram & ECG, PPG, impedance respiration \\
Arterial waveform & 3,236 & 55 \\
Split (train/val/test) & n/a & 140/30/30 \\
\quad with windows & n/a & 114/27/26 \\
Age, years & 60.5 (50-70) & n/a \\
Female & 1,541 (44.8\%) & n/a \\
BMI, kg\,m$^{-2}$ & 23.0 (20.8-25.2) & n/a \\
ASA I-II / III-IV / other & 2,850 / 502 / 90 & n/a \\
Emergency surgery & 447 & n/a \\
Department & general 2,196; thoracic 1,031; & n/a \\
 & gynecology 171; urology 44 & \\
\bottomrule
\end{tabular}
\end{table}

\subsection{Clinician actions}
\label{app:data-actions}

Action streams $u_{p,a}(t)$ come from three sources (Table~\ref{tab:data-actions}). For target-controlled infusion (TCI) of propofol and remifentanil the clinician sets a target concentration and the pump sets its own rate, so only target changes are actions. Propofol and remifentanil are both given by TCI in 1,836 cases (every propofol TCI case) and remifentanil alone in 1,035. Vasoactive infusions are titrated by rate; 208 cases have a vasoactive pump, with 2,125 rate changes in total, of which Table~\ref{tab:data-actions} lists the three drugs used in ClosedLoopBench. Ventilator set rate, set tidal volume, set positive end-expiratory pressure (PEEP) and set inspired oxygen fraction (FiO$_2$) come from the ventilator's set-point records. Adjustments within 10~s of the previous one are merged into one action. For TCI targets, infusion rates and the ventilator set rate the detector starts from zero, so the first setting is an event (a start); for set tidal volume, PEEP and FiO$_2$ the first record is the initial value, not an event. The vaporizer dial is not recorded; volatile-agent steps are therefore derived as changes of at least 0.2 minimum alveolar concentration (MAC) in the 30-s rolling median of the measured end-tidal MAC (values below 0.2 count as off, reset after gaps longer than 120~s). Bolus drugs are recorded only as per-case totals without time stamps (ephedrine in 1,823 cases, phenylephrine boluses in 604, propofol boluses in 1,390) and are not modeled. The counts in Table~\ref{tab:data-actions} include every change in either direction, including starts and stops; the event sets that enter each analysis are defined in App.~\ref{app:benchmark} and App.~\ref{app:relations}.

\begin{table}[!htbp]
\centering
\caption{Recorded action streams in the VitalDB cohort.}
\label{tab:data-actions}
\footnotesize
\begin{tabular}{@{}llrr@{}}
\toprule
Action $a$ & Source & Events & Cases \\
\midrule
Remifentanil TCI target & infusion pump & 53,727 & 2,871 \\
Propofol TCI target & infusion pump & 23,576 & 1,836 \\
Ventilator set rate & ventilator & 24,237 & 3,388 \\
Set tidal volume & ventilator & 10,705 & 2,786 \\
Set FiO$_2$ & ventilator & 11,965 & 3,404 \\
Set PEEP & ventilator & 1,612 & 856 \\
Volatile-agent steps (derived) & measured MAC & 44,138 & 1,998 \\
Phenylephrine rate & infusion pump & 999 & 103 \\
Norepinephrine rate & infusion pump & 668 & 61 \\
Nitroglycerin rate & infusion pump & 131 & 30 \\
\bottomrule
\end{tabular}
\end{table}

\subsection{Data products and resolutions}
\label{app:data-products}

\paragraph{Window numerics ($\Delta = 15$~s).}
Monitor heart rate (HR), pulse rate from the oximeter, mean, systolic and diastolic arterial pressure (MAP, SBP, DBP), oxygen saturation (SpO$_2$), respiratory rate, end-tidal CO$_2$ (EtCO$_2$), drug state and ventilator settings are summarized as medians over 30-s windows with a 15-s stride. Zero values mark a disconnected sensor and are treated as missing. These windows provide the monitor-derived concept states $S_{p,c}(t)$ of the VitalDB graphs (App.~\ref{app:concepts}), the inputs of the hazard model (App.~\ref{app:relations}) and the graph time grid (App.~\ref{app:graphs}).

\paragraph{1-s numerics and the 2-s grid ($\Delta = 2$~s).}
A second extraction reads every numeric track at its native time stamps onto a 1-s grid without forward filling, adding BIS with its signal-quality index, measured MAC, ventilator measured and set values, and TCI target and effect-site concentrations. ClosedLoopBench uses a 2-s grid built from these records: bin $[2i, 2i+2)$~s holds the median of the valid records (MAP 20-200~mmHg, HR 20-250~bpm, EtCO$_2$ 5-100~mmHg, SpO$_2$ 50-100\%, BIS 1-100 with signal-quality index $\ge 50$). The median share of 2-s bins with a value is 0.89 (MAP), 0.94 (HR), 0.91 (EtCO$_2$), 0.95 (SpO$_2$) and 0.89 (BIS); 3,067 cases have BIS in at least half of their bins.

\paragraph{Beat and breath tables.}
At 500~Hz, R-peaks are detected with a Pan-Tompkins-type detector \citep{pan1985real}; each beat $n$ receives its R-R interval $\mathrm{RRI}_n$, the systolic, diastolic and mean pressure of the pulse it ejects, the PPG foot, the pulse arrival time $\mathrm{PAT}_n$ and the respiratory phase from the capnogram. A beat is valid if $\mathrm{RRI}_n \in [300, 2000]$~ms, SBP $\in [40, 250]$, DBP $\in [15, 150]$ and pulse pressure $\in [10, 150]$~mmHg, and if neither its RRI (by more than 20\%) nor its SBP (by more than 30~mmHg) departs from an 11-beat running median; the beat after an invalid interval is also dropped. Breaths are segmented from the capnogram. PAT matching and epoch statistics are described in App.~\ref{app:fast-pat} and App.~\ref{app:fast-epochs}.

\paragraph{Quality gates.}
Table~\ref{tab:data-qc} lists the case counts at each gate. The PAT gates exclude cases whose PPG could not be placed on the case clock, cases whose modal R-to-foot lag falls outside $[0.45, 0.90]$~s (feet matched to a neighboring beat) and cases with PAT on fewer than 80\% of valid beats.

\begin{table}[!htbp]
\centering
\caption{Quality gates for beat-level analyses (VitalDB).}
\label{tab:data-qc}
\footnotesize
\begin{tabular}{@{}lr@{}}
\toprule
Gate & Cases \\
\midrule
Cohort & 3,442 \\
Beat table (ECG, arterial and PPG waveforms present) & 3,227 \\
Valid-beat fraction $\ge 0.5$ & 2,962 \\
$|$beat HR $-$ monitor HR$|$ $\le 5$~bpm & 2,954 \\
$\ge 20$~min of valid beats (beat-level cohort) & 2,954 \\
\quad with $\ge 1$ usable 5-min epoch (105,505 epochs; 94,482 under controlled ventilation) & 2,944 \\
PPG on the case clock & 2,930 \\
Modal R-to-foot lag in $[0.45, 0.90]$~s & 2,917 \\
PAT on $\ge 80\%$ of valid beats (PAT cohort) & 2,893 \\
\quad with a PAT-SBP slope (103,382 epochs) & 2,884 \\
\bottomrule
\end{tabular}
\end{table}

\subsection{Clock alignment and calibration}
\label{app:data-clock}

\paragraph{Clocks.}
The beat, breath, 1-s and PAT tables and the 30-s window numerics are all on the clinical case-start clock; no per-case offset is applied. The origin of the file reader lies a median 0.60~s after case start (maximum 6.46~s), but the 30-s heart-rate medians are reproduced exactly from the case-start 1-s records without a shift (median share of identical windows 1.00), and zero shift fits better than a shift by the reader origin in 121 of 125 checked beat-level cases (the 25 with the largest offsets and 100 random cases with offsets of at least 1~s). Action time stamps agree with the ventilator's native set-point records without a shift: set-rate changes coincide in 3,388 cases, with a median absolute difference of 0.50~s (the 1-s rounding of the native records) and all differences within 1~s. In MIMIC-IV-WDB, monitor numerics are time-stamped in counter ticks (999.56~Hz) and windows are mapped to the record clock through the segment layout; 881 segment headers were cross-checked without mismatch. The mean of the arterial waveform agrees with the monitor's mean arterial pressure with a median absolute error of 0.46~mmHg at zero lag (19 patients, 102,060 windows); the error is smallest at a lag of 6~s, consistent with monitor averaging.

\paragraph{Capnogram gain.}
The capnogram's analog gain is 0.1 in 2,906 cases and 0.075 in 536. With gain 0.1 the waveform reads 1.32-1.34 times the ventilator and monitor EtCO$_2$ numerics and exceeds the simultaneous arterial partial pressure of CO$_2$ (PaCO$_2$) in 11 of 17 blood-gas samples (6 cases), whereas both numerics lie below PaCO$_2$ in 17 of 17. Breath-level CO$_2$ is therefore scaled per case by the ratio of the median ventilator EtCO$_2$ numeric to the median breath end-tidal CO$_2$, over breaths with a numeric above 15~mmHg (at least 30 such breaths). The factor is 0.753 (IQR 0.747-0.758; 2,903 cases) for gain 0.1 and 1.004 (0.997-1.012; 536 cases) for gain 0.075. Raw values are kept.

\paragraph{Arterial offset and heart-rate agreement.}
The arterial waveform and the monitor's arterial numerics differ by a per-case additive offset (monitor minus waveform MAP: median $-0.85$~mmHg, IQR $-3.97$ to $4.10$, range $-9.6$ to $11.0$; 3,221 cases). The offset is recorded and not applied: beat-level analyses use within-case variation, and threshold concepts use the monitor numerics. Beat-derived HR exceeds the monitor HR by a median 0.47~bpm (3,221 cases); of the 2,962 cases with at least half of their beats valid, 6 with an absolute bias above 5~bpm and 2 without an estimate fail the heart-rate gate of Table~\ref{tab:data-qc}.

\subsection{Windows for the concept models}
\label{app:data-windows}

Both datasets pass through one pipeline. Signals are cut into 30-s windows with a 15-s stride. MIMIC-IV-WDB waveforms are used at the record base rate (62.47~Hz); VitalDB waveforms are resampled to 62.5~Hz with an anti-aliased rational resampler that preserves gaps. PPG and respiration are band-pass filtered with a zero-phase fourth-order Butterworth filter (0.5-8~Hz and 0.1-1~Hz; missing spans are interpolated for filtering and restored afterwards); ECG and arterial pressure are not filtered. ECG, PPG and respiration are standardized per segment; arterial pressure is mapped to $(\mathrm{ABP}/\mathrm{mmHg} - 85)/20$, which keeps its absolute level. ECG keeps up to three leads in priority order (II, V and aVR first), so its first channel is lead II when recorded. A window with more than 20\% missing samples is flagged and excluded from evaluation. The respiration channel is an impedance pneumogram in MIMIC-IV-WDB and the capnogram in VitalDB.

\section{Concept vocabulary and node construction}
\label{app:concepts}

This appendix defines the concepts (App.~\ref{app:concepts-vocab}), the waveform concept model and its training (App.~\ref{app:concepts-model}), the construction of monitor labels (App.~\ref{app:concepts-labels}), persistence tiers (App.~\ref{app:concepts-tiers}) and beat concepts (App.~\ref{app:concepts-beat}).

\subsection{Vital concepts}
\label{app:concepts-vocab}

Table~\ref{tab:concept-vocab} lists the vital concepts. A concept is evaluated on each 30-s window, and its state is unobserved where the defining numeric is missing. For surgical records, the numeric is the window median of the anesthesia monitor's records, with zeros treated as missing (App.~\ref{app:data-products}). For intensive-care records, it is the median of at least ten valid 1-Hz monitor samples within the window (App.~\ref{app:concepts-labels}). Heart-rate, respiratory-rate and pressure thresholds are identical for monitor labels, rule labels and graph concepts. Codes are SNOMED CT concept identifiers \citep{donnelly2006snomed}. The capnography concepts, defined on EtCO$_2$ rather than on blood gases, carry local codes.

\begin{table}[h]
\centering
\caption{Vital concepts. Each concept is a finding on one window, not a diagnosis: hypotension and hypertension denote a window with MAP below 65 or above 105\,mmHg, and their codes are the SNOMED CT concepts \emph{low blood pressure} (45007003) and \emph{increased blood pressure} (24184005). Tier cut-offs $\vartheta^{(1)}_c$ / $\vartheta^{(2)}_c$: maximum episode duration (s) of the transient and episodic tiers.}
\label{tab:concept-vocab}
\small
\setlength{\tabcolsep}{3pt}
\begin{tabular}{@{}llllc@{}}
\toprule
Concept & Rule on window median & Waveform & SNOMED CT & Tier cut-offs \\
\midrule
Tachycardia & HR $>100$\,bpm & ECG & 3424008 & 15 / 45 \\
Bradycardia & HR $<60$\,bpm & ECG & 48867003 & 15 / 30 \\
Tachycardia (PPG) & pulse rate (SpO$_2$) $>100$\,bpm & PPG & 3424008 & 30 / 60 \\
Tachypnea & respiratory rate $>20$\,/min & respiration & 271823003 & 15 / 45 \\
Bradypnea & respiratory rate $<12$\,/min & respiration & 86684002 & 30 / 60 \\
Hypotension & MAP $<65$\,mmHg & ABP & 45007003 & 30 / 75 \\
Hypertension & MAP $>105$\,mmHg & ABP & 24184005 & 30 / 75 \\
Hypercapnia & EtCO$_2>45$\,mmHg (valid 15-100) & n/a & local & 60 / 600 \\
Hypocapnia & EtCO$_2<30$\,mmHg (valid 15-100) & n/a & local & 60 / 600 \\
\bottomrule
\end{tabular}
\end{table}

In the surgical records, HR, MAP (arterial line), respiratory rate (capnography) and EtCO$_2$ come from the anesthesia monitor. The relation analyses use the six heart-rate, MAP and respiratory-rate concepts and the two capnography concepts. The pulse-rate concept is used only with waveforms, as a PPG-based counterpart of the ECG heart-rate concepts.

\subsection{Waveform concept model}
\label{app:concepts-model}

\paragraph{Data.}
Waveform concepts are learned on MIMIC-IV-WDB \citep{moody2022mimic4wdb,goldberger2000physiobank} with the training, validation and test split of App.~\ref{app:data-cohorts} (no training patient contributes an evaluation record) and the 30-s windows of App.~\ref{app:data-windows} (15-s stride; 1,874 samples at 62.47\,Hz). ECG uses leads II, V and aVR; PPG, ABP and respiration use one channel each. Windows that fail signal-quality checks are excluded. Self-supervised pretraining uses, per modality, 200,000 training and 40,000 validation windows (295,512 training windows for ABP). Supervised training uses 150,000 windows with at least two modalities present, from 23 training patients, and selects checkpoints on 20,000 windows from 8 validation patients. Evaluation (Sec.~\ref{sec:exp-grounding}, App.~\ref{app:grounding}) uses the 26 test patients with processed windows.

\paragraph{Architecture and training.}
Table~\ref{tab:encoder} gives the architecture. Each modality encoder is pretrained separately. The four encoders, the fusion module and the heads are then fine-tuned jointly. Six models are trained, one per supervision source (rule or monitor labels) and random seed (three seeds, numbered 1-3 in App.~\ref{app:grounding}), with identical settings.

\begin{table}[h]
\centering
\caption{Waveform concept model.}
\label{tab:encoder}
\small
\begin{tabular}{@{}lp{0.72\linewidth}@{}}
\toprule
Component & Setting \\
\midrule
Stem & four blocks of Conv1d (kernels 7, 5, 5, 3; stride 2), normalization and Gaussian error linear unit (GELU); channels 64, 128, 256, 256; 118 tokens per window. BatchNorm, except GroupNorm with one group for ABP, which keeps the absolute pressure level. \\
Encoder & sinusoidal positions; Transformer encoder \citep{vaswani2017attention} with 4 pre-norm layers, 8 heads, width 256, feed-forward width 1,024, dropout 0.1; mean pooling to a 256-d embedding. \\
Pretraining & masked-token reconstruction \citep{he2022masked}: 40\% of stem tokens replaced by a learned mask token; mean squared error between predicted and actual stem features of the masked tokens. AdamW (learning rate $3\times10^{-4}$, weight decay $10^{-4}$), cosine schedule, 50 epochs, batch 4,096; Gaussian noise (s.d.\ 0.01) and amplitude scaling by a factor drawn from the uniform distribution $\mathcal{U}(0.9,1.1)$; checkpoint with the lowest validation loss. \\
Fusion & learned modality embeddings added to the four modality embeddings; 2 self-attention layers (4 heads) over the modality slots, with absent modalities masked from attention; layer normalization; mean over present modalities. \\
Heads & one linear layer and sigmoid per concept on the fused embedding (7 heads). \\
Fine-tuning & all modules jointly; 40 epochs, batch 1,024; AdamW with learning rate $3\times10^{-5}$ (encoders) and $3\times10^{-4}$ (fusion, heads), weight decay $10^{-4}$, cosine schedule, gradient-norm clipping at 1. Loss: binary cross-entropy per concept with positive-class weight $\min(20,n_{\mathrm{neg}}/n_{\mathrm{pos}})$, where $n_{\mathrm{neg}}$ and $n_{\mathrm{pos}}$ are the numbers of negative and positive training labels of the concept, averaged over concepts with a label. Checkpoint with the lowest validation loss on the model's own supervision source. \\
Threshold $\lambda_c$ & monitor-supervised heads: 0.5, fixed. Rule-supervised heads: best F1 score against rule labels on a 0.05-0.95 grid (step 0.05), fitted on validation windows for evaluation (App.~\ref{app:grounding}). Rule-supervised seed 1 also has thresholds fitted on training and validation windows (0.50, 0.45, 0.80, 0.55, 0.50, 0.35 and 0.90 for the first seven concepts of Table~\ref{tab:concept-vocab}); they define the activation episodes for the tier cut-offs and the MIMIC-IV-WDB graphs and are used in the detailed faithfulness criteria (App.~\ref{app:grounding-faith}). \\
\bottomrule
\end{tabular}
\end{table}

\paragraph{Rule labels.}
Rule-supervised heads are trained on labels computed from the same waveform window. Heart and pulse rate are $60f_s/k^\star$, where $f_s$ is the sampling rate and $k^\star$ is the lag (in samples) of the autocorrelation peak of the first channel in the range corresponding to 40-200\,bpm. Respiratory rate is estimated the same way in the range 4-40\,/min. A pressure concept is on if any 10-s sub-window mean (5-s step) of the arterial waveform, restored to mmHg, is below 65 (hypotension) or above 105\,mmHg (hypertension). A window with more than 20\% (rate) or 30\% (pressure) missing samples is labeled off.

\subsection{Monitor labels}
\label{app:concepts-labels}

Monitor-supervised heads are trained on the bedside monitor's own numerics, recorded in MIMIC-IV-WDB alongside the waveforms. These are heart rate from the ECG, pulse rate from the SpO$_2$ sensor, impedance respiratory rate, and arterial mean pressure (the invasive arterial-mean channel ABPm, or ARTm when ABPm is absent). Windows and numerics share the record clock (App.~\ref{app:data-clock}); lag scans against waveform-derived heart rate, MAP and respiratory rate are in App.~\ref{app:grounding-ref}. Samples outside physiological ranges are discarded: HR and pulse rate 20-250\,bpm, respiratory rate 2-80\,/min, arterial mean 20-200\,mmHg. The window value is the median of at least ten valid samples; otherwise it is missing. The label applies the threshold of Table~\ref{tab:concept-vocab}. A concept is labeled only in windows where its waveform modality is present, and windows without a label are ignored by the loss. Table~\ref{tab:label-agreement} compares monitor and rule labels on the training windows. The two sources agree least for the respiratory concepts, where the rule labels 45\% of windows as bradypnea and the monitor 1.3\%.

\begin{table}[h]
\centering
\caption{Monitor versus rule labels on the 150,000 supervised training windows: labeled windows, prevalence and agreement where both exist.}
\label{tab:label-agreement}
\small
\begin{tabular}{@{}lrccc@{}}
\toprule
Concept & Labeled (monitor) & Prevalence, monitor & Prevalence, rule & Agreement \\
\midrule
Tachycardia & 104{,}039 & 0.214 & 0.297 & 0.854 \\
Bradycardia & 104{,}039 & 0.004 & 0.086 & 0.918 \\
Tachycardia (PPG) & 145{,}390 & 0.457 & 0.222 & 0.760 \\
Tachypnea & 145{,}715 & 0.737 & 0.366 & 0.541 \\
Bradypnea & 145{,}715 & 0.013 & 0.455 & 0.550 \\
Hypotension & 17{,}657 & 0.216 & 0.359 & 0.858 \\
Hypertension & 17{,}657 & 0.011 & 0.025 & 0.989 \\
\bottomrule
\end{tabular}
\end{table}

The monitor labels are the output of the monitor's own algorithms. The pressure concepts also have a second reference, the intermittent non-invasive cuff pressure, which is used in App.~\ref{app:supp-pressure}.

\subsection{Persistence tiers}
\label{app:concepts-tiers}

An episode is a maximal run of consecutive windows with $S_{p,c}=1$. With a 15-s stride, an episode of $m$ windows has duration $15m$\,s. It is \emph{transient} if the duration is at most $\vartheta^{(1)}_c$, \emph{episodic} if it is at most $\vartheta^{(2)}_c$, and \emph{persistent} otherwise. The cut-offs are the 33rd and 67th percentiles of episode durations for concept $c$. They were fitted once on the activation episodes of rule-supervised seed 1 in the training and validation patients, with 1,871 (hypertension) to 112,618 (bradypnea) episodes per concept. The same cut-offs are used for concepts computed from monitor numerics. Concepts without calibration data (capnography and beat concepts) use 60\,s and 600\,s. Node $v=(c,\kappa)$ gathers all episodes of concept $c$ in tier $\kappa$. Each episode keeps an identifier, its start and end times, its start and end steps, and a flag that marks a valid onset (the episode starts after two observed steps with the concept off).

\subsection{Beat concepts}
\label{app:concepts-beat}

Beat concepts are defined on the same 30-s windows from the beat tables of App.~\ref{app:data-products}. A beat belongs to a window if its R peak lies in the window. For the rhythm concepts, the R-R interval that starts at that beat must also end inside the window and lie in 300-2000\,ms. A rhythm window is observed if at least 20 such intervals cover at least 24\,s. Table~\ref{tab:beat-concepts} gives the definitions. The two cohort-relative thresholds were fixed from epoch-level summaries before any window was evaluated: $T_{\mathrm{low}}=3.873$\,ms is the 10th percentile of the 5-min-epoch root mean square of successive R-R differences (RMSSD) over the quality-controlled cases, and $T_{\mathrm{pat}}=20.910$\,ms is the 90th percentile of within-case deviations of epoch median PAT from the case median. Beat concepts are signal concepts, not diagnoses. Each beat episode also stores the range of beat-table rows of its windows, the number of beats and the median of its defining statistic.

\begin{table}[h]
\centering
\caption{Beat concepts and their prevalence in quality-controlled surgical cases (prolonged PAT: cases passing PAT quality control).}
\label{tab:beat-concepts}
\scriptsize
\setlength{\tabcolsep}{3pt}
\begin{tabular}{@{}p{0.12\linewidth}p{0.40\linewidth}rp{0.07\linewidth}p{0.07\linewidth}p{0.08\linewidth}@{}}
\toprule
Concept & On in an observed window if and only if & Cases & Obs.\ windows & On / obs. & Cases with episode \\
\midrule
Irregular R-R & after trimming the 2 largest and 2 smallest R-R intervals: RMSSD / mean RRI $\ge0.10$, normalized Shannon entropy of the R-R histogram (16 bins) $\ge0.70$, and the turning-point count within $(2N-4)/3\pm2\sqrt{(16N-29)/90}$ & 2{,}954 & 98.1\% & 5.81\% & 99.5\% \\
Low HRV & RMSSD, after dropping the 2 largest absolute successive differences ($\ge15$ remaining), $<T_{\mathrm{low}}$ & 2{,}954 & 98.1\% & 20.32\% & 95.7\% \\
High PPV & window inside a controlled-ventilation epoch, irregular R-R observed and off, $\ge3$ qualifying breaths; median over breaths of $100(\mathrm{PP}_{\max}-\mathrm{PP}_{\min})/[\tfrac12(\mathrm{PP}_{\max}+\mathrm{PP}_{\min})]>13\%$ \citep{michard2000relation} & 2{,}914 & 61.1\% & 10.85\% & 92.6\% \\
Prolonged PAT & $\ge20$ valid beats with PAT; window median PAT minus the case baseline (median of window medians) $\ge T_{\mathrm{pat}}$ & 2{,}893 & 83.7\% & 11.75\% & 99.1\% \\
\bottomrule
\end{tabular}
\end{table}

Here $N$ is the number of R-R intervals, HRV is heart-rate variability, PPV is pulse pressure variation, $\mathrm{PP}_{\max}$ and $\mathrm{PP}_{\min}$ are the largest and smallest beat pulse pressures within one breath, and a breath qualifies if at least three of its beats, and at least 70\% of them, are valid. A controlled-ventilation epoch is a 5-min epoch with at least 30 breaths, a coefficient of variation of breath duration of at most 0.15, and a breath rate within 2\,/min of the ventilator's set rate. Prolonged PAT is defined within each case, so the device delay contained in PAT cancels. It indicates slower pulse arrival than the patient's usual value, for example with lower arterial pressure or a longer pre-ejection period. Because a PPG foot is matched only within $\pm\min(0.15~\mathrm{s},\,0.4\times\text{median RRI})$ of its expected time (App.~\ref{app:fast-pat}), a PAT rise beyond this margin leaves PAT missing, so the concept can be off or unobserved during the largest rises.

In an exploratory post hoc check against the preoperative 12-lead ECG interpretation, the case share of irregular-R-R windows separated atrial fibrillation (9 cases) from 300 randomly sampled sinus-rhythm cases with an AUROC of 0.93 (Hanley-McNeil 95\% CI [0.82, 1.00]; stratified case bootstrap [0.87, 0.98]). Restricted to windows whose beat-derived heart rate agrees with the monitor's within 5\,bpm and that lie more than 10\,min from the start and end of the recording, the AUROC was 0.92 (stratified case bootstrap [0.84, 0.99]). Restricted instead to windows with at least 80\% valid beats outside the same margins, it was 0.67 [0.48, 0.87]: beat validity excludes beats with an R-R change above 20\%, so under this gate the median share of irregular-R-R windows is 0 in both groups. In 300 random quality-controlled cases, 64.2\% of irregular-R-R windows lie within 10\,min of the start or end of a recording, against 4.8\% of observed windows where the concept is off. Irregular-R-R windows therefore mix rhythm irregularity with artifact near recording edges.

\section{ClosedLoopBench details}
\label{app:benchmark}

This appendix specifies the data, relations, estimators, negative controls, admission rule and metrics of ClosedLoopBench (Sec.~\ref{sec:benchmark}); it gives the full results at the primary 2\,s resolution, the pre-specified predictions, sensitivity analyses and the results at the secondary 30\,s resolution.

\subsection{Data and action events}
\label{app:benchmark-data}

\paragraph{Cohort.}
The benchmark uses all 3,442 cases of our VitalDB cohort (App.~\ref{app:data-cohorts}; \citealp{lee2022vitaldb}), recorded from 3,324 individuals; 96 individuals contribute 214 cases (at most 12 each). The case $p$ is the unit of resampling. No case is excluded.

\paragraph{Vitals.}
Each vital is placed on 2\,s bins $[2i,2i+2)$\,s of the case clock as the median of the valid monitor records in the bin, read at their native time stamps; App.~\ref{app:data-products} gives the valid ranges and the share of bins with a value.

\paragraph{Actions.}
TCI targets (propofol, $\mu$g\,mL$^{-1}$; remifentanil, ng\,mL$^{-1}$), the ventilator set rate (breaths\,min$^{-1}$) and the vasoactive infusion rates (mL\,h$^{-1}$) are taken from the recorded event stream with the value before and after each change. Set tidal volume (L), PEEP (cmH$_2$O) and FiO$_2$ (\%) are taken from the ventilator's set-point records; adjustments within 10\,s of the previous one are merged into one step, and the first record of a case is its initial value, not an event. Volatile exposure steps are changes of at least 0.2\,MAC in the 30\,s rolling median of the measured end-tidal MAC (levels below 0.2\,MAC count as off; the detector resets after a 120\,s gap). A change from or to zero of a TCI target, the ventilator rate, the tidal volume or the MAC level is not a titration event; the VARX receives these changes through a separate channel. After an action's recorded track ends, its dose is treated as missing. Action time stamps agree with the ventilator's own set-point records without a shift (App.~\ref{app:data-clock}).

\paragraph{Event counts.}
Table~\ref{tab:data-actions} counts all recorded events, in both directions and including starts and stops (171,758 in total). Table~\ref{tab:bench-relations} gives the titrations in each relation's direction.

\subsection{Relations}
\label{app:benchmark-relations}

Table~\ref{tab:bench-relations} lists the 29 relations. Estimates are reported in coefficient space: a response relates the dose change to the vital change, and a policy relates the vital's 2-min pre-action trend to the action; $\sigma_r$ is the expected sign of $\partial z/\partial x$ and is the same for the increase and decrease relations of one pair. The domains are gas exchange (R01, R02, R13-R15), drug (R03-R12, R19), volatile exposure (R17, R18), cardiopulmonary (R16) and policy (P01-P10). For FiO$_2{\to}$SpO$_2$ (R15) the event designs use only events and control anchors whose last 30\,s SpO$_2$ level is below 97\%, since higher saturations lie on the plateau of the oxyhemoglobin dissociation curve; the lag-based estimators are unrestricted.

\begin{table}[h]
\caption{The 29 ClosedLoopBench relations (R: response; P: policy). Propofol and remifentanil denote TCI targets. $d$: event direction; $\sigma_r$: expected sign; $h$: horizon (policies use the 2-min pre-action trend); Resp.\ physics: respiratory physics; Titrations: events in direction $d$ (cases) before the pre-window filter.}
\label{tab:bench-relations}
\centering
\footnotesize
\setlength{\tabcolsep}{3.5pt}
\begin{tabular}{@{}lllccclr@{}}
\toprule
ID & Cause $x$ & Effect $z$ & $d$ & $\sigma_r$ & $h$ & Basis & Titrations (cases) \\
\midrule
R01 & Set rate & EtCO$_2$ & $\uparrow$ & $-$ & 60\,s & Resp.\ physics & 11,807 (3,193) \\
R02 & Set rate & EtCO$_2$ & $\downarrow$ & $-$ & 60\,s & Resp.\ physics & 9,042 (2,944) \\
R03 & Propofol & MAP & $\uparrow$ & $-$ & 120\,s & Pharmacology & 8,233 (1,664) \\
R04 & Propofol & MAP & $\downarrow$ & $-$ & 120\,s & Pharmacology & 10,722 (1,823) \\
R05 & Remifentanil & HR & $\uparrow$ & $-$ & 120\,s & Pharmacology & 22,556 (2,781) \\
R06 & Remifentanil & MAP & $\uparrow$ & $-$ & 120\,s & Pharmacology & 22,556 (2,781) \\
R07 & Remifentanil & HR & $\downarrow$ & $-$ & 120\,s & Pharmacology & 21,118 (2,826) \\
R08 & Remifentanil & MAP & $\downarrow$ & $-$ & 120\,s & Pharmacology & 21,118 (2,826) \\
R09 & Phenylephrine & MAP & $\uparrow$ & $+$ & 120\,s & Pharmacology & 501 (103) \\
R10 & Phenylephrine & HR & $\uparrow$ & $-$ & 120\,s & Baroreflex & 501 (103) \\
R11 & Norepinephrine & MAP & $\uparrow$ & $+$ & 120\,s & Pharmacology & 376 (61) \\
R12 & Nitroglycerin & MAP & $\uparrow$ & $-$ & 120\,s & Pharmacology & 78 (30) \\
R13 & Set tidal volume & EtCO$_2$ & $\uparrow$ & $-$ & 60\,s & Resp.\ physics & 5,489 (2,363) \\
R14 & Set tidal volume & EtCO$_2$ & $\downarrow$ & $-$ & 60\,s & Resp.\ physics & 5,216 (2,338) \\
R15 & FiO$_2$ & SpO$_2$ & $\uparrow$ & $+$ & 120\,s & Resp.\ physics & 5,454 (3,346) \\
R16 & PEEP & MAP & $\uparrow$ & $-$ & 60\,s & Physiology & 946 (679) \\
R17 & MAC step & MAP & $\uparrow$ & $-$ & 120\,s & Pharmacology & 16,227 (1,906) \\
R18 & MAC step & BIS & $\uparrow$ & $-$ & 120\,s & Pharmacology & 16,227 (1,906) \\
R19 & Propofol & BIS & $\uparrow$ & $-$ & 120\,s & Pharmacology & 8,233 (1,664) \\
P01 & MAP trend & Remifentanil & $\uparrow$ & $+$ & trend & Practice & 22,556 (2,781) \\
P02 & HR trend & Remifentanil & $\uparrow$ & $+$ & trend & Practice & 22,556 (2,781) \\
P03 & MAP trend & Phenylephrine & $\uparrow$ & $-$ & trend & Practice & 501 (103) \\
P04 & MAP trend & Propofol & $\downarrow$ & $+$ & trend & Practice & 10,722 (1,823) \\
P05 & EtCO$_2$ trend & Set rate & $\uparrow$ & $+$ & trend & Practice & 11,807 (3,193) \\
P06 & MAP trend & Norepinephrine & $\uparrow$ & $-$ & trend & Practice & 376 (61) \\
P07 & SpO$_2$ trend & FiO$_2$ & $\uparrow$ & $-$ & trend & Practice & 5,454 (3,346) \\
P08 & EtCO$_2$ trend & Set tidal volume & $\uparrow$ & $+$ & trend & Practice & 5,489 (2,363) \\
P09 & BIS trend & Propofol & $\uparrow$ & $+$ & trend & Practice & 8,233 (1,664) \\
P10 & BIS trend & MAC step & $\uparrow$ & $+$ & trend & Practice & 16,227 (1,906) \\
\bottomrule
\end{tabular}
\end{table}

The \emph{Basis} column gives the domain knowledge that fixes each expected sign: respiratory physics for ventilation and oxygenation \citep{petersson2014gas}; drug pharmacology for propofol \citep{kazama1999comparison}, remifentanil \citep{komatsu2007remifentanil}, phenylephrine and norepinephrine (including the baroreflex response to phenylephrine; \citealp{ngankee2015randomized}), nitroglycerin \citep{divakaran2017role} and volatile agents \citep{ebert1995cardiovascular,katoh1998electroencephalographic}; cardiopulmonary physiology for PEEP \citep{luecke2005clinical}; and documented anesthetic practice for policies, in which anesthetics are titrated to BIS, arterial pressure and heart rate \citep{lewis2019bispectral}, vasopressors to arterial pressure \citep{ngankee2015randomized}, and ventilation and FiO$_2$ to EtCO$_2$ and SpO$_2$, respectively \citep{hemmes2011rationale}.
The real-data score is therefore concordance with these pre-specified sign expectations, not agreement with causal ground truth, and the bases differ in strength: gas-exchange physics is the strongest; pharmacological and physiological signs depend on context, dose and timing; and policy signs encode documented practice, not an observed decision rule. The cited sources establish the direction of each effect family in their own populations and settings (e.g., an obstetric spinal-anesthesia vasopressor trial and a ventilation trial protocol), not at the 60\,s or 2-min horizons in this cohort, and volatile steps are derived from the end-tidal MAC (App.~\ref{app:benchmark-data}), so their timing differs from that of a commanded vaporizer setting.

\subsection{Estimators}
\label{app:benchmark-estimators}

\paragraph{Notation.}
For relation $r$ with action $a$, vital $j$ and event direction $d_r\in\{+1,-1\}$ (${\uparrow}=+1$, ${\downarrow}=-1$), write $\bar y[t_1,t_2)$ for the mean of the non-missing values of $y_{p,j}$ with time in $[t_1,t_2)$ (the case index $p$ is dropped when clear). Let $L(t)=\bar y[t-30\,\mathrm{s},t)$ be the last level, $T(t)=L(t)-\bar y[t-150\,\mathrm{s},t-120\,\mathrm{s})$ the 2-min trend and $Y_h(t)=\bar y[t+h-3\,\mathrm{s},t+h+3\,\mathrm{s}]$ the level at horizon $h$. The eligible events $\mathcal{E}^r_p\subseteq\mathcal{E}_{p,a}$ are titration steps in direction $d_r$ with no other change of $u_{p,a}$ in $[\tau-150\,\mathrm{s},\tau)$. Events are analyzed as intention-to-treat: later changes of the action do not remove an event. Control anchors $g\in\mathcal{G}^r_p$ lie on a 10\,s grid and have no change of $u_{p,a}$ in $[g-150\,\mathrm{s},g+5\,\mathrm{s}]$; the pool uses information up to a 5\,s decision interval and does not condition on the clinician's later inaction (Proposition~\ref{prop:pools}).

\paragraph{Event designs.}
Each design forms a contrast $\psi_\tau$ for every eligible event, and the pooled estimate is the event-weighted mean $\hat\theta_r=\sum_p\sum_{\tau\in\mathcal{E}^r_p}\psi_\tau\big/\sum_p|\mathcal{E}^r_p|$ over events with a finite contrast.
\begin{itemize}[leftmargin=*,itemsep=1pt,topsep=2pt]
\item \emph{Naive event contrast}: $\psi_\tau=d_r\,\big(Y_h(\tau)-\bar y[\tau-120\,\mathrm{s},\tau)\big)$.
\item \emph{Naive level contrast} (responses only): $\psi_\tau=d_r\,\big(Y_h(\tau)-L(\tau)\big)$.
\item \emph{Matched event design}: $\psi_\tau=d_r\big[\big(Y_h(\tau)-L(\tau)\big)-\frac15\sum_{g\in N(\tau)}\big(Y_h(g)-L(g)\big)-(M_\tau-\bar M_{N(\tau)})^\top\beta_M\big]$, where $N(\tau)$ holds the 5 anchors of the same case nearest to $\tau$ in the features $M=(L,T,u_{p,a})$, standardized over the case's anchors, $\bar M_{N(\tau)}$ is their mean feature vector, and $\beta_M$ is the least-squares coefficient of the control outcome on $M$ over the case's anchors (regression bias correction; \citealp{abadie2011bias}). At least 20 anchors are required.
\item \emph{Interrupted-trend (kink) design}: a line $\eta_\tau+\nu_\tau(t-\tau)$ with intercept $\eta_\tau$ and slope $\nu_\tau$ is fitted to $y$ on $[\tau-60\,\mathrm{s},\tau)$ (at least 21 values) and extrapolated, a design related to regression discontinuity in time \citep{hausman2018regression}; the deviation $D_h(\tau)=Y_h(\tau)-(\eta_\tau+\nu_\tau h)$ replaces $Y_h-L$ in the matched contrast, with features $(\eta,\nu,u_{p,a})$.
\item \emph{Sequential-trial contrast} \citep{hernan2008observational,danaei2013observational}: each event is compared with its 10 nearest anchors from \emph{other} cases in a pooled bank (at most 10 evenly spaced anchors per case and action). The pre-decision state has 19 features: for each vital, $L$, the 10\,s level $\bar y[t-10\,\mathrm{s},t)$ and $T$; the dose in force; elapsed case time; and the numbers of increases and decreases of the other actions in $[t-60\,\mathrm{s},t)$. Features are standardized on the bank, a missing feature is set to the bank mean, and the bias correction uses $\beta_M$ fitted on the pooled bank; at least 5 matched controls with an outcome are required. The outcome is $Y_h-L$.
\end{itemize}
For policies, the outcome is the pre-event trend: the naive contrast uses $d_r\,T(\tau)$; the matched and interrupted-trend designs use $T$ and $120\,\mathrm{s}\times\nu_\tau$, matched on the dose and the elapsed fraction of the case without bias correction; the sequential-trial contrast uses $T$ matched on the dose, elapsed time and co-intervention counts.

\paragraph{Lag-based estimators.}
\begin{itemize}[leftmargin=*,itemsep=1pt,topsep=2pt,beginpenalty=10000]
\item \emph{Lagged cross-correlation.} Per case, $\rho_p(\ell)=\sum_t \mathring u_{p,a}(t)\,\mathring y_{p,j}(t+\ell)\big/\big(\|\mathring u_{p,a}\|\,\|\mathring y_{p,j}\|\big)$ for lags $|\ell|\le1{,}800$\,s, where $\mathring{\cdot}$ denotes the centered series with missing values set to 0; both series need at least 50\% coverage. The curves are averaged over the cases with at least 3 eligible events and at least 50\% coverage of the vital, giving $\bar\rho(\ell)$. The \emph{positive-peak} rule returns $\max_\ell\bar\rho(\ell)$; it is the lagged cross-correlation rule of correlational KG construction (CKG), applied here to dose and vital series. The \emph{signed-peak} rule returns $\bar\rho(\ell^\star)$ with $\ell^\star=\arg\max_\ell|\bar\rho(\ell)|$. The \emph{direction-aware} rule restricts $\ell^\star$ to lags at which the cause leads ($\ell\ge\Delta$ for responses, $\ell\le-\Delta$ for policies). The positive- and signed-peak rules return the same value for a response and a policy on the same action-vital pair and direction, so they have 19 distinct estimates for 29 relations.
\item \emph{Pairwise Granger} \citep{granger1969investigating}. A bivariate vector autoregression of order 30, VAR(30) (60\,s of lags), in the levels of $(u_{p,a},y_{p,j})$, pooled over cases with within-case demeaning (case fixed effects); rows with missing values are dropped and a case needs at least $5\times61$ rows. $\hat\theta_r$ is the sum of the 30 lag coefficients of the cause in the effect's equation. The increase and decrease relations of a pair share one estimate (24 distinct estimates).
\item \emph{PCMCI+} \citep{runge2019detecting,runge2020discovering}. Per case, with partial-correlation tests and significance level 0.01 in the skeleton phase, on 30\,s block means of the dose series (actions with at least one titration) and the vitals, lags 1-8 (up to 240\,s). For each (cause, effect) pair we take the lagged link with the smallest $p$-value; $\hat\theta_r$ is the mean of its partial correlation over eligible cases. PCMCI+ runs on a seeded subsample of 60 cases (cases with vasoactive infusions first), the same in every arm.
\item \emph{VAR-LiNGAM} (vector-autoregressive linear non-Gaussian acyclic model; \citealp{hyvarinen2010estimation}). A pooled fixed-effects VAR(15) (30\,s of lags) of dose and vital levels within five variable sets: \{propofol, remifentanil, set rate, MAP, HR, EtCO$_2$\}, \{phenylephrine, norepinephrine, nitroglycerin, MAP, HR\}, \{set tidal volume, FiO$_2$, PEEP, EtCO$_2$, SpO$_2$\}, \{PEEP, MAP, HR\} and \{MAC, propofol, BIS, MAP\}; a relation is scored in the first set that contains its cause and effect. The DirectLiNGAM algorithm \citep{shimizu2011directlingam} of the \texttt{lingam} package, applied to the pooled VAR residuals (300 per case, at most 20,000), gives the instantaneous matrix $B_0$, the lagged matrices are $B_\ell=(I-B_0)A_\ell$ for the VAR coefficient matrices $A_\ell$ ($I$ the identity), and $\hat\theta_r=\sum_{\ell=0}^{15}B_\ell[z,x]$. It uses 50 bootstrap replicates; increase and decrease relations share one estimate (24 distinct estimates).
\item \emph{VARX}. For each vital, a pooled fixed-effects regression of the first difference $y_{p,j}(t)-y_{p,j}(t-\Delta)$ on 10 lags of the changes of all five vitals (a missing lagged change enters as 0) and, for every action, the signed increments $u_{p,a}(\tau)-u_{p,a}(\tau^-)$ ($\tau^-$: just before the change) split into an increase channel, a decrease channel and a channel for non-titration changes, each summed in lag bins $(0,10]$, $(10,30]$, $(30,60]$, $(60,90]$, $(90,120]$, $(120,180]$ and $(180,240]$\,s. Each equation uses the rows where its outcome is observed. For a response, $\hat\theta_r$ is the cumulative change of vital $j$ at horizon $h$ after a unit increment on the relation's channel, propagated through the fitted system. For a policy, the direction-$d_r$ increment series of action $a$ is regressed on the vitals' 2-min trends and levels at $t-\Delta$ and on the action's own increments in bins up to 120\,s; $\hat\theta_r$ is the coefficient of the vital's trend. Our typed pooled estimator uses these estimates and therefore equals VARX under the admission rule.
\end{itemize}

\paragraph{Uncertainty.}
All estimators use the same 200 case-bootstrap replicates $b=1,\dots,200$ (multinomial case weights, fixed seed; VAR-LiNGAM uses the first 50), and the same weight matrix is applied to every arm, so real and null estimates are paired \citep{efron1994introduction}. Event designs resample case sums of contrasts; lag-based estimators resample case curves (cross-correlation), case estimates (PCMCI+) or case sufficient statistics (Granger, VAR-LiNGAM, VARX). A single-arm call uses the percentile 95\% CI and requires at least 5 contributing cases.

\subsection{Negative controls, admission rule and metrics}
\label{app:benchmark-admission}

\paragraph{Negative controls.}
Negative controls are data in which the relation cannot exist; they detect systematic error \citep{lipsitch2010negative} and calibrate $p$-values for it \citep{schuemie2014interpreting}. For the cross-patient transplant $\tilde u^{(k)}$, each action's users are permuted by a seeded random derangement (no case keeps its own stream); case $p$ receives its partner's event stream for that action, placed on $p$'s clock by elapsed time from case start and truncated to $p$'s duration. Each action has its own derangement, and the $K=8$ draws use fixed seeds. The random-time null places each case's own steps, with their values and order, at sorted uniform random times over the case, with a seed per case.

\paragraph{Admission rule.}
For an estimator and relation $r$, with real-data estimate $\hat\theta_r(u)$ and transplant estimates $\hat\theta_r(\tilde u^{(k)})$,
\begin{equation}
\label{eq:bench-calib}
\tilde\theta_r=\hat\theta_r(u)-\frac1K\sum_{k=1}^{K}\hat\theta_r(\tilde u^{(k)}),\qquad
\widehat{\mathrm{se}}_r^2=\widehat{\mathrm{Var}}_b\Big[\hat\theta^{(b)}_r(u)-\frac1K\sum_{k=1}^{K}\hat\theta^{(b)}_r(\tilde u^{(k)})\Big]+\frac1K\,\widehat{\mathrm{Var}}_k\big[\hat\theta_r(\tilde u^{(k)})\big],
\end{equation}
where $\hat\theta^{(b)}_r$ is the estimate on bootstrap replicate $b$ and $\widehat{\mathrm{Var}}_b$, $\widehat{\mathrm{Var}}_k$ are sample variances over replicates and over draws. The first term is patient sampling; the second is the choice of derangement. The CI is $\tilde\theta_r\pm1.96\,\widehat{\mathrm{se}}_r$ and the two-sided $p$-value is $p_r=2\Phi(-|\tilde\theta_r|/\widehat{\mathrm{se}}_r)$, with $\Phi$ the standard normal distribution function. The relation is admitted if (i) the CI excludes 0; (ii) the Benjamini-Yekutieli-adjusted $p_r$ over the estimator's relations (29; 19 for the naive level contrast) is at most $\alpha=0.05$ \citep{benjamini2001control}; (iii) its stability, the share of draws $k$ whose paired percentile 95\% interval of $\hat\theta^{(b)}_r(u)-\hat\theta^{(b)}_r(\tilde u^{(k)})$ excludes 0 with the sign of $\tilde\theta_r$, is at least 0.75; and (iv) at least 5 cases contribute. An admitted relation is concordant (admitted with the expected sign) if $\operatorname{sign}\tilde\theta_r=\sigma_r$ and wrong-signed otherwise.

\paragraph{Metrics.}
\emph{Uncalibrated null discoveries}: on each transplant draw, the relations whose single-arm CI excludes 0; we report the mean over the 8 draws, over relations and over distinct estimates (relations with identical estimate and CI counted once). A calibrated test at level 0.05 would flag about 5\% of the distinct estimates. \emph{Held-out false admissions}: each draw $k$ in turn is the input, calibrated against the other 7 draws; we report the mean number of admissions per draw. \emph{Random-time false admissions}: the random-time arm is the input, calibrated against the 8 transplant draws.

\subsection{Results at the primary \texorpdfstring{2\,s}{2 s} resolution}
\label{app:benchmark-results}

Table~\ref{tab:bench-leaderboard} gives every count for each estimator, Table~\ref{tab:bench-calls} the call for every estimator and relation, and Table~\ref{tab:bench-estimates} the calibrated estimates cited in Sec.~\ref{sec:exp-benchmark}. Estimators differ in native estimand, resolution and cohort (PCMCI+ runs on 30\,s block means of 60 cases), so these tables compare admission behavior under a common calibration rule, not a like-for-like ranking. With a held-out transplant draw as input, every estimator admitted 0 relations in each of the 8 draws.

\begin{table}[h]
\caption{Counts per estimator at 2\,s. Rel.: relations (distinct estimates). Real: single-arm calls on real data, expected/wrong sign. Transplant: uncalibrated discoveries per draw, mean [range] over relations and mean over distinct estimates. RT: uncalibrated discoveries on the random-time arm (distinct). Cal.: calibrated admissions, expected/wrong sign. FA: false admissions (held-out draw mean; random-time).}
\label{tab:bench-leaderboard}
\centering
\footnotesize
\setlength{\tabcolsep}{2.4pt}
\begin{tabular}{@{}lcccccccc@{}}
\toprule
 & & \multicolumn{4}{c}{Uncalibrated} & & \multicolumn{2}{c}{FA} \\
\cmidrule(lr){3-6}\cmidrule(lr){8-9}
Estimator & Rel. & Real & Transplant & Distinct & RT & Cal. & Held-out & RT \\
\midrule
Naive event contrast & 29 (29) & 17/5 & 18.25 [16-20] & 18.25 & 16 (16) & 16/5 & 0 & 7 \\
Naive level contrast & 19 (19) & 13/2 & 6.38 [5-8] & 6.38 & 5 (5) & 10/2 & 0 & 2 \\
Positive-peak cross-corr. & 29 (19) & 10/19 & 17.62 [15-19] & 11.00 & 18 (13) & 6/11 & 0 & 13 \\
Signed-peak cross-corr. & 29 (19) & 13/10 & 17.75 [14-22] & 11.38 & 17 (11) & 10/7 & 0 & 13 \\
Direction-aware cross-corr. & 29 (29) & 17/6 & 18.12 [16-23] & 18.12 & 17 (17) & 14/6 & 0 & 10 \\
Pairwise Granger & 29 (24) & 22/5 & 19.75 [17-22] & 14.75 & 19 (14) & 20/5 & 0 & 16 \\
PCMCI+ & 29 (28-29) & 11/0 & 3.88 [0-7] & 3.88 & 5 (5) & 3/0 & 0 & 0 \\
VAR-LiNGAM & 29 (24) & 25/1 & 16.00 [14-17] & 12.12 & 15 (11) & 22/1 & 0 & 12 \\
VARX & 29 (29) & 18/5 & 7.88 [6-11] & 7.88 & 6 (6) & 16/4 & 0 & 4 \\
Matched event design & 29 (29) & 18/3 & 2.75 [1-6] & 2.75 & 4 (4) & 14/2 & 0 & 0 \\
Interrupted trend (kink) & 29 (29) & 18/3 & 3.62 [1-6] & 3.62 & 5 (5) & 15/2 & 0 & 0 \\
Sequential-trial contrast & 29 (29) & 19/4 & 9.25 [4-14] & 9.25 & 8 (8) & 16/1 & 0 & 4 \\
\bottomrule
\end{tabular}
\end{table}

\begin{table}[h]
\caption{Calibrated calls at 2\,s. \textbf{R}/\textbf{X}: admitted with the expected/wrong sign; r/x: not admitted, estimate with the expected/wrong sign; n/a: not estimated. NE/NL: naive event/level contrast; XP/XS/XD: positive-peak/signed-peak/direction-aware cross-correlation; GR: Granger; PC: PCMCI+; VL: VAR-LiNGAM; VX: VARX; MA: matched; KI: interrupted trend; ST: sequential trial. Last columns: estimators admitting with the expected/wrong sign.}
\label{tab:bench-calls}
\centering
\footnotesize
\setlength{\tabcolsep}{3.2pt}
\begin{tabular}{@{}lcccccccccccc|cc@{}}
\toprule
 & NE & NL & XP & XS & XD & GR & PC & VL & VX & MA & KI & ST & R & X \\
\midrule
R01 & \textbf{R} & \textbf{R} & \textbf{X} & \textbf{X} & \textbf{X} & \textbf{X} & r & \textbf{R} & \textbf{R} & \textbf{R} & \textbf{R} & \textbf{R} & 7 & 4 \\
R02 & \textbf{R} & \textbf{R} & \textbf{X} & \textbf{X} & \textbf{X} & \textbf{X} & r & \textbf{R} & \textbf{R} & \textbf{R} & \textbf{R} & \textbf{R} & 7 & 4 \\
R03 & \textbf{R} & \textbf{R} & \textbf{X} & \textbf{R} & \textbf{R} & \textbf{R} & r & \textbf{R} & \textbf{R} & \textbf{R} & \textbf{R} & \textbf{R} & 10 & 1 \\
R04 & \textbf{X} & x & \textbf{X} & \textbf{R} & \textbf{R} & \textbf{R} & r & \textbf{R} & r & r & x & \textbf{R} & 5 & 2 \\
R05 & x & \textbf{R} & \textbf{X} & x & r & \textbf{R} & x & \textbf{R} & \textbf{R} & r & r & r & 4 & 1 \\
R06 & \textbf{X} & \textbf{R} & \textbf{X} & \textbf{R} & \textbf{R} & \textbf{R} & r & \textbf{R} & \textbf{R} & r & \textbf{R} & x & 7 & 2 \\
R07 & \textbf{X} & \textbf{X} & \textbf{X} & x & x & \textbf{R} & x & \textbf{R} & \textbf{X} & x & x & x & 2 & 4 \\
R08 & \textbf{X} & \textbf{X} & \textbf{X} & \textbf{R} & \textbf{R} & \textbf{R} & r & \textbf{R} & r & \textbf{R} & \textbf{R} & r & 6 & 3 \\
R09 & r & \textbf{R} & \textbf{R} & r & \textbf{R} & \textbf{R} & x & \textbf{R} & r & r & r & r & 5 & 0 \\
R10 & r & r & x & x & x & x & x & x & x & x & r & r & 0 & 0 \\
R11 & x & r & r & x & r & r & r & r & r & r & r & x & 0 & 0 \\
R12 & r & r & x & x & x & r & r & r & x & r & r & r & 0 & 0 \\
R13 & \textbf{R} & r & x & \textbf{R} & \textbf{R} & \textbf{R} & r & \textbf{R} & \textbf{X} & \textbf{R} & \textbf{R} & \textbf{R} & 8 & 1 \\
R14 & \textbf{R} & \textbf{R} & \textbf{X} & \textbf{R} & \textbf{R} & \textbf{R} & r & \textbf{R} & \textbf{R} & \textbf{R} & \textbf{R} & \textbf{R} & 10 & 1 \\
R15 & r & r & \textbf{R} & x & x & \textbf{X} & r & \textbf{R} & \textbf{R} & r & r & r & 3 & 1 \\
R16 & r & r & r & r & r & r & x & r & r & r & r & x & 0 & 0 \\
R17 & \textbf{R} & \textbf{R} & x & \textbf{R} & \textbf{R} & \textbf{R} & \textbf{R} & \textbf{R} & \textbf{R} & \textbf{R} & \textbf{R} & \textbf{R} & 11 & 0 \\
R18 & \textbf{R} & \textbf{R} & r & \textbf{R} & \textbf{R} & \textbf{R} & r & \textbf{R} & \textbf{R} & \textbf{R} & \textbf{R} & \textbf{R} & 10 & 0 \\
R19 & \textbf{R} & \textbf{R} & r & \textbf{R} & \textbf{R} & \textbf{R} & r & \textbf{R} & \textbf{R} & \textbf{R} & \textbf{R} & \textbf{R} & 10 & 0 \\
\midrule
P01 & \textbf{R} & n/a & \textbf{R} & \textbf{X} & \textbf{R} & \textbf{R} & \textbf{R} & \textbf{R} & \textbf{R} & \textbf{R} & \textbf{R} & \textbf{R} & 10 & 1 \\
P02 & \textbf{R} & n/a & \textbf{R} & r & r & \textbf{R} & r & \textbf{R} & \textbf{R} & \textbf{R} & \textbf{R} & \textbf{R} & 8 & 0 \\
P03 & \textbf{R} & n/a & \textbf{X} & x & \textbf{R} & \textbf{R} & \textbf{R} & \textbf{R} & x & r & r & r & 5 & 1 \\
P04 & \textbf{R} & n/a & \textbf{R} & \textbf{X} & \textbf{X} & \textbf{R} & x & r & \textbf{R} & r & r & \textbf{R} & 5 & 2 \\
P05 & \textbf{R} & n/a & \textbf{R} & \textbf{R} & \textbf{R} & \textbf{X} & r & \textbf{R} & \textbf{R} & \textbf{R} & \textbf{R} & \textbf{R} & 9 & 1 \\
P06 & \textbf{R} & n/a & x & r & \textbf{R} & \textbf{R} & r & \textbf{R} & x & r & r & \textbf{R} & 5 & 0 \\
P07 & \textbf{R} & n/a & \textbf{X} & r & r & \textbf{R} & r & \textbf{R} & \textbf{R} & \textbf{R} & \textbf{R} & \textbf{R} & 7 & 1 \\
P08 & x & n/a & r & \textbf{X} & \textbf{X} & \textbf{R} & r & x & \textbf{X} & \textbf{X} & \textbf{X} & \textbf{X} & 1 & 6 \\
P09 & \textbf{R} & n/a & x & \textbf{X} & \textbf{X} & \textbf{R} & x & \textbf{R} & \textbf{R} & \textbf{R} & \textbf{R} & \textbf{R} & 7 & 2 \\
P10 & \textbf{X} & n/a & x & \textbf{X} & \textbf{X} & \textbf{X} & r & \textbf{X} & \textbf{X} & \textbf{X} & \textbf{X} & x & 0 & 8 \\
\bottomrule
\end{tabular}
\end{table}

\begin{table}[h]
\caption{Selected calibrated estimates at 2\,s (IDs as in Table~\ref{tab:bench-relations}). Event designs: direction-signed change of the vital per event (mmHg, bpm, BIS points or SpO$_2$ percentage points; policies: change over the 2 min before the action). Cross-correlation: correlation; Granger: sum of lag coefficients; VARX: change of the vital at $h$ per unit action change. Stability: draws whose paired interval excludes 0 with the sign of $\tilde\theta_r$, of 8.}
\label{tab:bench-estimates}
\centering
\footnotesize
\setlength{\tabcolsep}{4pt}
\begin{tabular}{@{}llrccl@{}}
\toprule
Relation & Estimator & $\tilde\theta_r$ [95\% CI] & Stability & Cases & Call \\
\midrule
R01 & Matched & $-0.69$ [$-0.73$, $-0.64$] & 8/8 & 3,099 & \textbf{R} \\
 & Sequential trial & $-0.81$ [$-0.86$, $-0.76$] & 8/8 & 3,099 & \textbf{R} \\
 & Cross-corr.\ (3 rules) & $0.053$ [$0.044$, $0.063$] & 8/8 & 1,956 & \textbf{X} \\
 & Granger & $0.0019$ [$0.0013$, $0.0025$] & 8/8 & 3,295 & \textbf{X} \\
R02 & Matched & $-0.76$ [$-0.82$, $-0.70$] & 8/8 & 2,838 & \textbf{R} \\
 & Sequential trial & $-1.07$ [$-1.13$, $-1.01$] & 8/8 & 2,838 & \textbf{R} \\
R13 & VARX & $4.25$ [$1.39$, $7.11$] & 8/8 & 3,426 & \textbf{X} \\
R15 & Naive level & $1.04$ [$0.51$, $1.56$] & 5/8 & 510 & r \\
R03 & Interrupted trend & $-0.92$ [$-1.28$, $-0.57$] & 8/8 & 1,519 & \textbf{R} \\
R19 & Interrupted trend & $-2.99$ [$-3.28$, $-2.70$] & 8/8 & 1,460 & \textbf{R} \\
R07 & Naive event & $0.88$ [$0.74$, $1.01$] & 8/8 & 2,822 & \textbf{X} \\
 & Naive level & $0.73$ [$0.60$, $0.86$] & 8/8 & 2,822 & \textbf{X} \\
 & Positive-peak cross-corr. & $0.057$ [$0.047$, $0.067$] & 8/8 & 2,452 & \textbf{X} \\
 & VARX & $0.22$ [$0.12$, $0.32$] & 8/8 & 3,442 & \textbf{X} \\
 & Matched & $0.08$ [$-0.03$, $0.19$] & 1/8 & 2,803 & x \\
 & Interrupted trend & $0.08$ [$-0.05$, $0.21$] & 1/8 & 2,801 & x \\
 & Sequential trial & $0.05$ [$-0.10$, $0.19$] & 1/8 & 2,803 & x \\
R08 & Matched & $-0.57$ [$-0.78$, $-0.35$] & 8/8 & 2,658 & \textbf{R} \\
 & Interrupted trend & $-0.53$ [$-0.77$, $-0.30$] & 8/8 & 2,664 & \textbf{R} \\
P01 & Matched & $1.96$ [$1.77$, $2.14$] & 8/8 & 2,637 & \textbf{R} \\
P08 & Matched & $-0.40$ [$-0.54$, $-0.27$] & 8/8 & 2,253 & \textbf{X} \\
P10 & Interrupted trend & $-1.03$ [$-1.49$, $-0.56$] & 8/8 & 1,387 & \textbf{X} \\
\bottomrule
\end{tabular}
\end{table}

\paragraph{Gas exchange.}
The three design-based estimators admit all four ventilation$\to$EtCO$_2$ relations (R01, R02, R13, R14) with the physical sign; this was not a pre-specified prediction. All 11 wrong-signed gas-exchange admissions come from the three cross-correlation rules, Granger and VARX (3, 2, 2, 3 and 1 for positive-peak, signed-peak, direction-aware, Granger and VARX); they correspond to 6 distinct estimates. The cross-correlation rules and Granger give the set-rate relations a positive sign, the sign of the policy P05 (EtCO$_2$ rising$\to$set-rate increase). FiO$_2{\to}$SpO$_2$ (R15) has the physical sign in all five event designs (the two naive contrasts and the three design-based estimators), which use only events with pre-event SpO$_2<97\%$ (510 contributing cases for the naive level contrast), and none of them admits it; among the unrestricted lag-based estimators, VAR-LiNGAM, VARX and the positive-peak rule admit it with the expected sign and Granger with the wrong sign.

\paragraph{Drug and cardiopulmonary responses.}
Granger and VAR-LiNGAM return one estimate for remifentanil increases and decreases (R05 and R07 share $-0.0061$ [$-0.0079$, $-0.0042$] for Granger), so their R07 admissions with the expected sign are the R05 estimate. The vasoactive relations R10-R12 (30-103 cases) are admitted by no estimator. PEEP$\to$MAP (R16; 946 increases in 679 cases) has the expected sign for 10 of 12 estimators and is admitted by none.

\paragraph{Policies on tidal-volume and derived volatile steps.}
P08 (EtCO$_2$ rising$\to$set tidal volume increase) is admitted with the opposite sign by 6 estimators: set tidal-volume increases in these data are preceded by falling EtCO$_2$ relative to matched control times (matched design $-0.40$\,mmHg over 2\,min [$-0.54$, $-0.27$]). P10 (BIS rising$\to$MAC step increase) is admitted with the opposite sign by 8 estimators. Derived MAC steps are detected from the measured end-tidal concentration, which follows the vaporizer setting with a wash-in delay; Table~\ref{tab:bench-mac} gives P10 for events after the first 30\,min and without a preceding MAC increase: the matched and interrupted-trend estimates keep the opposite sign, whereas the naive event contrast changes sign.

\subsection{Pre-specified predictions}
\label{app:benchmark-predictions}

Six predictions were fixed before the confirmatory run (App.~\ref{app:prereg}); each is reported as tested.
\begin{itemize}[leftmargin=*,itemsep=1pt,topsep=2pt]
\item \textbf{Q1 (held).} Every estimator averages at most 1 false admission per held-out transplant draw. Observed: 0 for all 12.
\item \textbf{Q2 (held).} The naive event contrast, the positive-peak rule and Granger each average at least 4 uncalibrated null discoveries per transplant draw (over relations). Observed: 18.25, 17.62 and 19.75.
\item \textbf{Q3 (held).} For at least 7 of 12 estimators, the share of the 5 gas-exchange relations admitted with the expected sign exceeds the share of the 11 drug relations. Observed: 8 of 12 (not for signed-peak, direction-aware, Granger and PCMCI+).
\item \textbf{Q4 (failed).} Summed over estimators, at least 80\% of admitted gas-exchange cells have the expected sign, with at least 5 such cells. Observed: 35 concordant and 11 wrong-signed (76\%); counted over distinct estimates (not pre-specified), 32 concordant and 6 wrong-signed (84\%).
\item \textbf{Q5 (failed).} FiO$_2{\to}$SpO$_2$ is admitted with the expected sign by at least one of the naive level contrast, matched, interrupted-trend and sequential-trial designs, and with the wrong sign by none. Observed: none admits it; all four estimates have the expected sign (naive level $1.04$ [$0.51$, $1.56$], stability 5/8; interrupted trend $0.45$ [$0.01$, $0.89$] and sequential trial $0.54$ [$0.03$, $1.04$], adjusted $p$ 0.26 and 0.20, stability 3/8 each; matched $0.20$ [$-0.24$, $0.63$]).
\item \textbf{Q6 (held).} At least one estimator admits a drug relation with the wrong sign. Observed: 13 wrong-signed drug admissions, from the naive event contrast (R04, R06, R07, R08), the naive level contrast (R07, R08), VARX (R07) and the positive-peak rule (R03-R08).
\end{itemize}

\subsection{Sensitivity analyses}
\label{app:benchmark-sensitivity}

\paragraph{Distinct estimates.}
Four estimators return one estimate for several relations (App.~\ref{app:benchmark-estimators}): Granger and VAR-LiNGAM for the increase and decrease relations of a pair (R01/R02, R03/R04, R05/R07, R06/R08, R13/R14), and the positive- and signed-peak rules for the policy and response relations of a pair (10 pairs). Tables~\ref{tab:benchmark} and~\ref{tab:bench-leaderboard} therefore report uncalibrated null discoveries over distinct estimates.

\paragraph{Case phase and volatile wash-in.}
To test whether induction drives the hypnotic and volatile relations, we re-estimated them on the real arm with the event set restricted by phase (not pre-specified); other events stay in the series, so eligibility and control pools are unchanged, and the estimates are not calibrated (Tables~\ref{tab:bench-phase} and~\ref{tab:bench-mac}). The first 30\,min are counted from a case's first change of the action. Signs are preserved for propofol$\to$MAP and propofol$\to$BIS in the matched and interrupted-trend designs; magnitudes are larger in the first 30\,min. In all three designs, P09 has the expected sign in both phases and P04 after the first 30\,min; every P04 interval covers 0 in the first 30\,min. For the volatile relations, the matched and interrupted-trend estimates are stable across phases and after excluding steps preceded by another MAC increase within 300\,s.

\begin{table}[h]
\caption{Real-data estimates by case phase at 2\,s (not calibrated). NE: naive event contrast; MA: matched; KI: interrupted trend. Units as in Table~\ref{tab:bench-estimates}. Cases: all / first 30\,min / after 30\,min.}
\label{tab:bench-phase}
\centering
\footnotesize
\setlength{\tabcolsep}{2.5pt}
\begin{tabular}{@{}llcccc@{}}
\toprule
 & Est. & All events & First 30\,min & After 30\,min & Cases \\
\midrule
R03 & NE & $-0.67$ [$-1.08$, $-0.37$] & $-3.17$ [$-4.82$, $-1.67$] & $-0.17$ [$-0.47$, $0.17$] & 1,552/763/1,411 \\
 & MA & $-0.62$ [$-0.89$, $-0.36$] & $-1.51$ [$-2.54$, $-0.45$] & $-0.46$ [$-0.75$, $-0.16$] & 1,513/628/1,378 \\
 & KI & $-0.89$ [$-1.24$, $-0.56$] & $-2.63$ [$-3.75$, $-1.46$] & $-0.57$ [$-0.85$, $-0.24$] & 1,519/650/1,379 \\
R19 & NE & $-5.30$ [$-5.66$, $-4.91$] & $-13.53$ [$-14.54$, $-12.55$] & $-2.89$ [$-3.15$, $-2.63$] & 1,521/939/1,349 \\
 & MA & $-2.68$ [$-2.95$, $-2.44$] & $-4.84$ [$-5.44$, $-4.19$] & $-2.14$ [$-2.36$, $-1.89$] & 1,459/768/1,299 \\
 & KI & $-3.04$ [$-3.29$, $-2.78$] & $-5.89$ [$-6.79$, $-5.28$] & $-2.29$ [$-2.53$, $-2.06$] & 1,460/790/1,286 \\
P04 & NE & $1.70$ [$1.32$, $2.04$] & $0.09$ [$-1.02$, $1.25$] & $2.06$ [$1.80$, $2.33$] & 1,701/1,093/1,604 \\
 & MA & $0.44$ [$0.16$, $0.78$] & $-0.29$ [$-1.63$, $1.01$] & $0.61$ [$0.39$, $0.81$] & 1,693/1,006/1,589 \\
 & KI & $1.03$ [$0.32$, $1.77$] & $0.94$ [$-1.45$, $3.36$] & $1.05$ [$0.50$, $1.59$] & 1,701/1,115/1,588 \\
P09 & NE & $3.94$ [$3.71$, $4.18$] & $1.64$ [$1.03$, $2.36$] & $4.53$ [$4.29$, $4.80$] & 1,506/862/1,346 \\
 & MA & $1.65$ [$1.51$, $1.84$] & $1.06$ [$0.68$, $1.45$] & $1.80$ [$1.62$, $1.98$] & 1,475/781/1,320 \\
 & KI & $3.97$ [$3.45$, $4.46$] & $3.09$ [$1.94$, $4.34$] & $4.19$ [$3.59$, $4.73$] & 1,478/807/1,309 \\
\bottomrule
\end{tabular}
\end{table}

\begin{table}[h]
\caption{Real-data estimates for derived volatile steps at 2\,s (not calibrated; abbreviations as in Table~\ref{tab:bench-phase}). Late: after the first 30\,min; isolated: without a MAC increase in the preceding 300\,s. Cases: all / late / late, isolated.}
\label{tab:bench-mac}
\centering
\footnotesize
\setlength{\tabcolsep}{2.5pt}
\begin{tabular}{@{}llcccc@{}}
\toprule
 & Est. & All events & Late & Late, isolated & Cases \\
\midrule
R17 & NE & $-5.79$ [$-6.50$, $-5.02$] & $-7.50$ [$-8.26$, $-6.75$] & $-4.05$ [$-4.54$, $-3.51$] & 1,756/1,259/1,202 \\
 & MA & $-1.31$ [$-1.66$, $-1.06$] & $-1.38$ [$-1.71$, $-1.01$] & $-1.40$ [$-1.73$, $-1.05$] & 1,456/1,198/1,179 \\
 & KI & $-1.58$ [$-1.89$, $-1.30$] & $-1.72$ [$-2.07$, $-1.38$] & $-1.74$ [$-2.12$, $-1.37$] & 1,455/1,199/1,180 \\
R18 & NE & $-9.21$ [$-9.70$, $-8.81$] & $-3.12$ [$-3.62$, $-2.72$] & $-1.91$ [$-2.20$, $-1.66$] & 1,666/1,188/1,131 \\
 & MA & $-1.74$ [$-1.89$, $-1.59$] & $-1.73$ [$-1.94$, $-1.56$] & $-1.73$ [$-1.93$, $-1.54$] & 1,378/1,124/1,106 \\
 & KI & $-1.91$ [$-2.11$, $-1.77$] & $-1.88$ [$-2.10$, $-1.69$] & $-1.91$ [$-2.14$, $-1.66$] & 1,370/1,116/1,099 \\
P10 & NE & $-8.53$ [$-9.09$, $-7.92$] & $0.15$ [$-0.14$, $0.40$] & $0.44$ [$0.24$, $0.62$] & 1,659/1,197/1,141 \\
 & MA & $-0.28$ [$-0.40$, $-0.16$] & $-0.28$ [$-0.45$, $-0.15$] & $-0.21$ [$-0.35$, $-0.05$] & 1,393/1,139/1,120 \\
 & KI & $-1.02$ [$-1.39$, $-0.67$] & $-0.83$ [$-1.22$, $-0.41$] & $-0.81$ [$-1.24$, $-0.42$] & 1,387/1,131/1,113 \\
\bottomrule
\end{tabular}
\end{table}

\paragraph{Event definition, control pool, co-interventions and bias correction.}
These pre-specified variants of the design-based estimators were computed at the secondary 30\,s resolution (App.~\ref{app:benchmark-30s}) on its 12 response relations, with the same admission rule (Table~\ref{tab:bench-sens30}). \emph{Per-protocol} keeps an event only if the action does not change again before $\tau+h$ plus the post-window half-width; \emph{future-inaction pool} uses per-protocol events and anchors with no change of the action up to $g+135$\,s, without bias correction; \emph{co-intervention exclusion} drops events and anchors with any change of another action in $[t-60\,\mathrm{s},t)$; \emph{co-intervention adjustment} adds the counts of those changes to the matching features; \emph{no co-intervention features} removes these counts from the sequential-trial state; \emph{pre-trend window 60\,s} fits the interrupted-trend line on 60\,s instead of 180\,s. No variant had a false admission under the held-out transplant draws or the random-time null. The set-rate relations R01 and R02 were admitted with the physical sign by every variant; the table lists the admitted drug relations.

\begin{table}[h]
\caption{Design-based variants at 30\,s: calibrated admissions among the 12 response relations, expected/wrong sign (admitted drug relations; wrong-signed in bold). Primary: intention-to-treat events, pre-decision control pool, bias correction, 180\,s pre-trend.}
\label{tab:bench-sens30}
\centering
\footnotesize
\setlength{\tabcolsep}{4pt}
\begin{tabular}{@{}lccc@{}}
\toprule
Variant & Matched & Interrupted trend & Sequential trial \\
\midrule
Primary & 3/0 (R08) & 2/1 (\textbf{R06}) & 2/2 (\textbf{R05}, \textbf{R06}) \\
Per-protocol events & 3/0 (R08) & 2/0 & n/a \\
Future-inaction pool & 2/0 & 2/0 & n/a \\
No bias correction & 5/0 (R03, R06, R08) & 3/0 (R08) & 3/1 (R03, \textbf{R05}) \\
Co-intervention exclusion & 3/0 (R03) & 2/1 (\textbf{R06}) & n/a \\
Co-intervention adjustment & 3/0 (R03) & 3/1 (R07, \textbf{R06}) & n/a \\
No co-intervention features & n/a & n/a & 2/2 (\textbf{R05}, \textbf{R06}) \\
Pre-trend window 60\,s & n/a & 3/0 (R03) & n/a \\
\bottomrule
\end{tabular}
\end{table}

\subsection{Secondary resolution: \texorpdfstring{30\,s}{30 s} windows}
\label{app:benchmark-30s}

\paragraph{Setting.}
The secondary resolution uses 30\,s windows at a 15\,s stride of MAP, HR and EtCO$_2$ in all 3,442 cases and scores the 18 relations defined on these vitals and on the drug and ventilator-rate actions (R01-R12, P01-P06). The estimators and the admission rule are those of the primary resolution, with these settings: event designs use a 180\,s pre-window, a post-window $[t+h-15\,\mathrm{s},t+h+15\,\mathrm{s}]$, anchors every 15\,s with a 7.5\,s decision interval and a 180\,s pre-trend for the interrupted-trend design; Granger, VARX and VAR-LiNGAM use 4 lags of 30\,s (VARX lag bins from $(0,30]$\,s; VAR-LiNGAM with 60 residuals per case, at most 50,000); PCMCI+ runs on a seeded subsample of 1,200 cases (1,144-1,178 completed per arm) with lags up to 240\,s; the sequential-trial bank keeps at most 40 anchors per case and action, without the 10\,s level.

\begin{table}[h]
\caption{ClosedLoopBench at 30\,s (3,442 cases, 18 relations). Columns as in Table~\ref{tab:benchmark}; Held-out: mean false admissions per held-out transplant draw.}
\label{tab:bench-30s}
\centering
\footnotesize
\setlength{\tabcolsep}{4pt}
\begin{tabular}{@{}lcccccc@{}}
\toprule
 & Null & \multicolumn{3}{c}{Calibrated, expected/wrong} & \multicolumn{2}{c}{False admissions} \\
\cmidrule(lr){3-5}\cmidrule(lr){6-7}
Estimator & (uncal.) & Gas (2) & Drug (10) & Policy (6) & Held-out & RT \\
\midrule
Naive event contrast & 10.12/18 & 2/0 & 1/5 & 6/0 & 0 & 6 \\
Naive level contrast & 5.25/12 & 2/0 & 4/2 & n/a & 0 & 1 \\
Positive-peak cross-corr. & 10.00/12 & 0/2 & 0/5 & 4/0 & 0 & 9 \\
Signed-peak cross-corr. & 7.75/12 & 0/2 & 4/0 & 1/2 & 0 & 2 \\
Direction-aware cross-corr. & 12.50/18 & 0/2 & 5/0 & 4/1 & 0 & 5 \\
Pairwise Granger & 6.50/14 & 2/0 & 6/0 & 5/1 & 0.125 & 6 \\
PCMCI+ & 8.62/18 & 2/0 & 6/0 & 5/0 & 0 & 0 \\
VAR-LiNGAM & 5.00/14 & 2/0 & 4/0 & 6/0 & 0.25 & 4 \\
VARX & 4.25/18 & 2/0 & 3/2 & 2/1 & 0 & 3 \\
Matched event design & 1.50/18 & 2/0 & 1/0 & 3/0 & 0 & 0 \\
Interrupted trend (kink) & 2.12/18 & 2/0 & 0/1 & 3/0 & 0 & 0 \\
Sequential-trial contrast & 1.25/18 & 2/0 & 0/2 & 4/0 & 0 & 0 \\
\bottomrule
\end{tabular}
\end{table}

\paragraph{Results.}
Counted over relations, the uncalibrated null discoveries per transplant draw were 15.62 of 18 for the positive-peak rule, 12.50 for the direction-aware rule, 11.50 for the signed-peak rule, 10.12 for the naive event contrast, 9.50 for Granger, 8.62 for PCMCI+, 7.38 for VAR-LiNGAM, 5.25 of 12 for the naive level contrast, 4.25 for VARX and 1.25-2.12 for the design-based estimators (Table~\ref{tab:bench-30s} gives distinct estimates). Held-out false admissions were at most 0.25 per draw (VAR-LiNGAM; Granger 0.125); with the random-time streams as input the rule admitted up to 9 of 18 relations (positive-peak rule). VARX admitted both remifentanil-decrease relations with the wrong sign, with the same call in 8 of 8 draws: $+0.34$\,bpm per ng\,mL$^{-1}$ [$0.24$, $0.43$] for HR and $+0.94$\,mmHg per ng\,mL$^{-1}$ [$0.63$, $1.25$] for MAP. MAP rising$\to$remifentanil increase and HR rising$\to$remifentanil increase were admitted with the expected sign by all 9 direction-resolving estimators (naive event contrast, direction-aware rule, Granger, VARX, PCMCI+, VAR-LiNGAM and the three design-based estimators), and the signed-peak rule admitted the MAP policy with the wrong sign; the vasopressor and propofol policies were admitted by at most 6 of the 11 estimators that estimate policies. The positive-peak rule's 4 concordant admissions (P01, P02, P04, P05) are the same estimates as 4 of its wrong-signed response admissions (R06, R05, R04, R01). The set-rate relations were admitted with the physical sign by every estimator except the three cross-correlation rules, which admitted both with the opposite sign; PCMCI+'s set-rate admissions come from the transplant offset (real-data mean partial correlation for R01 $-0.0044$ [$-0.016$, $0.006$], transplant mean $+0.029$). In the clinician-in-the-loop simulator (App.~\ref{app:theory-results}), the sequential-trial contrast gave a CI excluding 0 for an inert drug in 6 of 24 cells at 2\,s and 19 of 24 at 30\,s. The split-half test of patient-specific responses (App.~\ref{app:fast-spec}, Table~\ref{tab:slow-spec}) declared no relation patient-specific at either resolution.

\section{Theory and simulation}
\label{app:theory}

\ifdefined\corollary\else\newtheorem{corollary}{Corollary}\fi
\newenvironment{formalprop}[1]{\par\medskip\noindent\textbf{Proposition~\ref{#1} (formal).}\itshape\ }{\par\medskip}

This appendix states the closed-loop model (App.~\ref{app:theory-model}), proves Propositions~\ref{prop:reversal} and~\ref{prop:pools} (App.~\ref{app:theory-reversal}, App.~\ref{app:theory-pools}), describes the simulator (App.~\ref{app:theory-sim}) and reports the simulations (App.~\ref{app:theory-results}). The physiology is linear, so we state results for one vital and one action; other vitals are treated alike, and responses to further actions add. Proposition~\ref{prop:reversal} restates the drug titration paradox \citep{schnider2021drug,schnider2022drug,schamberg2021drug} and the direct-method bias of closed-loop identification \citep{forssell1999closed} for the contrasts used to build temporal relations. Proposition~\ref{prop:pools} restates bias from conditioning on future treatment \citep{suissa2008immortal,hernan2016specifying}. The closed-loop-specific content is the horizon dependence (Corollaries~\ref{cor:crossover} and~\ref{cor:pathway}) and the policy-determined sign of the control-pool bias (Proposition~\ref{prop:pools}(c)).

\subsection{Closed-loop model}
\label{app:theory-model}

Time $t\in\mathbb{Z}$ has step $\Delta$ (1\,s in the simulator), and $L$ is the lag operator. Symbols defined here are local to this appendix; subscripted $\tau$ symbols (time constants such as $\tau_{\mathrm{pk}}$, $\tau_s$ and $\tau_d$, and the estimand $\tau_{\mathrm{ITT}}$) are unrelated to event times $\tau$.

\begin{assumption}[Linear physiology with a latent driver]\label{as:physiology}
The $m$ vitals $\mathbf{y}_t\in\R^m$ satisfy $A(L)\mathbf{y}_t=\mathbf{b}(L)C_t+\mathbf{g}(L)\zeta_t+\boldsymbol{\varepsilon}_t$. Here $C_t$ is the effect-site concentration of the drug; $\zeta_t$ is an unrecorded driver (surgical stimulation, insufflation); $\boldsymbol{\varepsilon}_t$ are i.i.d.\ Gaussian innovations independent of $\zeta$; $A(L)=I-\sum_{k\ge1}A_kL^k$ with $\det A(\lambda)\neq0$ for complex $|\lambda|\le1$ (a stable physiological loop, e.g., baroreflex and cardiac-output couplings); and $\mathbf{b},\mathbf{g}$ are causal, stable filters. The drug enters through a first-order pharmacokinetic (PK) filter,
\begin{equation}\label{eq:pk}
C_t=\frac{1-\varrho}{1-\varrho L}\,u_t,\qquad \varrho=e^{-\Delta/\tau_{\mathrm{pk}}},
\end{equation}
with effect-site time constant $\tau_{\mathrm{pk}}$. The driver $\zeta$ is stationary, or stationary plus isolated events.
\end{assumption}

Solving the loop, each vital is
\begin{equation}\label{eq:decomp}
y_t=\sum_{i}\delta_i\,\Gamma(t-t_i)+f_t+\nu_t,
\end{equation}
where the sum runs over all changes $\delta_i$ of $u$ at times $t_i$ (a constant baseline, removed by every contrast below, is omitted). $\Gamma(k)$ is the loop-closed step response of the vital to a unit change of $u$ ($\Gamma(k)=0$ for $k<0$), with static gain $g_u=\lim_{k\to\infty}\Gamma(k)$. The driver footprint $f_t$ and the noise $\nu_t$ are the corresponding components of $A(L)^{-1}\mathbf{g}(L)\zeta_t$ and $A(L)^{-1}\boldsymbol{\varepsilon}_t$.

\begin{assumption}[Event-triggered clinician]\label{as:clinician}
The input $u$ is piecewise constant and changes only at checks. Routine checks occur at times independent of $(\zeta,\boldsymbol{\varepsilon})$. An alarm check occurs a reaction delay after a displayed statistic crosses an alarm level. At a check at $t$ outside a refractory period after the previous change, the clinician changes $u$ by $+\delta_u$ if $s_t>\eta$ and by $-\delta_u$ if $s_t<-\eta$ (within dose bounds), where
\begin{equation}\label{eq:policy}
s_t=w_y\,s^{\mathrm{rec}}_t+w_{\zeta}\,\zeta_t+w_{\mathrm{ant}}\,Z^{(H_{\mathrm{ant}})}_t .
\end{equation}
$s^{\mathrm{rec}}_t$ is the recorded decision statistic, a linear functional of the displayed vitals (trailing means over $W_d$ steps); $Z^{(H_{\mathrm{ant}})}_t$ is the sum of driver events in $(t,t+H_{\mathrm{ant}}]$, i.e., the surgical plan up to a horizon $H_{\mathrm{ant}}$. Signal-independent (exogenous) changes also occur at a constant rate. The clinician is \emph{recorded-reactive} if $w_{\zeta}=w_{\mathrm{ant}}=0$, \emph{field-reactive} if $w_{\zeta}\neq0$, and \emph{anticipatory} if $w_{\mathrm{ant}}\neq0$. The policy sign is $\sigma_\pi=\sign w_y$: $\sigma_\pi=+1$ if the clinician raises $u$ when the recorded statistic is high.
\end{assumption}

\begin{assumption}[Sampling]\label{as:sampling}
Patients are i.i.d.\ records of bounded length. For a set of event times $\mathcal{T}$ with $\E|\mathcal{T}_p|>0$, $\E[\,\cdot\mid t\in\mathcal{T}]$ denotes the event-weighted expectation $\E[\sum_{t\in\mathcal{T}_p}(\cdot)]/\E|\mathcal{T}_p|$, the limit of averages over all events of $P$ patients as $P\to\infty$.
\end{assumption}

\paragraph{Contrasts and estimand.}
Let $\mathcal{T}_h$ be the times $t$ at which $u$ increases by $\delta_u$, with no other change of $u$ in the pre-window $[t-w_{\mathrm{pre}},t)$ and with $[t-w_{\mathrm{pre}},t+h]$ inside the record. A contrast $\Psi_h$ is a linear functional of a series $\chi$ on $[t-w_{\mathrm{pre}},t+h]$ whose weights sum to zero. The \emph{naive} contrast is $\Psi^{\mathrm{N}}_h\chi_t=\tfrac12(\chi_{t+h-1}+\chi_{t+h})-w_{\mathrm{pre}}^{-1}\sum_{k=1}^{w_{\mathrm{pre}}}\chi_{t-k}$. The \emph{pre-trend} contrast subtracts the extrapolation of the least-squares line fitted on the pre-window. The estimator is the event average $\hat\theta_r(h)$ of $\Psi_hy_t$ over $t\in\mathcal{T}_h$. Its open-loop estimand is $\theta_r(h)=\delta_u\,\Psi_h\Gamma(\cdot-t)$, the contrast of the response to a sustained step of size $\delta_u$ at $t$ with no other change.

\subsection{Sign reversal}
\label{app:theory-reversal}

\begin{formalprop}{prop:reversal}
Under Assumptions~\ref{as:physiology}-\ref{as:sampling}, $\hat\theta_r(h)\to\theta_r(h)+B_r(h)$ almost surely as $P\to\infty$, with $B_r(h)=b_f(h)+b_\nu(h)+b_-(h)+b_+(h)$ and
\begin{align*}
b_f(h)&=\E[\Psi_hf_t\mid t\in\mathcal{T}_h], &
b_-(h)&=\E\Bigl[\textstyle\sum_{i:\,t_i<t-w_{\mathrm{pre}}}\delta_i\,\Psi_h\Gamma(\cdot-t_i)\Bigm| t\in\mathcal{T}_h\Bigr],\\
b_\nu(h)&=\E[\Psi_h\nu_t\mid t\in\mathcal{T}_h], &
b_+(h)&=\E\Bigl[\textstyle\sum_{i:\,t<t_i\le t+h}\delta_i\,\Psi_h\Gamma(\cdot-t_i)\Bigm| t\in\mathcal{T}_h\Bigr].
\end{align*}
The terms $b_f$ and $b_\nu$ select the driver footprint and noise through the clinician's decisions. The term $b_-$ is the still-evolving response to earlier changes of $u$, and $b_+$ is the response to later changes within the horizon. (i) If $\zeta$ is stationary and decisions are independent of $(\zeta,\boldsymbol{\varepsilon})$, then $b_f=b_\nu=0$. (ii) The limit has the sign opposite to $\theta_r(h)$ if and only if $\sign B_r(h)=-\sign\theta_r(h)$ and $|B_r(h)|>|\theta_r(h)|$.
\end{formalprop}

\begin{proof}
Apply the linear functional $\Psi_h$ to Eq.~\ref{eq:decomp} at $t\in\mathcal{T}_h$. The focal change occurs at $t$, after the pre-window, and contributes $\delta_u\Psi_h\Gamma(\cdot-t)=\theta_r(h)$. By definition of $\mathcal{T}_h$, no change occurs in $[t-w_{\mathrm{pre}},t)$. Changes before $t-w_{\mathrm{pre}}$ and in $(t,t+h]$ contribute the summands of $b_-$ and $b_+$, and changes after $t+h$ do not enter the window. The remaining terms are $\Psi_hf_t$ and $\Psi_h\nu_t$. $\hat\theta_r(h)$ is a ratio of sums of i.i.d.\ per-patient quantities, which are integrable because records and doses are bounded and $\zeta$ and $\boldsymbol{\varepsilon}$ have finite means; by the strong law of large numbers it converges to the ratio of their expectations, i.e., to the event-weighted expectation of the sum above. (i) If event times are independent of $(\zeta,\boldsymbol{\varepsilon})$, then $\E[\Psi_hf_t\mid t\in\mathcal{T}_h]=\E[\Psi_hf_t]=0$, because $f$ is stationary and the weights of $\Psi_h$ sum to zero; the same holds for $\nu$. (ii) The limit is $\theta_r+B_r$, whose sign differs from that of $\theta_r$ exactly when $B_r$ has the opposite sign and larger magnitude.
\end{proof}

\begin{corollary}[Recorded-reactive clinician: crossover horizon]\label{cor:crossover}
Let $y^0=f+\nu$ be stationary Gaussian with variance $\sigma^2$ and autocorrelation $\rho(k)$. Suppose that at a check at $t$ the clinician increases $u$ iff $s_t>\eta$, where $s_t=W_d^{-1}\sum_{k=0}^{W_d-1}y^0_{t-k}$ is the displayed mean ($\sigma_\pi=+1$); that the check time and the eligibility of $t$ are independent of $y^0$; and that no other change of $u$ affects $[t-w_{\mathrm{pre}},t+h]$. Then, for the naive contrast,
\begin{equation}\label{eq:crossover}
B_r(h)=\frac{\Cov(\Psi^{\mathrm{N}}_hy^0_t,\,s_t)}{\Var(s_t)}\;\E\bigl[s_t-\E s_t\bigm| s_t>\eta\bigr],
\qquad
\Cov(\Psi^{\mathrm{N}}_hy^0_t,\,s_t)=\sigma^2\bigl(\bar\rho_{\mathrm{post}}(h)-\bar\rho_{\mathrm{pre}}\bigr),
\end{equation}
with
\begin{equation*}
\bar\rho_{\mathrm{post}}(h)=\frac{1}{2W_d}\sum_{q\in\{h-1,h\}}\sum_{k=0}^{W_d-1}\rho(q+k),\qquad
\bar\rho_{\mathrm{pre}}=\frac{1}{w_{\mathrm{pre}}W_d}\sum_{i=1}^{w_{\mathrm{pre}}}\sum_{k=0}^{W_d-1}\rho(i-k).
\end{equation*}
The conditional expectation is positive, so $\sign B_r(h)=\sign(\bar\rho_{\mathrm{post}}(h)-\bar\rho_{\mathrm{pre}})$. Let $\rho(k)=e^{-|k|/T}$ with finite persistence time $T$, and read the formula for $\bar\rho_{\mathrm{post}}(h)$ for real $h\ge1$. Then $\bar\rho_{\mathrm{post}}(h)$ decreases strictly toward 0 while $\bar\rho_{\mathrm{pre}}>0$ is fixed, so the bias changes sign at most once. (a) If $\bar\rho_{\mathrm{post}}(1)>\bar\rho_{\mathrm{pre}}$, a crossover exists: the bias is positive below the crossover horizon $h^\star(T)$, the root of $\bar\rho_{\mathrm{post}}(h)=\bar\rho_{\mathrm{pre}}$, and negative above it. This holds for every finite $T$ in Table~\ref{tab:hstar}. (b) Otherwise no crossover exists and the bias is negative for every $h>1$; for example, $w_{\mathrm{pre}}=1$, $W_d=10$ and $T=10$ give $\bar\rho_{\mathrm{post}}(1)=0.63<\bar\rho_{\mathrm{pre}}=0.71$. Let
\begin{equation}\label{eq:hstar-limit}
h^\star_\infty=\E|U_1-U_2|-\E U_2+\tfrac12,\qquad U_1\sim\mathcal{U}\{1,\dots,w_{\mathrm{pre}}\},\; U_2\sim\mathcal{U}\{0,\dots,W_d-1\}\ \text{independent}.
\end{equation}
If $h^\star_\infty>1$, case (a) holds for all large $T$, and $h^\star(T)\to h^\star_\infty$ as $T\to\infty$.
\end{corollary}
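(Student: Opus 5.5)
The plan is to specialize the decomposition of Proposition~\ref{prop:reversal} (formal) and then study the sign of one scalar. By hypothesis no other change of $u$ affects $[t-w_{\mathrm{pre}},t+h]$, so $b_-=b_+=0$ and $B_r(h)=\E[\Psi^{\mathrm{N}}_hy^0_t\mid t\in\mathcal{T}_h]$ with $y^0=f+\nu$.

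Because the check time and the eligibility of $t$ are independent of $y^0$, conditioning on $t\in\mathcal{T}_h$ reduces to conditioning on the decision event $\{s_t>\eta\}$. I will then use Gaussian regression. The pair $(\Psi^{\mathrm{N}}_hy^0_t,s_t)$ is jointly Gaussian, being a pair of linear functionals of a stationary Gaussian series. Write $\Psi^{\mathrm{N}}_hy^0_t=\beta(s_t-\E s_t)+R$, where $\beta=\Cov(\Psi^{\mathrm{N}}_hy^0_t,s_t)/\Var(s_t)$ and $R$ is independent of $s_t$. Since the weights of $\Psi^{\mathrm{N}}_h$ sum to zero, $\E\Psi^{\mathrm{N}}_hy^0_t=0$, and hence $\E R=0$. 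This gives the first identity in Eq.~\ref{eq:crossover}. The factor $\E[s_t-\E s_t\mid s_t>\eta]$ is the mean of a Gaussian truncated from below, which is strictly positive for any $\eta$.

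The covariance follows by bilinearity. The post term is $\frac12\sum_{q\in\{h-1,h\}}W_d^{-1}\sum_k\sigma^2\rho(q+k)$, and the pre term is $w_{\mathrm{pre}}^{-1}W_d^{-1}\sum_{i,k}\sigma^2\rho(i-k)$. Together these give $\sigma^2(\bar\rho_{\mathrm{post}}(h)-\bar\rho_{\mathrm{pre}})$, and therefore the sign statement.

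For $\rho(k)=e^{-|k|/T}$, each summand $e^{-(q+k)/T}$ with $q\ge0$ is strictly decreasing in real $h$ and tends to $0$. So $\bar\rho_{\mathrm{post}}$ is strictly decreasing to $0$, while $\bar\rho_{\mathrm{pre}}$ is a fixed positive average of positive terms. Hence the difference has at most one root. In case (a) it is positive at $h=1$ and eventually negative, so by continuity there is a unique root $h^\star(T)$. In case (b) it is already nonpositive at $h=1$ and therefore strictly negative for all $h>1$. The numerical example and the entries of Table~\ref{tab:hstar} are direct evaluations.

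The step I expect to be the main obstacle is the large-$T$ limit. I will set $D_T(h)=T(\bar\rho_{\mathrm{post}}(h)-\bar\rho_{\mathrm{pre}})$ and expand $e^{-x/T}=1-x/T+O(x^2/T^2)$, which is uniform for $x$ in compacts. The post average has mean argument $h-\tfrac12+\E U_2$, and the pre average has mean argument $\E|U_1-U_2|$. So $D_T(h)\to h^\star_\infty-h+\tfrac12-\tfrac12$, that is, $D_T(h)\to h^\star_\infty-h$, uniformly for $h$ in compacts.

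From this limit, if $h^\star_\infty>1$ then $D_T(1)\to h^\star_\infty-1>0$, so case (a) holds for all large $T$. Next, every $D_T$ is strictly decreasing and the limit has slope $-1$. So on $[h^\star_\infty-\epsilon,h^\star_\infty+\epsilon]$, $D_T$ is eventually positive at the left end and negative at the right end. The unique root $h^\star(T)$ therefore lies in that interval, which gives $h^\star(T)\to h^\star_\infty$. Care is needed only to keep the $O(\cdot/T^2)$ remainders uniform over the finitely many lags involved, for $h$ in a compact set.
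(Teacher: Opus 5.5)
Your proposal is correct and follows the paper's proof essentially step for step: the Gaussian conditional-mean (regression) identity averaged over $\{s_t>\eta\}$, bilinearity for the covariance, strict monotonicity of the exponential sums for the single sign change, and the first-order expansion $T(\bar\rho_{\mathrm{post}}(h)-\bar\rho_{\mathrm{pre}})\to h^\star_\infty-h$ with monotone bracketing of the root. Your only additions are making $b_-=b_+=0$ explicit; for the bracketing step, take $\epsilon<h^\star_\infty-1$ so that the interval stays in the region $h\ge1$ where the monotonicity holds.
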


\begin{proof}
For jointly Gaussian $(X,s)$, $\E[X\mid s]=\E X+\Cov(X,s)\Var(s)^{-1}(s-\E s)$. Taking the expectation over $\{s>\eta\}$ with $X=\Psi^{\mathrm{N}}_hy^0_t$ and $\E X=0$ gives the first identity; the events select on $s_t$ only, by the independence assumptions. The covariance follows by bilinearity from $\Cov(y^0_{t+q},y^0_{t-k})=\sigma^2\rho(q+k)$ and $\Cov(y^0_{t-i},y^0_{t-k})=\sigma^2\rho(i-k)$. $\E[s-\E s\mid s>\eta]>0$ for a non-degenerate Gaussian. For $h\ge1$, every lag $q+k$ in $\bar\rho_{\mathrm{post}}(h)$ is nonnegative, so each term $e^{-(q+k)/T}$ decreases strictly in $h$ and tends to 0. Hence $D_T(h)=\bar\rho_{\mathrm{post}}(h)-\bar\rho_{\mathrm{pre}}$ is continuous and decreases strictly to $-\bar\rho_{\mathrm{pre}}<0$: it has exactly one root in $h>1$ if $D_T(1)>0$ (case (a)) and is negative for all $h>1$ otherwise (case (b)). For the limit, $\rho(k)=1-|k|/T+O(T^{-2})$ on bounded $k$ gives $D_T(h)=T^{-1}\bigl(\E|U_1-U_2|-(h-\tfrac12+\E U_2)\bigr)+O(T^{-2})=T^{-1}(h^\star_\infty-h)+O(T^{-2})$, uniformly on bounded $h$ (Eq.~\ref{eq:hstar-limit}). If $h^\star_\infty>1$, then $T\,D_T(1)\to h^\star_\infty-1>0$, so case (a) holds for large $T$; and for $0<\epsilon<h^\star_\infty-1$, $T\,D_T(h^\star_\infty\mp\epsilon)\to\pm\epsilon$, so by monotonicity the root lies within $\epsilon$ of $h^\star_\infty$ for large $T$.
\end{proof}

For a drug that lowers the vital ($\theta_r<0$) and a clinician who raises the dose when the vital is high, the naive contrast, when a crossover exists, is biased against the effect below $h^\star$ and exaggerates it above; without a crossover it exaggerates the effect at every horizon $h>1$. Table~\ref{tab:hstar} evaluates $h^\star(T)$ for the simulator's display ($W_d=10$\,s) and a 120-s pre-window. For persistence times $T$ from 5\,s to $10^4$\,s the crossover lies between 11 and 53\,s; it is set mostly by the pre-window. A Monte Carlo simulation of this selection at $T=30$\,s gives 36\,s (formula: 35\,s). The simulator departs from these assumptions (for example, alarm-triggered checks, a statistic combining HR and MAP, and episodic drivers), and its crossover is later: with an inert drug and the recorded clinician, the bias of the 2-s naive contrast is $+0.96$ [0.67, 1.25]\,bpm at 60\,s and $-0.34$ [$-0.61$, $-0.06$]\,bpm at 120\,s.

\begin{table}[h]
\centering
\caption{Crossover horizon $h^\star(T)$ of the naive contrast (Corollary~\ref{cor:crossover}; $w_{\mathrm{pre}}=120$\,s, $W_d=10$\,s): first horizon with negative bias on the 1-s grid; $T=\infty$: Eq.~\ref{eq:hstar-limit}.}
\label{tab:hstar}
\small
\begin{tabular}{@{}lcccccccccc@{}}
\toprule
$T$ (s) & 5 & 10 & 20 & 30 & 60 & 120 & 300 & 1{,}000 & 10{,}000 & $\infty$ \\
$h^\star$ (s) & 11 & 19 & 29 & 35 & 43 & 48 & 51 & 52 & 53 & 52.2 \\
\bottomrule
\end{tabular}
\end{table}

\begin{corollary}[Field-reactive and anticipatory clinicians]\label{cor:pathway}
Let a driver event of size $A_{\zeta}$ at time $t_{\zeta}$ add $\gamma_{\zeta}A_{\zeta}\varphi(t-t_{\zeta})$ to the vital, with loop-closed gain $\gamma_{\zeta}$ and $\varphi(t)=1-e^{-t/\tau_s}$ for $t\ge0$ and 0 otherwise (a first-order sympathetic pathway with time constant $\tau_s$). Let the clinician respond at $t=t_{\zeta}+\ell$ with $+\delta_u$ (field-reactive, reaction delay $\ell$), and ignore $\nu$ and other changes of $u$. For the contrast $\Psi_h\chi_t=\chi_{t+h}-\chi_t$, $b_f(h)=\gamma_{\zeta}A_{\zeta}[\varphi(\ell+h)-\varphi(\ell)]$. Suppose the drug opposes the driver ($\sign g_u=-\sign\gamma_{\zeta}A_{\zeta}$) and its response is dominated by the PK filter, so that $\delta_u\Gamma(h)\approx\delta_ug_uh/\tau_{\mathrm{pk}}$ for $h\ll\tau_{\mathrm{pk}}$. Then, to first order in $h$, the estimate is wrong-signed at short horizons iff
\begin{equation}\label{eq:pathway}
\frac{\tau_{\mathrm{pk}}}{\tau_s}\,e^{-\ell/\tau_s}>\frac{\delta_u|g_u|}{|\gamma_{\zeta}A_{\zeta}|}.
\end{equation}
For an anticipatory clinician who acts a fixed lead $\ell_{\mathrm{ant}}$ before $t_{\zeta}$, $b_f(h)=\gamma_{\zeta}A_{\zeta}\,\varphi((h-\ell_{\mathrm{ant}})_+)$, with $(\cdot)_+=\max(\cdot,0)$: it is zero for $h\le\ell_{\mathrm{ant}}$ and, for $h>\ell_{\mathrm{ant}}$, equals the field-reactive term with $\ell=0$ at horizon $h-\ell_{\mathrm{ant}}$.
\end{corollary}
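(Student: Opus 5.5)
The plan is to specialize the decomposition of Proposition~\ref{prop:reversal} (formal) to a single driver event. We drop $\nu$ and all changes of $u$ other than the focal one, so $b_\nu=b_-=b_+=0$ and the limit is $\theta_r(h)+b_f(h)$. We also take the contrast to be $\Psi_h\chi_t=\chi_{t+h}-\chi_t$.

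For the field-reactive clinician, the focal change sits at $t=t_\zeta+\ell$. The footprint at that time is $f_t=\gamma_\zeta A_\zeta\varphi(t-t_\zeta)$, so
\[
\Psi_hf_t=\gamma_\zeta A_\zeta\bigl[\varphi(\ell+h)-\varphi(\ell)\bigr].
\]
The event-weighted expectation is deterministic, which gives $b_f(h)$ as stated. Expanding $\varphi(\ell+h)-\varphi(\ell)=e^{-\ell/\tau_s}(1-e^{-h/\tau_s})$ to first order in $h$ gives $b_f(h)\approx\gamma_\zeta A_\zeta e^{-\ell/\tau_s}h/\tau_s$. The PK-dominated response gives $\theta_r(h)=\delta_u\Gamma(h)\approx\delta_u g_u h/\tau_{\mathrm{pk}}$.

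By the sign hypothesis, $b_f$ and $\theta_r$ have opposite signs. Part (ii) of the formal proposition then says the estimate is wrong-signed iff $|b_f(h)|>|\theta_r(h)|$. At first order in $h$ this reads
\[
|\gamma_\zeta A_\zeta|\,e^{-\ell/\tau_s}/\tau_s>\delta_u|g_u|/\tau_{\mathrm{pk}}.
\]
Cancelling the common factor $h$ and rearranging gives Eq.~\ref{eq:pathway}.

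For the anticipatory clinician, the action occurs at $t=t_\zeta-\ell_{\mathrm{ant}}$. Then $f_t=\gamma_\zeta A_\zeta\varphi(-\ell_{\mathrm{ant}})=0$, and
\[
f_{t+h}=\gamma_\zeta A_\zeta\varphi(h-\ell_{\mathrm{ant}}),
\]
which equals $\gamma_\zeta A_\zeta\varphi((h-\ell_{\mathrm{ant}})_+)$ because $\varphi$ vanishes on negative arguments. This is zero for $h\le\ell_{\mathrm{ant}}$. For $h>\ell_{\mathrm{ant}}$ it coincides with the field-reactive formula at $\ell=0$ evaluated at horizon $h-\ell_{\mathrm{ant}}$, since $\varphi(0)=0$.

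The main delicate step is making the ``to first order in $h$'' comparison honest. Both sides vanish at $h=0$, so the iff should be read as a comparison of the right derivatives at $h=0^+$. If that inequality is strict, continuity gives wrong signs on a neighborhood $(0,h_0)$. If it is reversed, the sign is correct near zero. I would state the result as exactly this derivative comparison and note that the equality case is left undetermined.
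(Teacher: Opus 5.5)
Your proposal is correct and follows the paper's proof. You substitute the footprint into $b_f$, expand $\varphi(\ell+h)-\varphi(\ell)$ and $\Gamma(h)$ to first order in $h$, and compare the two slopes via Proposition~\ref{prop:reversal}(ii); the anticipatory case follows because the footprint starts $\ell_{\mathrm{ant}}$ after the decision. Reading the first-order ``iff'' as a strict comparison of right derivatives at $h=0^+$, with the equality case left open, also makes explicit what the paper leaves implicit.
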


\begin{proof}
Substitute the footprint into $b_f$. $\varphi(\ell+h)-\varphi(\ell)=h\,\tau_s^{-1}e^{-\ell/\tau_s}+O(h^2)$, and $\Gamma(h)=g_u(1-e^{-h/\tau_{\mathrm{pk}}})=g_uh/\tau_{\mathrm{pk}}+O(h^2)$. Comparing the two first-order terms gives Eq.~\ref{eq:pathway}. In the anticipatory case the footprint starts $\ell_{\mathrm{ant}}$ after the decision.
\end{proof}

The left side of Eq.~\ref{eq:pathway} is the pathway-speed ratio: a driver that acts faster than the drug's effect site produces reversal at short horizons. With the simulator's loop-closed gains, $\delta_u|g_u|=0.5\times1.79=0.89$\,bpm for a 0.5\,ng\,mL$^{-1}$ remifentanil step, and $\gamma_{\zeta}A_{\zeta}=3.57$\,bpm for a unit stimulation episode, so the right side is 0.25. With $\tau_{\mathrm{pk}}=100$\,s and $\tau_s,\ell\in\{10,20,30\}$\,s, the left side ranges from 0.50 to 3.68. Reversal is therefore predicted at short horizons in all nine combinations.

\paragraph{Lagged regressions.}
If $\hat\theta_r$ is a least-squares coefficient on lags of $u$ (pairwise Granger, VARX, or an autoregressive model with exogenous input, ARX), its limit is $\theta_r$ plus the projection of $f+\nu$ on the lagged inputs given the other regressors. This term vanishes when $u$ is uncorrelated with the disturbance given the regressors. In closed loop, that requires the regressors to contain the clinician's decision statistic at its native resolution and no unrecorded information to drive decisions. This is the direct-method bias of closed-loop identification \citep{forssell1999closed}; App.~\ref{app:theory-results} shows it for a 30-s ARX with an inert drug.

\subsection{Control pools}
\label{app:theory-pools}

For every time $t'$, let $X_{t'}\in\{0,1\}$ indicate a change of $u$ in the event direction $d\in\{-1,+1\}$ ($+1$ for an increase) at $t'$. $E_{t'}\in\{0,1\}$ is an eligibility indicator measurable with respect to information before $t'$ (no change in the preceding refractory period, dose not at its bound in direction $d$), and $M_{t'}$ is a matching statistic measurable with respect to the recorded information before $t'$. $Y_{t'}(1)$ and $Y_{t'}(0)$ are the contrast $\Psi_hy_{t'}$ when the action at $t'$ is taken or withheld and the clinician follows the policy afterwards (usual care), and $Y_{t'}=Y_{t'}(X_{t'})$. The per-decision intention-to-treat (ITT) estimand is $\tau_{\mathrm{ITT}}(h)=\E[Y_t(1)-Y_t(0)\mid X_t=1,E_t=1]$. A matched estimator compares each event with controls of the same $M$ from a pool $\mathcal{Q}$; its limit is $\theta^{\mathcal{Q}}=\E[Y_t-\mu_{\mathcal{Q}}(M_t)\mid X_t=1,E_t=1]$ (event-weighted, Assumption~\ref{as:sampling}) with $\mu_{\mathcal{Q}}(m)=\E[Y_{t'}\mid t'\in\mathcal{Q},M_{t'}=m]$. The \emph{eligible} pool is $\mathcal{Q}_E=\{t':X_{t'}=0,E_{t'}=1\}$. The \emph{future-inaction} pool is $\mathcal{Q}_F=\mathcal{Q}_E\cap\{F_{t'}=0\}$, where $F_{t'}=\mathbf{1}[u\text{ changes in }(t',t'+h]]$. Proposition~\ref{prop:pools} and its proof hold unchanged if $(t',t'+h]$ is replaced by a fixed window $(t',t'+\bar h]$ with $\bar h\ge h$ in $F_{t'}$, (c1) and (c3). Write $\pi_F(m)=\Pr(F_{t'}=1\mid X_{t'}=0,E_{t'}=1,M_{t'}=m)$, $\mu_1(m)=\E[Y_{t'}\mid X_{t'}=0,E_{t'}=1,F_{t'}=1,M_{t'}=m]$, and $\mu_0(m)$ for the same expectation with $F_{t'}=0$.

\begin{formalprop}{prop:pools}
Assume (i) sequential exchangeability, $Y_{t'}(0)\perp X_{t'}\mid M_{t'},E_{t'}=1$, and (ii) overlap, $0<\Pr(X_{t'}=1\mid M_{t'},E_{t'}=1)<1$ on the support of $M$ among events. Then:
(a) $\theta^{\mathcal{Q}_E}=\tau_{\mathrm{ITT}}(h)$;
(b) if $\pi_F<1$ on the support of $M$ among events, $\theta^{\mathcal{Q}_F}=\tau_{\mathrm{ITT}}(h)+b_{\mathrm{FI}}$ with $b_{\mathrm{FI}}=\E\bigl[\pi_F(M_t)\{\mu_1(M_t)-\mu_0(M_t)\}\mid X_t=1,E_t=1\bigr]$;
(c) suppose further that, within the matched strata, (c1) later changes of $u$ within $(t',t'+h]$ can only be in direction $d$; (c2) at a check time $t_c$ the clinician makes such a change if and only if $d\,\sigma_\pi\,s^{\mathrm{rec}}_{t_c}>\eta$; and (c3) conditional on the stratum and on the check times, $(Y_{t'},s^{\mathrm{rec}}_{t'+1},\dots,s^{\mathrm{rec}}_{t'+h})$ is associated, and $\E[Y_{t'}\mid\text{stratum, check times}]$ does not depend on the check times. Then $d\cdot b_{\mathrm{FI}}$ has the sign of $\sigma_\pi$ (or is zero). For a corrective policy, $\sigma_\pi=-\sign(\text{effect per unit increase})$, so the future-inaction bias opposes the true effect.
\end{formalprop}

A random vector $V$ is \emph{associated} if $\Cov(g_1(V),g_2(V))\ge0$ for all coordinatewise non-decreasing functions $g_1,g_2$. The condition on $\E[Y_{t'}\mid\cdot]$ in (c3) holds exactly for an inert drug when all checks in the window are routine. Alarm checks depend on the display, and with an active drug later actions triggered at checks also change $Y_{t'}$; the simulations include both and test the sign for inert and active drugs.

\begin{proof}
(a) On $\{X_{t'}=0\}$, $Y_{t'}=Y_{t'}(0)$. By (i), $\mu_{\mathcal{Q}_E}(m)=\E[Y_{t'}(0)\mid X_{t'}=0,E_{t'}=1,M_{t'}=m]=\E[Y_t(0)\mid X_t=1,E_t=1,M_t=m]$, which is defined on the support of events by (ii). Averaging $Y_t(1)-Y_t(0)$ over events gives $\tau_{\mathrm{ITT}}(h)$.
(b) $\mu_{\mathcal{Q}_E}(m)=\pi_F(m)\mu_1(m)+\{1-\pi_F(m)\}\mu_0(m)$ and $\mu_{\mathcal{Q}_F}(m)=\mu_0(m)$, so $\mu_{\mathcal{Q}_E}-\mu_{\mathcal{Q}_F}=\pi_F(\mu_1-\mu_0)$; average over the events' $M$.
(c) By (c1)-(c2), $F_{t'}=\max_{t_c}\mathbf{1}[d\sigma_\pi s^{\mathrm{rec}}_{t_c}>\eta]$ over the check times $t_c\in(t',t'+h]$. This is a non-decreasing function of $(s^{\mathrm{rec}}_{t_c})_{t_c}$ if $d\sigma_\pi=+1$ and a non-increasing function if $d\sigma_\pi=-1$. By association, conditional on the stratum and the check times, $\Cov(Y_{t'},F_{t'})$ has the sign of $d\sigma_\pi$ or is zero. Because $\E[Y_{t'}\mid\text{stratum, check times}]$ does not depend on the check times, the law of total covariance preserves this sign unconditionally on the check times. $\Cov(Y_{t'},F_{t'}\mid\text{stratum})=\pi_F(1-\pi_F)(\mu_1-\mu_0)$, so $\mu_1-\mu_0$ and hence $b_{\mathrm{FI}}$ have the sign of $d\sigma_\pi$, and $d\,b_{\mathrm{FI}}$ that of $\sigma_\pi$ ($d^2=1$). A corrective clinician raises a vital-lowering drug when the vital is high ($\sigma_\pi=+1$, effect per unit increase $<0$), and lowers a vital-raising drug when the vital is high ($\sigma_\pi=-1$, effect $>0$).
\end{proof}

\paragraph{Remarks.}
Excluding controls that are later treated conditions eligibility on information after baseline. This is the error behind immortal-time bias \citep{suissa2008immortal,hernan2016specifying}; a related violation arises with event-dependent exposures in self-controlled case series \citep{farrington2009case}. Selecting treated events without a later change before $t+h$ (``clean'' events) makes the same error on the treated arm; the ITT contrast keeps all events. What is specific to closed-loop data is part (c): the policy fixes the sign, and for a corrective policy the bias reinforces the titration paradox instead of diluting it. When later actions in both directions are possible (two-sided strata), their contributions have opposite signs and can cancel. Alarm-triggered checks violate overlap (ii), because the action is then a deterministic function of the display; events on that path must be excluded or analyzed with an external source of variation.

\paragraph{Sequential-trial contrast.}
Every eligible time is a trial of ``act now'' against ``not now, usual care afterwards'' \citep{hernan2008observational,danaei2013observational,hernan2016using}, and Proposition~\ref{prop:pools}(a) is its matched form. For a recorded-reactive clinician, among eligible times that are not alarm-triggered checks and at which the recorded decision statistic is beyond the threshold in the event direction (\emph{at-risk} times), whether a routine check occurs, and hence whether the clinician acts, is independent of the physiology. The pooled sequential-trial contrast compares the mean outcome of at-risk times with a decision with that of at-risk times without one, summing over all patients. Matching each event only to controls of the same patient is not equivalent: the refractory period after an event makes the following times ineligible, so same-patient controls similar in $M$ tend to precede the event (App.~\ref{app:theory-results}).

\subsection{Simulator}
\label{app:theory-sim}

The simulator implements Assumptions~\ref{as:physiology}-\ref{as:clinician} at a 1-s step for heart rate, MAP and EtCO$_2$ (Table~\ref{tab:sim-params}). Each simulated patient contributes 2\,h after a 10-min burn-in. Patient-specific gains are multiplied by log-normal factors (log-scale s.d.\ 0.2) with their signs kept. The clinician watches 10-s trailing means of the vitals. For remifentanil, the recorded statistic is $s^{\mathrm{rec}}_t=\tfrac12(\mathrm{HR}^{\mathrm{disp}}_t/8+\mathrm{MAP}^{\mathrm{disp}}_t/10)$, where the superscript denotes the displayed deviation from baseline; for the ventilator it is $\mathrm{EtCO}^{\mathrm{disp}}_{2,t}/4$. The clinician types are \emph{recorded} ($w_y=1$, $w_{\zeta}=w_{\mathrm{ant}}=0$), \emph{reversed} ($w_y=-1$, raising the dose when the vitals are low), \emph{recorded+field} ($w_{\zeta}=1$), and \emph{recorded+field+plan} ($w_{\zeta}=1$, $w_{\mathrm{ant}}=1.5$, $H_{\mathrm{ant}}=60$\,s); a policy gain (1 unless stated) multiplies $w_y$, $w_\zeta$ and $w_{\mathrm{ant}}$. An \emph{inert} drug has zero effect on HR and MAP, so the truth is 0 and any systematic deviation from 0 is bias. The simulator provides three observation scales: 1\,s, 2-s monitor numerics (trailing means) and 30-s windows at a 15-s stride (window medians). Two truths are available. The \emph{sustained-step} truth applies each estimator's contrast to a noise-free, driver-free, open-loop step response ($\theta_r(h)$ above). The per-decision \emph{ITT} truth comes from paired simulations that share every random draw and veto a single decision (the first three changes per patient in the event direction). CIs are 95\% patient-cluster bootstrap intervals (1,000 resamples). The simulation hypotheses were fixed before the confirmatory runs (Table~\ref{tab:sim-hyp}; App.~\ref{app:prereg}).

\begin{table}[h]
\centering
\caption{Simulator parameters (population values).}
\label{tab:sim-params}
\small
\begin{tabular}{@{}lp{0.66\linewidth}@{}}
\toprule
Component & Value \\
\midrule
Vital dynamics & first order; $\tau_{\mathrm{HR}}=2$\,s, $\tau_{\mathrm{MAP}}=3$\,s, $\tau_{\mathrm{CO_2}}=60$\,s \\
Baroreflex (MAP$\to$HR) & $-0.4$\,bpm/mmHg, latency 2\,s \\
Cardiac output (HR$\to$MAP) & $+0.3$\,mmHg/bpm, latency 1\,s \\
Remifentanil & $\tau_{\mathrm{pk}}=100$\,s; $-4$\,bpm and $-5$\,mmHg per ng\,mL$^{-1}$; loop-closed HR gain $-1.79$\,bpm per ng\,mL$^{-1}$ \\
Ventilator rate$\to$EtCO$_2$ & $-2.5$\,mmHg per breath/min; insufflation $+8$\,mmHg per unit \\
Stimulation driver & Ornstein-Uhlenbeck process (correlation time $\tau_d=200$\,s, s.d.\ 0.8) plus episodes (4/h, gamma sizes, mean 1) decaying with $\tau_d$ \\
Insufflation driver & 15-40-min periods at 0.75/h; filter time constant $\max(30\,\mathrm{s},0.5\tau_d)$; plus an Ornstein-Uhlenbeck component (correlation time $\tau_d$, s.d.\ 0.15) \\
Sympathetic pathway & first order, $\tau_s=\max(1\,\mathrm{s},0.1\tau_d)$; $+8$\,bpm and $+10$\,mmHg per unit \\
Noise & innovations 1.2\,bpm, 2.0\,mmHg, 0.15\,mmHg; measurement 0.3, 0.5, 0.2 \\
Clinician & routine checks every 120\,s on average; threshold $\eta=1.5$; alarm when $|s^{\mathrm{rec}}|>2$ for either action, check 10\,s later; refractory 120\,s after any change of the same input; remifentanil step 0.5\,ng\,mL$^{-1}$ in [1, 8]; ventilator step 2\,breaths/min in [8, 24]; exogenous changes of random sign 1/h (remifentanil), 0.3/h (ventilator) \\
\bottomrule
\end{tabular}
\end{table}

\paragraph{Estimators.}
All estimators target remifentanil$\uparrow\to$HR or ventilator rate$\uparrow\to$EtCO$_2$.
\begin{itemize}[leftmargin=*,itemsep=1pt,topsep=2pt]
\item \emph{Naive}: $\Psi^{\mathrm{N}}_h$ with a 120-s pre-window.
\item \emph{Pre-trend}: the mean of the last three samples up to $t+h$ minus the extrapolated least-squares line of the 60-s pre-window.
\item \emph{Matched} (30\,s): the change from the mean of the last three pre-event samples to the post mean, at the event minus at the nearest eligible control (matched on pre-level, pre-trend and dose).
\item \emph{Kink} (interrupted trend): the pre-trend deviation minus the mean deviation of the five eligible controls nearest in pre-trend level and slope.
\item \emph{Kink with decision matching}: the same, additionally matched on the displayed means of every vital over the last 10\,s and on the dose.
\item \emph{Exogenous-only}: the naive contrast restricted to exogenous changes.
\item \emph{Direct ARX} (30\,s): a per-patient regression of HR on two lags each of HR, dose and MAP; the estimate is the median over patients of the fitted response at $h$ to a sustained 0.5\,ng\,mL$^{-1}$ step.
\item \emph{Decision-statistic matching} (1\,s; Tables~\ref{tab:sim-pools} and~\ref{tab:sim-policy}): $\Psi^{\mathrm{N}}_h$ with a 60-s pre-window at each routine-path event, minus its mean over the five eligible controls of the same patient nearest in $s^{\mathrm{rec}}$ (sampled every third second; eligibility as in App.~\ref{app:theory-pools}, alarm-triggered checks excluded). The future-inaction pool excludes controls with a change in the next 111\,s at every horizon. Horizons stop at 110\,s, inside the 120-s refractory period, so no event is followed by another change of the same input before $t+h$.
\end{itemize}

\subsection{Simulation results}
\label{app:theory-results}

\paragraph{Horizon dependence of sign reversal.}
Table~\ref{tab:sim-reversal} shows the pattern of Proposition~\ref{prop:reversal} for the recorded clinician (200 patients, active drug). The naive contrasts report an HR increase after remifentanil increases at horizons up to 60\,s, and the 2-s naive contrast exceeds the truth in magnitude at 120\,s. The same contrast restricted to exogenous changes tracks the truth. For ventilator increases, the 2-s naive contrast is wrong-signed only at 10\,s and the 30-s naive contrast only at 30\,s. A correct sign does not imply an unbiased estimate: the pre-trend contrast is never wrong-signed, but it is 10 to 14 times the truth, because it measures the mean reversion of the deviation that triggered the decision.

\begin{table}[h]
\centering
\caption{Estimates [95\% CI] per 0.5\,ng\,mL$^{-1}$ remifentanil increase (HR, bpm) or 2\,breaths/min ventilator-rate increase (EtCO$_2$, mmHg); recorded clinician, active drug, 200 patients. Truth: sustained-step estimand of the estimator above (the exogenous-only contrast shares the 2-s naive truth).}
\label{tab:sim-reversal}
\footnotesize
\setlength{\tabcolsep}{3pt}
\resizebox{\ifdim\width>\linewidth\linewidth\else\width\fi}{!}{%
\begin{tabular}{@{}lcccc@{}}
\toprule
Estimator & $h=10$\,s & 30\,s & 60\,s & 120\,s \\
\midrule
\multicolumn{5}{@{}l}{\emph{Remifentanil$\uparrow\to$HR}} \\
naive, 2\,s & $+1.68$ [1.36, 2.00] & $+1.27$ [0.95, 1.60] & $+0.52$ [0.20, 0.88] & $-1.03$ [$-1.34$, $-0.65$] \\
\quad truth & $-0.11$ & $-0.25$ & $-0.42$ & $-0.63$ \\
exogenous-only, 2\,s & $-0.19$ [$-0.54$, 0.14] & $-0.29$ [$-0.73$, 0.16] & $-0.43$ [$-0.90$, 0.03] & $-0.73$ [$-1.28$, $-0.15$] \\
naive, 30\,s & n/a & $+1.91$ [1.57, 2.25] & $+1.29$ [0.96, 1.65] & $-0.30$ [$-0.60$, 0.04] \\
\quad truth & n/a & $-0.14$ & $-0.34$ & $-0.59$ \\
pre-trend, 2\,s & $-1.09$ [$-1.30$, $-0.88$] & $-2.47$ [$-2.84$, $-2.06$] & $-4.59$ [$-5.27$, $-3.86$] & $-9.04$ [$-10.34$, $-7.72$] \\
\quad truth & $-0.10$ & $-0.24$ & $-0.41$ & $-0.63$ \\
\addlinespace
\multicolumn{5}{@{}l}{\emph{Ventilator rate$\uparrow\to$EtCO$_2$}} \\
naive, 2\,s & $+0.36$ [0.21, 0.50] & $-0.85$ [$-1.03$, $-0.68$] & $-2.00$ [$-2.21$, $-1.80$] & $-3.05$ [$-3.32$, $-2.81$] \\
\quad truth & $-0.77$ & $-1.97$ & $-3.16$ & $-4.32$ \\
naive, 30\,s & n/a & $+0.26$ [0.09, 0.42] & $-1.22$ [$-1.44$, $-1.02$] & $-2.65$ [$-2.91$, $-2.41$] \\
\quad truth & n/a & $-1.10$ & $-2.63$ & $-4.13$ \\
\bottomrule
\end{tabular}}
\end{table}

Across a grid of 60 closed-loop cohorts of field-reactive clinicians ($w_y=w_\zeta=1$, $w_{\mathrm{ant}}=1.5$; policy gain $\{0.5,1,2,4\}$ $\times$ plan horizon $H_{\mathrm{ant}}\in\{0,60,180\}$\,s $\times$ driver time scale $\tau_d\in\{20,60,200,600,2000\}$\,s; 60 patients each), Table~\ref{tab:sim-phase} gives the share of cohorts in which the estimate of remifentanil$\uparrow\to$HR is wrong-signed (CI excludes 0 on the wrong side). For the naive and matched contrasts the share falls with the horizon, in line with the short-horizon reversal of Corollaries~\ref{cor:crossover} and~\ref{cor:pathway}. It is lowest for designs that condition on the pre-event trajectory or on lagged vitals; a share of 0 does not imply an unbiased estimate (pre-trend above, direct ARX below). Ventilator$\to$EtCO$_2$ is wrong-signed only for the 30-s naive contrast at 30\,s (0.18 of cohorts).

\begin{table}[h]
\centering
\caption{Share of the 60 closed-loop simulator cohorts in which the remifentanil$\uparrow\to$HR estimate is wrong-signed.}
\label{tab:sim-phase}
\small
\begin{tabular}{@{}lccc@{}}
\toprule
Estimator (eligible control pool where applicable) & $h=30$\,s & 60\,s & 120\,s \\
\midrule
Naive, 30\,s & 0.87 & 0.43 & 0.05 \\
Matched, 30\,s & 0.67 & 0.23 & 0.02 \\
Naive, 2\,s & 0.43 & 0.15 & 0.00 \\
Kink, 2\,s & 0.05 & 0.07 & 0.03 \\
Kink with decision matching, 2\,s & 0.00 & 0.00 & 0.00 \\
Pre-trend, 2\,s & 0.00 & 0.00 & 0.00 \\
Direct ARX, 30\,s & 0.00 & 0.00 & 0.00 \\
Exogenous-only, 2\,s & 0.02 & 0.02 & 0.02 \\
\bottomrule
\end{tabular}
\end{table}

\paragraph{Direct-method regression with an inert drug.}
With an inert drug (200 patients), the 30-s direct ARX gives estimates at $h=30$, 60 and 120\,s of $-0.08$ [$-0.16$, 0.02], $-0.06$ [$-0.12$, $-0.02$] and $-0.06$ [$-0.10$, $-0.03$]\,bpm for the recorded clinician. For recorded+field they are $+0.10$ [0.02, 0.16], $-0.01$ [$-0.04$, 0.02] and $-0.07$ [$-0.09$, $-0.04$]. For recorded+field+plan they are $+0.25$ [0.16, 0.34], $+0.08$ [0.05, 0.09] and $-0.05$ [$-0.07$, $-0.03$]. At 30\,s, the recorded+field+plan estimate exceeds in magnitude the active drug's sustained-step effect at the 30-s scale ($-0.14$\,bpm; Table~\ref{tab:sim-reversal}). A second replicate for the recorded and recorded+field+plan clinicians gives the same CI decisions except for the latter at 60\,s (recorded: $-0.11$ [$-0.23$, 0.01], $-0.07$ [$-0.12$, $-0.01$], $-0.06$ [$-0.09$, $-0.02$]; recorded+field+plan: $+0.19$ [0.13, 0.26], $+0.025$ [$-0.002$, 0.049], $-0.05$ [$-0.06$, $-0.03$]). In open loop, with exogenous changes only, every CI covers 0 in two replicates ($-0.03$ [$-0.08$, 0.03], $+0.01$ [$-0.02$, 0.03], $+0.01$ [$-0.01$, 0.03]; $+0.03$ [$-0.03$, 0.08], $+0.01$ [$-0.01$, 0.03], $-0.01$ [$-0.03$, 0.02]). For every clinician the decision statistic, a 10-s display of 1-s data, is not among the regressors of the 30-s model, so the condition of the lagged-regression paragraph in App.~\ref{app:theory-reversal} is not met.

\paragraph{Control pools.}
Table~\ref{tab:sim-pools} tests Proposition~\ref{prop:pools}(a,b) with decision-statistic matching (1,000 patients per replicate, recorded clinician, routine path). For the inert drug, the future-inaction pool is biased upwards in all three replicates, by $+0.49$ to $+0.93$\,bpm with every CI above 0. The eligible pool gives $-0.03$ to $-0.25$\,bpm, with CIs excluding 0 at two horizons of one replicate. For the active drug, the future-inaction pool turns the correctly signed effect into a wrong-signed one at 10-60\,s in both replicates ($+0.22$ to $+0.56$, every CI above 0). The eligible pool minus the paired ITT truth is $-0.18$ [$-0.32$, $-0.07$], $-0.02$ [$-0.18$, 0.14], $-0.05$ [$-0.21$, 0.12] and $+0.01$ [$-0.16$, 0.20] in the first replicate, and $+0.02$ [$-0.11$, 0.17], $+0.13$ [$-0.02$, 0.29], $+0.05$ [$-0.11$, 0.23] and $+0.09$ [$-0.09$, 0.29] in the second. In two further exploratory replicates with the active drug, the future-inaction bias relative to the ITT truth at 10\,s is $+0.64$ [0.35, 0.99] and $+0.62$ [0.35, 0.92] on the clinician's own decisions, and $+0.06$ [$-0.11$, 0.22] and $-0.06$ [$-0.23$, 0.10] on exogenous changes, as $b_{\mathrm{FI}}$ predicts: $\pi_F$ is large only near the policy threshold.

\begin{table}[h]
\centering
\caption{Matched contrast of HR (bpm) after a 0.5\,ng\,mL$^{-1}$ remifentanil increase, by control pool; recorded clinician, routine path; one row per independent replicate of 1,000 patients; ITT truth: replicate 1 / replicate 2.}
\label{tab:sim-pools}
\scriptsize
\setlength{\tabcolsep}{3.5pt}
\resizebox{\linewidth}{!}{%
\begin{tabular}{@{}llcccc@{}}
\toprule
Drug & Pool & $h=10$\,s & 30\,s & 60\,s & 110\,s \\
\midrule
Inert (truth 0) & eligible & $-0.05$ [$-0.17$, 0.07] & $-0.03$ [$-0.17$, 0.10] & $-0.11$ [$-0.28$, 0.05] & $-0.04$ [$-0.21$, 0.13] \\
 & & $-0.08$ [$-0.20$, 0.03] & $-0.17$ [$-0.32$, $-0.04$] & $-0.25$ [$-0.42$, $-0.09$] & $-0.18$ [$-0.35$, 0.01] \\
 & & $-0.07$ [$-0.19$, 0.06] & $-0.10$ [$-0.24$, 0.05] & $-0.17$ [$-0.34$, 0.00] & $-0.17$ [$-0.36$, 0.02] \\
 & future-inaction & $+0.58$ [0.43, 0.74] & $+0.89$ [0.72, 1.07] & $+0.93$ [0.72, 1.14] & $+0.82$ [0.59, 1.09] \\
 & & $+0.50$ [0.38, 0.63] & $+0.63$ [0.47, 0.78] & $+0.64$ [0.45, 0.81] & $+0.50$ [0.29, 0.73] \\
 & & $+0.49$ [0.34, 0.63] & $+0.67$ [0.51, 0.84] & $+0.72$ [0.52, 0.93] & $+0.60$ [0.36, 0.82] \\
\addlinespace
Active & ITT truth & $-0.11$ / $-0.11$ & $-0.24$ / $-0.24$ & $-0.37$ / $-0.38$ & $-0.50$ / $-0.52$ \\
 & eligible & $-0.26$ [$-0.39$, $-0.14$] & $-0.27$ [$-0.41$, $-0.11$] & $-0.44$ [$-0.59$, $-0.27$] & $-0.51$ [$-0.69$, $-0.33$] \\
 & & $-0.08$ [$-0.20$, 0.05] & $-0.13$ [$-0.28$, 0.01] & $-0.34$ [$-0.51$, $-0.19$] & $-0.46$ [$-0.66$, $-0.27$] \\
 & future-inaction & $+0.24$ [0.10, 0.37] & $+0.36$ [0.20, 0.52] & $+0.22$ [0.04, 0.42] & $-0.05$ [$-0.27$, 0.16] \\
 & & $+0.43$ [0.30, 0.57] & $+0.56$ [0.39, 0.73] & $+0.42$ [0.23, 0.62] & $+0.07$ [$-0.17$, 0.32] \\
\bottomrule
\end{tabular}}
\end{table}

\paragraph{Sign set by the policy.}
Table~\ref{tab:sim-policy} tests Proposition~\ref{prop:pools}(c) with the inert drug. It reports the paired difference between the future-inaction and eligible pools, multiplied by the event direction $d$. As predicted, the difference is positive for the standard policy of the recorded clinician ($\sigma_\pi=+1$) and negative for the reversed policy ($\sigma_\pi=-1$), in both event directions. The exceptions are three cells whose CIs cover 0: reversed-policy increases at 60 and 110\,s and standard-policy decreases at 110\,s. Among reversed-policy increases, 90\% are exogenous changes, and among the clinician's own decisions the predicted sign holds for both policies in both directions.

\begin{table}[h]
\centering
\caption{Future-inaction minus eligible pool, times event direction (bpm); inert drug, routine path, 1,000 patients per clinician.}
\label{tab:sim-policy}
\scriptsize
\setlength{\tabcolsep}{3.5pt}
\resizebox{\linewidth}{!}{%
\begin{tabular}{@{}llcccc@{}}
\toprule
Policy & Event & $h=10$\,s & 30\,s & 60\,s & 110\,s \\
\midrule
Standard ($\sigma_\pi=+1$) & increase & $+0.63$ [0.53, 0.74] & $+0.93$ [0.80, 1.06] & $+1.04$ [0.88, 1.19] & $+0.86$ [0.70, 1.05] \\
 & decrease & $+0.11$ [0.06, 0.17] & $+0.19$ [0.12, 0.26] & $+0.24$ [0.14, 0.35] & $+0.09$ [$-0.03$, 0.22] \\
Reversed ($\sigma_\pi=-1$) & increase & $-0.07$ [$-0.14$, $-0.01$] & $-0.15$ [$-0.24$, $-0.07$] & $-0.09$ [$-0.22$, 0.01] & $+0.05$ [$-0.08$, 0.18] \\
 & decrease & $-0.44$ [$-0.53$, $-0.36$] & $-0.54$ [$-0.66$, $-0.43$] & $-0.58$ [$-0.72$, $-0.46$] & $-0.43$ [$-0.59$, $-0.26$] \\
\bottomrule
\end{tabular}}
\end{table}

\paragraph{Unrecorded information and the alarm path.}
When the clinician also reacts to the surgical field or plan, assumption (i) of Proposition~\ref{prop:pools} fails, and the eligible pool is biased with the inert drug. The estimates are $+0.30$ to $+0.38$\,bpm (recorded+field) and $+0.35$ to $+0.68$ (recorded+field+plan) across the four horizons, all CIs above 0. The future-inaction pool adds a further $+0.34$ to $+0.83$ (paired differences, all CIs above 0). On the alarm path, where overlap fails, the eligible pool gives $-1.72$ [$-1.97$, $-1.48$] to $-2.48$ [$-2.86$, $-2.10$] with the inert drug, with a median matching distance of 0.13 against 0.006 on the routine path. The future-inaction pool gives $+1.07$ to $+1.95$ there: on this path, excluding later-treated controls adds a positive bias ($+3.37$ to $+4.13$, paired differences) to the negative overlap bias.

\paragraph{Sequential-trial contrast.}
Table~\ref{tab:sim-seqtrial} compares the pooled sequential-trial contrast with the same contrast computed within patients. Same-patient controls matched on $s^{\mathrm{rec}}$ precede their event in 67-79\% of matches (median offset $-7$ to $-10$\,s), which is the mechanism described in App.~\ref{app:theory-pools}. Across both policies and both directions with the inert drug (16 cells), the pooled contrast's CI covers 0 in 16 cells and the within-patient contrast's in 11. With the active drug, matching each event to eligible controls of \emph{other} patients gives estimates whose difference from the paired ITT truth has a CI covering 0 in 16 of 16 cells. The grid implementation of this contrast in ClosedLoopBench approximates the decision statistic with 2-s or 30-s numerics. It did not pass its pre-specified simulator check (relations R01, R02 and R05-R08 of App.~\ref{app:benchmark} at their primary horizons; recorded and reversed clinicians; two replicates of 800 patients; 24 cells per resolution). With remifentanil and ventilator inert, at most 10\% of cells were to have a CI excluding 0; 6 of 24 did at 2\,s and 19 of 24 at 30\,s. With both active, no cell was to have a wrong-signed CI; 2 did at 30\,s.

\begin{table}[h]
\centering
\caption{Sequential-trial contrast of HR (bpm); recorded clinician, 1,000 patients. Inert drug (truth 0): pooled over patients versus within patient. Active drug: other-patient matched estimate minus paired ITT truth.}
\label{tab:sim-seqtrial}
\scriptsize
\setlength{\tabcolsep}{3.5pt}
\resizebox{\linewidth}{!}{%
\begin{tabular}{@{}lllcccc@{}}
\toprule
Drug & Event & Contrast & $h=10$\,s & 30\,s & 60\,s & 110\,s \\
\midrule
Inert & increase & pooled & $+0.01$ [$-0.18$, 0.20] & $-0.06$ [$-0.31$, 0.17] & $-0.19$ [$-0.44$, 0.08] & $-0.12$ [$-0.42$, 0.17] \\
 & & within patient & $-0.20$ [$-0.40$, 0.00] & $-0.28$ [$-0.52$, $-0.04$] & $-0.39$ [$-0.66$, $-0.12$] & $-0.29$ [$-0.56$, 0.01] \\
 & decrease & pooled & $+0.05$ [$-0.23$, 0.32] & $+0.19$ [$-0.18$, 0.56] & $+0.26$ [$-0.21$, 0.70] & $-0.28$ [$-0.77$, 0.21] \\
 & & within patient & $+0.15$ [$-0.15$, 0.45] & $+0.40$ [0.02, 0.81] & $+0.47$ [0.01, 0.89] & $-0.19$ [$-0.64$, 0.27] \\
\addlinespace
Active & increase & other patients $-$ ITT & $+0.03$ [$-0.23$, 0.29] & $+0.17$ [$-0.11$, 0.46] & $+0.23$ [$-0.05$, 0.51] & $+0.24$ [$-0.09$, 0.59] \\
 & decrease & other patients $-$ ITT & $-0.17$ [$-0.43$, 0.07] & $-0.08$ [$-0.40$, 0.26] & $-0.05$ [$-0.43$, 0.35] & $-0.10$ [$-0.52$, 0.34] \\
\bottomrule
\end{tabular}}
\end{table}

\paragraph{Beat-level couplings.}
For a physiological coupling such as the baroreflex (MAP$\to$HR, planted gain $g_{hm}$), the residualized ARX removes slow components before fitting: it subtracts a centered moving average of width $W$ from every series, regresses HR on its own lags and on MAP lags (six at 1\,s, standing in for beat series; four at 2\,s), and reports the median over patients of the static gain. The slow driver and the policy leave a residual bias that hardly depends on $g_{hm}$ (Table~\ref{tab:sim-beta}). The sign is therefore identified only when $|g_{hm}|$ exceeds the bias. Across both scales and five widths ($W$ from 15 to 242\,s), predicting the sign as $\sign(g_{hm}+\hat B)$, with $\hat B$ the bias of the $g_{hm}=-0.4$ cell with the same $\tau_d$, scale and $W$, is correct in 121 of the 134 non-ambiguous cells ($|g_{hm}+\hat B|\ge0.02$; 90.3\%).

\begin{table}[h]
\centering
\caption{Bias (median estimate minus the median patient-level planted gain, bpm/mmHg) of the residualized ARX at 1\,s ($W=61$\,s; recorded+field+plan clinician; 60 patients per cell), by driver and sympathetic time constants $(\tau_d,\tau_s)$. R/X: the estimate's CI excludes 0 with the correct/wrong sign.}
\label{tab:sim-beta}
\small
\begin{tabular}{@{}lccccc@{}}
\toprule
$g_{hm}$ & (20, 2)\,s & (60, 6)\,s & (200, 20)\,s & (600, 60)\,s & (2000, 200)\,s \\
\midrule
$-0.40$ & $+0.270$ R & $+0.258$ R & $+0.095$ R & $+0.018$ R & $+0.016$ R \\
$-0.20$ & $+0.302$ X & $+0.255$ X & $+0.099$ R & $+0.014$ R & $+0.005$ R \\
$-0.10$ & $+0.279$ X & $+0.257$ X & $+0.093$ R & $+0.017$ R & $-0.002$ R \\
$-0.05$ & $+0.281$ X & $+0.255$ X & $+0.087$ X & $+0.011$ R & $-0.005$ R \\
\bottomrule
\end{tabular}
\end{table}

\paragraph{Pre-specified hypotheses.}
Table~\ref{tab:sim-hyp} lists the 14 pre-specified hypotheses of the simulation study; nine hold.

\begin{table}[h]
\centering
\caption{Pre-specified simulation hypotheses (1-s decision-statistic matching unless stated; inert drug unless stated).}
\label{tab:sim-hyp}
\scriptsize
\begin{tabular}{@{}lp{0.58\linewidth}p{0.3\linewidth}@{}}
\toprule
ID & Hypothesis & Outcome \\
\midrule
L1 & Eligible pool, recorded clinician, increases: CI covers 0 at all horizons & holds \\
L2 & Future-inaction pool, same cells: CI $>0$ at 30, 60, 110\,s & holds \\
L3 & Pool difference $\times d$: $>0$ for standard and $<0$ for reversed policy, increases, 30-110\,s & fails at 60, 110\,s (reversed) \\
L3b & Same for decreases & fails at 110\,s (standard) \\
L4 & Active drug: eligible pool minus ITT covers 0 at all horizons; future-inaction bias $>0$ at $\ge2$ horizons & fails at 10\,s ($-0.18$ [$-0.32$, $-0.07$]) \\
L5 & Field or plan information: eligible pool CI $>0$ and pool difference $>0$ at $\ge2$ of 4 horizons & holds (4/4) \\
L6 & Alarm path: eligible pool CI excludes 0 at all horizons & holds \\
E1 & 2-s and 1-s kink with eligible pool, recorded clinician: CI covers 0 at every horizon & fails in 3 of 8 cells (e.g., $+0.32$ [0.15, 0.48] at 10\,s, 2\,s) \\
E2 & Kink: future-inaction minus eligible $>0$ at 10-60\,s for all three clinicians & holds (18/18) \\
E3 & 30-s direct ARX: CI excludes 0 at $\ge1$ horizon for each clinician & holds \\
B1 & Beat/2-s ARX bias range across $g_{hm}$ $\le0.06$ at $W\approx61$\,s in every $(\tau_d,\text{scale})$ cell & fails (0.063 in 1 of 10) \\
B2 & $\sign(g_{hm}+\hat B)$ predicts the sign in $\ge90\%$ of non-ambiguous cells & holds (121/134) \\
D1 & An HR display latency of 12\,s relative to MAP turns the residualized 2-s ARX(4) MAP$\to$HR estimate from negative to positive & holds \\
D2 & The estimate increases monotonically with the latency (Spearman $\ge0.9$) & holds \\
\bottomrule
\end{tabular}
\end{table}

D1 and D2 hold for four lags; with eight lags the estimate stays negative at latencies of 8 and 12\,s.

\section{Typed concept relations}
\label{app:relations}

This appendix specifies the slow concept-level relations of our graphs (Sec.~\ref{sec:method-slow}) and gives the full evidence for the concept-level results of Sec.~\ref{sec:exp-graphs}. Vital-level relations come from the typed pooled estimator of App.~\ref{app:benchmark}. Beat-level couplings are described in App.~\ref{app:fast}.

\subsection{Data, concept states and actions}
\label{app:relations-data}

We use all 3{,}442 VitalDB cases \citep{lee2022vitaldb}. Monitor numerics are summarized in 30\,s windows at a 15\,s stride ($\Delta=15$\,s), which gives 3{,}031{,}582 steps. The analysis uses eight concepts $c$ on four vitals, defined in Table~\ref{tab:concept-vocab}: tachycardia and bradycardia (HR), hypotension and hypertension (MAP), tachypnea and bradypnea (respiratory rate), and hypercapnia and hypocapnia (EtCO$_2$). The state $S_{p,c}(t)$ of concept $c$ in patient $p$ at step $t$ is 1 (on) or 0 (off). We write $S_{p,c}(t)=\ast$ when the defining vital is unobserved.

\textbf{Onsets and risk sets.} Concept $z$ has an onset at step $t$ if $S_{p,z}(t)=1$ and $S_{p,z}(t-1)=S_{p,z}(t-2)=0$. Step $t$ is at risk for an onset of $z$ if $S_{p,z}(t)\neq\ast$ and $S_{p,z}(t-1)=S_{p,z}(t-2)=0$.

\textbf{Actions.} The analysis uses four actions $a$: the propofol and remifentanil target-controlled-infusion targets, the phenylephrine infusion rate, and the set ventilator respiratory rate (vent.\ rate in tables). With the direction $d\in\{\uparrow,\downarrow\}$ of a change, they give eight action-directions $(a,d)$. A change $\tau\in\mathcal{E}_{p,a}$ is assigned to step $t$ if it occurs in $[15, 30)$\,s after the start of window $t$, that is, after the end of window $t-1$. Changes from or to zero (starts and stops) are excluded for the infusion targets and the ventilator rate and included for phenylephrine. Dose covariates are $\log(1+\text{value})$ of the propofol and remifentanil targets and of the phenylephrine and norepinephrine rates, and the set ventilator rate divided by 10. A missing value is coded 0, with a not-recorded indicator for the two targets and the ventilator rate, and doses are never carried forward past the end of a track.

\subsection{Pooled hazard model}
\label{app:relations-model}

Let $Y_p(t)\in\{0,1\}$ be the outcome of a relation family at step $t$ and $\mathcal{R}_p$ its at-risk steps. The outcome is an onset of a target concept $z$ (association and response families) or a change of action $a$ in direction $d$ (policy family). For each outcome, one pooled discrete-time logistic hazard is fitted across patients:
\begin{equation}
\operatorname{logit}\Pr\bigl(Y_p(t)=1 \mid t\in\mathcal{R}_p\bigr)=\alpha_p+\sum_{x\in\mathcal{X}}\beta_x^\top e_{p,x}(t)+\gamma^\top w_p(t).
\label{eq:hazard-full}
\end{equation}
Here $\alpha_p$ is a patient fixed effect, $\mathcal{X}$ is the set of candidate sources of the family (all entered jointly), $e_{p,x}(t)\in\{0,1\}^{m_x}$ is the vector of $m_x$ lag-bin exposure indicators of source $x$ (four for a concept source, two for an action source), $\beta_x$ are their coefficients (for $r=(x\to z)$, the $\beta_r$ of Eq.~\ref{eq:hazard}), and $w_p(t)$ are adjustment covariates with coefficients $\gamma$. The relation $r=(x\to z)$ is summarized by the average conditional log odds ratio over exposed rows,
\begin{equation}
\omega_r=\bar e_x^\top\beta_x ,
\label{eq:omega}
\end{equation}
where $\bar e_x$ is the (sampling-weighted) mean of $e_{p,x}(t)$ over at-risk rows with the source exposed. A concept source is exposed when it is on at the source lag; an action source is exposed when it has a change in the response window. Thus $\exp(\omega_r)$ is the conditional odds ratio of the discrete-time hazard while the source is present. With a single source ($\mathcal{X}=\{x\}$), Eq.~\ref{eq:hazard-full} is Eq.~\ref{eq:hazard}; if $e_{p,x}$ is a single indicator, $\omega_r=\beta_x$. Tables report the estimate $\hat\omega_r$. Table~\ref{tab:rel-families} lists the three families; there and in App.~\ref{app:graphs}, $A$ and $B$ denote the source $x$ and the target $z$.

\begin{table}[!htbp]
\centering
\caption{Relation families of the pooled hazard (Eq.~\ref{eq:hazard-full}). Lags in 15\,s steps.}
\label{tab:rel-families}
\scriptsize
\setlength{\tabcolsep}{3pt}
\begin{tabular}{@{}p{0.16\linewidth}p{0.2\linewidth}p{0.29\linewidth}p{0.29\linewidth}@{}}
\toprule
Family & Outcome; at risk & Exposure $e_{p,x}(t)$ per source & Adjustments $w_p(t)$ \\
\midrule
Association $A\to B$; 48 ordered pairs on different vitals & Onset of $B$; risk set of $B$ & $A$ on at lag 2 (windows do not overlap); $A$ onset in lags $[2,4]$, $[5,12]$, $[13,40]$ (0-1, 1-3, 3-10\,min). Sources: the six concepts on other vitals & Phase; $B$ history; doses at lag 1; a change of each action-direction in the last 10\,min; source unobserved at lag 2 \\
\addlinespace
Policy $A\to(a,d)$; 8 concepts $\times$ 8 action-directions & Change of $a$ in direction $d$; action running at $t-1$\textsuperscript{a} & $A$ on at lag 1; $A$ onset in lags $[1,4]$, $[5,12]$, $[13,40]$. Sources: all eight concepts & Phase; time since the last change of $a$ (never, $\le$1, 1-5, 5-15\,min); doses at lag 1; source unobserved at lag 1 \\
\addlinespace
Response $(a,d)\to B$; 8 action-directions $\times$ 8 concepts & Onset of $B$; risk set of $B$ & Change of $(a,d)$ in lags $[5,12]$ and $[13,40]$ (1-3 and 3-10\,min, at or beyond pharmacological latency). Sources: all eight action-directions & Phase; $B$ history; at lag 41 (pre-treatment): doses, state and unobserved indicator of every concept, $z$-scored value of $B$'s vital and its missing indicator \\
\bottomrule
\end{tabular}
\\[2pt]
\raggedright\scriptsize \textsuperscript{a}Infusion target or ventilator rate $>0$; phenylephrine$\uparrow$: every step; phenylephrine$\downarrow$: rate $>0$. Phase: time since anesthesia start (first step with a propofol or remifentanil target or ventilator rate $>0$; before, 0-15\,min, 15-60\,min, 1-3\,h), time since recording start (0-10, 10-30, 30-60\,min, 1-2, 2-4\,h), time to recording end ($\le$15, 15-30, 30-60\,min). $B$ history: time since $B$ was last on (never, $\le$2, 2-10, 10-30\,min).
\end{table}

The estimates $\hat\omega_r$ are conditional associations, not causal effects, and they are retrospective: the phase covariates include the time to recording end (Table~\ref{tab:rel-families}, footnote), which is not known at decision time.

\textbf{Estimation.} Eq.~\ref{eq:hazard-full} is fitted per outcome by Newton-Raphson with the patient intercepts profiled out. Patients without both an event and a non-event carry no within-patient information and are dropped from that fit. A ridge penalty of 1 is applied to the shared coefficients. Every event row is kept; non-event rows are subsampled at rate 0.2 and weighted by 5.

\textbf{Uncertainty and test.} Uncertainty comes from a patient-cluster score bootstrap with 500 multinomial patient resamples. Each replicate is a one-step Newton update from the full-sample fit with the Hessian held fixed. $\widehat{\mathrm{se}}(\hat\omega_r)$ is the bootstrap standard deviation of $\hat\omega_r$, and 95\% CIs are bootstrap percentiles. With the Wald statistic $W_r=\hat\omega_r/\widehat{\mathrm{se}}(\hat\omega_r)$, the primary test is two-sided, $p_{\omega}=2\Phi(-|W_r|)$, where $\Phi$ is the standard normal distribution function. A relation is testable only if it has at least 20 exposed events from at least 10 patients; otherwise $p_\omega=1$.

\subsection{Negative controls and admission}
\label{app:relations-admission}

For each null kind there are 19 replicates $\tilde u^{(k)}$, $k=1,\dots,19$:
\begin{itemize}[leftmargin=*,itemsep=1pt,topsep=2pt]
\item \emph{Cross-patient transplant}: a derangement of patients. The partner supplies the source series at the same elapsed step; beyond the partner's end, concept states are unobserved and action changes absent.
\item \emph{Within-patient circular shift}: the source series is circularly shifted by $k s_p$ steps, with $k$ uniform on $\{1,\dots,\lfloor N^{\mathrm{rec}}_p/s_p\rfloor-1\}$ and $s_p=\min\{\max(41,\lceil\max_c \tau^{\mathrm{int}}_{p,c}\rceil),\lfloor N^{\mathrm{rec}}_p/2\rfloor\}$. Here 41 exceeds the longest lag (40 steps), $\tau^{\mathrm{int}}_{p,c}$ is the integrated autocorrelation time of the on-indicator of concept $c$, and $N^{\mathrm{rec}}_p$ is the record length in steps.
\end{itemize}
In the association and policy families the null replaces all concept source series; in the response family it replaces the action-change series. A relation is \emph{admitted} if (i) it passes Benjamini-Hochberg (BH) control at level 0.05 within its family \citep{benjamini1995controlling}, and (ii) its $W_r^2$ exceeds the maximum $W_r^2$ of the same relation over all 38 null replicates. Its type is its family. When a null replicate itself is scored, rule (ii) uses the other 37 replicates.

\subsection{Admitted relations}
\label{app:relations-admitted}

Table~\ref{tab:rel-counts} gives the counts. Tables~\ref{tab:rel-list-ar} and~\ref{tab:rel-list-p} list every admitted relation with its sign status (App.~\ref{app:graphs-status}). At the population level, of the relations admitted by rule (i) in the fit on the first half of each record (H1), 12 of 28 association, 23 of 33 policy and 9 of 21 response relations are admitted by rule (i) with the same sign in the fit on the second half (H2). The shares of candidates admitted in H2, the rates expected by chance, are 0.479, 0.422 and 0.219.

\textbf{MIMIC-IV-WDB.} An exploratory fit of the association family uses the 51 MIMIC-IV-WDB patients with at least 30\,min of windows (App.~\ref{app:data}). Concept states on the 15\,s grid are the thresholded activations of the concept heads that give the MIMIC-IV-WDB graph nodes (App.~\ref{app:graphs-size}); the capnography concepts are unobserved, and there are no action or dose covariates. Of the 36 candidate relations with a non-capnography target, seven pass rule (i) and six are admitted (Table~\ref{tab:rel-list-ar}). Of the 8 relations admitted by rule (i) in H1, 4 are admitted with the same sign in H2 (chance 0.194).

\begin{table}[!htbp]
\centering
\caption{Concept-level relations on VitalDB. Sign status: sign-stable / sign-unresolved / unconfirmed (App.~\ref{app:graphs-status}).}
\label{tab:rel-counts}
\small
\begin{tabular}{@{}lcccccc@{}}
\toprule
Family & Candidates & Rule (i) & Admitted & $\hat\omega_r>0$ & $\hat\omega_r<0$ & Sign status \\
\midrule
Association & 48 & 31 & 28 & 21 & 7 & 12 / 0 / 16 \\
Policy & 64 & 36 & 33 & 22 & 11 & 23 / 0 / 10 \\
Response & 64 & 25 & 20 & 4 & 16 & 0 / 1 / 19 \\
\bottomrule
\end{tabular}
\end{table}

\begin{table}[!htbp]
\centering
\caption{Admitted association and response relations: $\hat\omega_r$ [95\% CI], exposed events / contributing patients, sign status (App.~\ref{app:graphs-status}).}
\label{tab:rel-list-ar}
\scriptsize
\begin{tabular}{@{}lccc@{}}
\toprule
Relation & $\hat\omega_r$ [95\% CI] & Events / patients & Sign status \\
\midrule
\multicolumn{4}{@{}l}{\emph{Association, VitalDB}}\\
bradycardia $\to$ bradypnea & $-$0.13 [$-$0.21, $-$0.04] & 2{,}910 / 1{,}240 & unconfirmed \\
bradycardia $\to$ hypercapnia & $-$0.70 [$-$0.96, $-$0.38] & 168 / 104 & sign-stable \\
bradycardia $\to$ hypertension & $-$0.34 [$-$0.40, $-$0.29] & 4{,}183 / 1{,}335 & sign-stable \\
bradycardia $\to$ hypocapnia & 0.37 [0.30, 0.45] & 2{,}825 / 1{,}059 & sign-stable \\
bradycardia $\to$ hypotension & 0.43 [0.36, 0.50] & 3{,}958 / 1{,}211 & sign-stable \\
hypercapnia $\to$ bradycardia & $-$0.38 [$-$0.63, $-$0.14] & 124 / 65 & unconfirmed \\
hypercapnia $\to$ bradypnea & 0.40 [0.18, 0.60] & 483 / 374 & unconfirmed \\
hypertension $\to$ bradycardia & $-$0.47 [$-$0.55, $-$0.40] & 1{,}782 / 831 & sign-stable \\
hypertension $\to$ bradypnea & 0.27 [0.19, 0.34] & 2{,}777 / 1{,}521 & unconfirmed \\
hypertension $\to$ hypercapnia & 0.64 [0.44, 0.84] & 714 / 496 & sign-stable \\
hypertension $\to$ tachycardia & 0.44 [0.34, 0.55] & 1{,}448 / 870 & unconfirmed \\
hypertension $\to$ tachypnea & 0.55 [0.40, 0.72] & 1{,}204 / 807 & unconfirmed \\
hypocapnia $\to$ bradycardia & 0.21 [0.12, 0.31] & 1{,}344 / 775 & unconfirmed \\
hypocapnia $\to$ bradypnea & 1.29 [1.21, 1.37] & 4{,}224 / 2{,}020 & sign-stable \\
hypocapnia $\to$ hypertension & 0.10 [0.03, 0.17] & 1{,}476 / 1{,}004 & unconfirmed \\
hypocapnia $\to$ hypotension & 0.20 [0.12, 0.29] & 1{,}149 / 668 & sign-stable \\
hypocapnia $\to$ tachycardia & 0.38 [0.25, 0.51] & 886 / 586 & unconfirmed \\
hypocapnia $\to$ tachypnea & 0.92 [0.77, 1.06] & 1{,}608 / 1{,}140 & sign-stable \\
hypotension $\to$ bradycardia & 0.20 [0.12, 0.28] & 2{,}126 / 1{,}082 & sign-stable \\
hypotension $\to$ bradypnea & 0.17 [0.08, 0.25] & 1{,}914 / 1{,}280 & sign-stable \\
hypotension $\to$ hypocapnia & 0.54 [0.47, 0.62] & 1{,}913 / 1{,}087 & sign-stable \\
hypotension $\to$ tachypnea & 0.27 [0.10, 0.48] & 641 / 546 & unconfirmed \\
tachycardia $\to$ bradypnea & 0.23 [0.09, 0.36] & 1{,}219 / 742 & unconfirmed \\
tachycardia $\to$ tachypnea & 0.28 [0.06, 0.49] & 625 / 384 & unconfirmed \\
tachypnea $\to$ bradycardia & $-$0.53 [$-$0.76, $-$0.25] & 157 / 98 & unconfirmed \\
tachypnea $\to$ hypertension & 0.35 [0.18, 0.50] & 529 / 416 & unconfirmed \\
tachypnea $\to$ hypotension & $-$0.33 [$-$0.53, $-$0.13] & 259 / 135 & unconfirmed \\
tachypnea $\to$ tachycardia & 0.67 [0.48, 0.88] & 437 / 321 & unconfirmed \\
\addlinespace
\multicolumn{4}{@{}l}{\emph{Response, VitalDB}}\\
propofol$\downarrow$ $\to$ bradypnea & $-$0.28 [$-$0.35, $-$0.20] & 1{,}505 / 863 & unconfirmed \\
propofol$\downarrow$ $\to$ hypocapnia & $-$0.19 [$-$0.27, $-$0.12] & 1{,}401 / 748 & unconfirmed \\
propofol$\downarrow$ $\to$ tachypnea & $-$0.37 [$-$0.58, $-$0.16] & 180 / 141 & unconfirmed \\
propofol$\uparrow$ $\to$ bradycardia & $-$0.19 [$-$0.27, $-$0.12] & 1{,}954 / 724 & unconfirmed \\
propofol$\uparrow$ $\to$ tachycardia & 0.26 [0.17, 0.35] & 1{,}130 / 508 & unconfirmed \\
remifentanil$\downarrow$ $\to$ bradycardia & $-$0.07 [$-$0.12, $-$0.03] & 4{,}784 / 1{,}480 & unconfirmed \\
remifentanil$\downarrow$ $\to$ bradypnea & $-$0.13 [$-$0.20, $-$0.08] & 2{,}501 / 1{,}400 & unconfirmed \\
remifentanil$\downarrow$ $\to$ hypotension & 0.11 [0.06, 0.15] & 5{,}206 / 1{,}790 & unconfirmed \\
remifentanil$\downarrow$ $\to$ tachycardia & $-$0.15 [$-$0.22, $-$0.07] & 1{,}835 / 769 & unconfirmed \\
remifentanil$\downarrow$ $\to$ tachypnea & $-$0.78 [$-$0.95, $-$0.62] & 281 / 236 & unconfirmed \\
remifentanil$\uparrow$ $\to$ bradypnea & $-$0.08 [$-$0.15, $-$0.01] & 2{,}090 / 1{,}125 & unconfirmed \\
remifentanil$\uparrow$ $\to$ hypertension & 0.07 [0.03, 0.11] & 5{,}759 / 1{,}933 & unconfirmed \\
remifentanil$\uparrow$ $\to$ hypocapnia & $-$0.15 [$-$0.21, $-$0.09] & 1{,}923 / 973 & unconfirmed \\
remifentanil$\uparrow$ $\to$ hypotension & $-$0.21 [$-$0.26, $-$0.16] & 3{,}512 / 1{,}401 & sign-unresolved \\
remifentanil$\uparrow$ $\to$ tachypnea & $-$0.32 [$-$0.48, $-$0.15] & 297 / 227 & unconfirmed \\
vent.\ rate$\downarrow$ $\to$ bradypnea & 0.25 [0.17, 0.34] & 1{,}674 / 1{,}005 & unconfirmed \\
vent.\ rate$\downarrow$ $\to$ hypocapnia & $-$0.36 [$-$0.43, $-$0.28] & 1{,}874 / 962 & unconfirmed \\
vent.\ rate$\downarrow$ $\to$ tachycardia & $-$0.13 [$-$0.21, $-$0.04] & 1{,}067 / 567 & unconfirmed \\
vent.\ rate$\uparrow$ $\to$ bradycardia & $-$0.17 [$-$0.23, $-$0.12] & 3{,}005 / 1{,}224 & unconfirmed \\
vent.\ rate$\uparrow$ $\to$ bradypnea & $-$0.72 [$-$0.79, $-$0.63] & 1{,}622 / 948 & unconfirmed \\
\addlinespace
\multicolumn{4}{@{}l}{\emph{Association, MIMIC-IV-WDB (51 patients)}}\\
bradycardia $\to$ hypertension & 0.64 [0.44, 0.85] & 3{,}662 / 43 & sign-stable \\
bradypnea $\to$ tachycardia & 0.34 [0.21, 0.48] & 6{,}565 / 49 & unconfirmed \\
hypotension $\to$ tachycardia & $-$0.25 [$-$0.41, $-$0.10] & 3{,}914 / 48 & sign-stable \\
tachycardia $\to$ hypertension & 0.58 [0.39, 0.75] & 3{,}777 / 45 & sign-stable \\
tachycardia $\to$ hypotension & $-$0.22 [$-$0.35, $-$0.09] & 5{,}436 / 49 & unconfirmed \\
tachypnea $\to$ tachycardia & 0.37 [0.23, 0.52] & 2{,}845 / 46 & sign-stable \\
\bottomrule
\end{tabular}
\end{table}

\begin{table}[!htbp]
\centering
\caption{Admitted policy relations on VitalDB; columns as in Table~\ref{tab:rel-list-ar}.}
\label{tab:rel-list-p}
\scriptsize
\begin{tabular}{@{}lccc@{}}
\toprule
Relation & $\hat\omega_r$ [95\% CI] & Events / patients & Sign status \\
\midrule
bradycardia $\to$ propofol$\uparrow$ & $-$0.23 [$-$0.33, $-$0.14] & 1{,}265 / 565 & unconfirmed \\
bradycardia $\to$ remifentanil$\downarrow$ & 0.23 [0.16, 0.29] & 4{,}321 / 1{,}283 & sign-stable \\
bradycardia $\to$ remifentanil$\uparrow$ & $-$0.44 [$-$0.50, $-$0.38] & 3{,}293 / 1{,}065 & sign-stable \\
bradycardia $\to$ vent.\ rate$\downarrow$ & 0.26 [0.18, 0.35] & 2{,}004 / 1{,}041 & sign-stable \\
bradycardia $\to$ vent.\ rate$\uparrow$ & $-$0.49 [$-$0.57, $-$0.41] & 1{,}885 / 926 & sign-stable \\
bradypnea $\to$ propofol$\downarrow$ & 0.24 [0.15, 0.34] & 1{,}299 / 801 & unconfirmed \\
bradypnea $\to$ remifentanil$\uparrow$ & 0.10 [0.03, 0.18] & 3{,}458 / 1{,}092 & unconfirmed \\
bradypnea $\to$ vent.\ rate$\downarrow$ & $-$0.31 [$-$0.41, $-$0.19] & 813 / 615 & sign-stable \\
bradypnea $\to$ vent.\ rate$\uparrow$ & 0.33 [0.24, 0.41] & 2{,}960 / 1{,}690 & sign-stable \\
hypercapnia $\to$ vent.\ rate$\uparrow$ & 1.73 [1.53, 1.94] & 350 / 219 & sign-stable \\
hypertension $\to$ phenylephrine$\downarrow$ & 1.44 [1.10, 1.79] & 75 / 35 & sign-stable \\
hypertension $\to$ propofol$\downarrow$ & $-$0.16 [$-$0.26, $-$0.06] & 760 / 506 & unconfirmed \\
hypertension $\to$ propofol$\uparrow$ & 0.94 [0.84, 1.03] & 1{,}251 / 627 & sign-stable \\
hypertension $\to$ remifentanil$\downarrow$ & $-$0.57 [$-$0.64, $-$0.50] & 1{,}397 / 876 & sign-stable \\
hypertension $\to$ remifentanil$\uparrow$ & 1.49 [1.43, 1.54] & 5{,}462 / 1{,}575 & sign-stable \\
hypertension $\to$ vent.\ rate$\downarrow$ & $-$0.26 [$-$0.36, $-$0.15] & 534 / 440 & unconfirmed \\
hypertension $\to$ vent.\ rate$\uparrow$ & 0.46 [0.39, 0.53] & 1{,}412 / 930 & sign-stable \\
hypocapnia $\to$ propofol$\downarrow$ & 0.34 [0.24, 0.45] & 664 / 506 & sign-stable \\
hypocapnia $\to$ remifentanil$\downarrow$ & 0.23 [0.13, 0.33] & 1{,}111 / 773 & unconfirmed \\
hypocapnia $\to$ vent.\ rate$\downarrow$ & 2.48 [2.39, 2.58] & 2{,}416 / 1{,}376 & sign-stable \\
hypocapnia $\to$ vent.\ rate$\uparrow$ & $-$0.40 [$-$0.53, $-$0.25] & 363 / 304 & sign-stable \\
hypotension $\to$ phenylephrine$\uparrow$ & 1.47 [1.21, 1.74] & 162 / 67 & sign-stable \\
hypotension $\to$ propofol$\downarrow$ & 0.74 [0.65, 0.82] & 1{,}704 / 911 & sign-stable \\
hypotension $\to$ propofol$\uparrow$ & $-$0.42 [$-$0.53, $-$0.32] & 612 / 435 & sign-stable \\
hypotension $\to$ remifentanil$\downarrow$ & 1.16 [1.10, 1.23] & 3{,}543 / 1{,}557 & sign-stable \\
hypotension $\to$ remifentanil$\uparrow$ & $-$0.95 [$-$1.04, $-$0.86] & 804 / 581 & sign-stable \\
hypotension $\to$ vent.\ rate$\downarrow$ & 0.38 [0.28, 0.46] & 1{,}193 / 816 & sign-stable \\
hypotension $\to$ vent.\ rate$\uparrow$ & $-$0.28 [$-$0.37, $-$0.18] & 744 / 570 & sign-stable \\
tachycardia $\to$ propofol$\downarrow$ & 0.18 [0.06, 0.31] & 518 / 270 & unconfirmed \\
tachycardia $\to$ propofol$\uparrow$ & 0.38 [0.25, 0.50] & 487 / 265 & unconfirmed \\
tachycardia $\to$ remifentanil$\uparrow$ & 0.55 [0.45, 0.64] & 1{,}461 / 554 & sign-stable \\
tachycardia $\to$ vent.\ rate$\uparrow$ & 0.36 [0.23, 0.47] & 790 / 420 & unconfirmed \\
tachypnea $\to$ remifentanil$\downarrow$ & 0.49 [0.27, 0.70] & 211 / 160 & unconfirmed \\
\bottomrule
\end{tabular}
\end{table}

\subsection{Pre-specified sign checks}
\label{app:relations-signs}

Table~\ref{tab:rel-signs} lists the sign checks that were fixed before estimation. A check is \emph{correct} if the 95\% CI of $\omega_r$ excludes 0 in the expected direction, \emph{wrong} if it excludes 0 in the opposite direction, and \emph{inconclusive} otherwise. All five primary policy checks are correct, and so are the two secondary policy checks, which were added after the exploratory pilots (App.~\ref{app:prereg}). None of the six response checks is correct: two are wrong and four are inconclusive. Because the response family fails these checks, none of its relations is marked sign-stable (App.~\ref{app:graphs-status}; Table~\ref{tab:rel-counts}). App.~\ref{app:theory} analyzes a mechanism by which clinician feedback can reverse or mask response signs.

\begin{table}[!htbp]
\centering
\caption{Pre-specified sign checks for concept-level relations on VitalDB. Expected: expected sign of $\omega_r$.}
\label{tab:rel-signs}
\scriptsize
\begin{tabular}{@{}lccccc@{}}
\toprule
Relation & Expected & $\exp(\hat\omega_r)$ & $\hat\omega_r$ [95\% CI] & Call & Admitted \\
\midrule
\multicolumn{6}{@{}l}{\emph{Response checks}}\\
propofol$\uparrow$ $\to$ hypotension & $+$ & 1.03 & 0.025 [$-$0.036, 0.086] & inconclusive & no \\
remifentanil$\uparrow$ $\to$ bradycardia & $+$ & 1.02 & 0.020 [$-$0.032, 0.071] & inconclusive & no \\
remifentanil$\uparrow$ $\to$ hypotension & $+$ & 0.81 & $-$0.208 [$-$0.259, $-$0.158] & wrong & yes \\
vent.\ rate$\uparrow$ $\to$ hypocapnia & $+$ & 0.90 & $-$0.103 [$-$0.169, $-$0.030] & wrong & no \\
vent.\ rate$\uparrow$ $\to$ hypercapnia & $-$ & 0.89 & $-$0.111 [$-$0.294, 0.090] & inconclusive & no \\
phenylephrine$\uparrow$ $\to$ hypotension & $-$ & 1.00 & $-$0.004 [$-$0.206, 0.204] & inconclusive & no \\
\addlinespace
\multicolumn{6}{@{}l}{\emph{Policy checks}}\\
hypertension $\to$ remifentanil$\uparrow$ & $+$ & 4.42 & 1.487 [1.433, 1.540] & correct & yes \\
tachycardia $\to$ remifentanil$\uparrow$ & $+$ & 1.73 & 0.550 [0.451, 0.637] & correct & yes \\
hypotension $\to$ phenylephrine$\uparrow$ & $+$ & 4.35 & 1.470 [1.214, 1.739] & correct & yes \\
hypotension $\to$ propofol$\downarrow$ & $+$ & 2.10 & 0.741 [0.653, 0.819] & correct & yes \\
hypercapnia $\to$ vent.\ rate$\uparrow$ & $+$ & 5.67 & 1.735 [1.529, 1.939] & correct & yes \\
\addlinespace
\multicolumn{6}{@{}l}{\emph{Secondary policy checks}}\\
hypocapnia $\to$ vent.\ rate$\downarrow$ & $+$ & 11.94 & 2.480 [2.390, 2.580] & correct & yes \\
hypertension $\to$ propofol$\uparrow$ & $+$ & 2.57 & 0.943 [0.838, 1.034] & correct & yes \\
\bottomrule
\end{tabular}
\end{table}

\subsection{Null admissions at the population level}
\label{app:relations-nulls}

Table~\ref{tab:rel-nulls} reports, for each family and null kind, the mean number of null relations admitted per replicate and the share of testable null $p_\omega$ below 0.05. Summed over the three families (176 candidate relations), each replicate admits on average 0.42 null relations under the transplant null and 0.63 under the shift null. At least one null relation is admitted in 7 of 19 transplant and 9 of 19 shift replicates. Testable null $p_\omega$ fall below 0.05 in 5.7-7.9\% of tests per family and null kind, above the nominal 5\%. App.~\ref{app:graphs-fe} turns these admissions into false relation instances per patient graph.

\begin{table}[!htbp]
\centering
\caption{Null admissions of concept-level relations on VitalDB (19 replicates per null kind). Last column: share of testable null $p_\omega<0.05$ (number testable).}
\label{tab:rel-nulls}
\footnotesize
\begin{tabular}{@{}llccc@{}}
\toprule
Family & Null & Admitted per replicate & Replicates with $\ge$1 & $p_\omega<0.05$ \\
\midrule
Association & transplant & 0.21 & 3 & 0.059 (892) \\
Association & shift & 0.11 & 2 & 0.067 (912) \\
Policy & transplant & 0.05 & 1 & 0.060 (1{,}036) \\
Policy & shift & 0.26 & 5 & 0.079 (1{,}136) \\
Response & transplant & 0.16 & 3 & 0.057 (1{,}121) \\
Response & shift & 0.26 & 5 & 0.073 (1{,}160) \\
\bottomrule
\end{tabular}
\end{table}

\subsection{Within-patient replication against presence-matched chance}
\label{app:relations-replication}

Each record is split into its first half H1 and second half H2. For patient $p$, half $H\in\{1,2\}$ and a given method, let $G_{p,H}$ be the set of relations in $p$'s half-$H$ graph. Let $\mathcal{P}_{p,H}$ be the set of candidate pairs whose endpoints are present in half $H$. For CKG and the per-patient baselines, both concepts must have at least one episode. For our graphs, the requirement depends on the family: association needs a source episode and a target onset; policy needs a source episode and a change of $(a,d)$; response needs a change of $(a,d)$ and a target onset. Replication and presence-matched chance are
\begin{equation}
\mathrm{Rep}=\frac{\sum_p |G_{p,1}\cap G_{p,2}|}{\sum_p |G_{p,1}|},\qquad
\mathrm{Rep}_0=\frac{\sum_p |G_{p,1}|\,|G_{p,2}|\,|\mathcal{P}_{p,1}\cap\mathcal{P}_{p,2}|\,/\,(|\mathcal{P}_{p,1}|\,|\mathcal{P}_{p,2}|)}{\sum_p |G_{p,1}|}.
\label{eq:replication}
\end{equation}
Thus $\mathrm{Rep}_0$ is the expected replication if each half's relations were a uniformly random subset, of the same size, of that half's present pairs. Direction must match, and sign is ignored. For our graphs, $G_{p,H}$ contains the relations admitted by rule (i) in the pooled half-$H$ fit and instantiated in $p$ because both endpoints are present in that half. Rule (ii) is not applied, because there are no half-specific null replicates. Replication, chance and their difference have 95\% CIs from a patient bootstrap with 1{,}000 draws.

The per-patient baselines are applied to the same concept series and the same 48 cross-vital pairs. \emph{Pairwise Granger + BH} is an ordinary least-squares $F$-test on the 15\,s grid of the state of $B$ on its own lags 1 and 2 and its sums over lag bins 3-4, 5-12 and 13-40, plus lag 2 and the same lag-bin sums of $A$, with Benjamini-Hochberg control at 0.05 over the patient's 48 pairs. \emph{PCMCI+} \citep{runge2020discovering} uses partial-correlation tests on 1-min block means with lags 1-10 and significance level 0.01; its per-pair $p$-value is the Bonferroni-over-lags minimum, followed by BH at 0.05 over the 48 pairs. It is run on a random subset of 400 cases. Table~\ref{tab:rel-replication} gives the results.

\begin{table}[!htbp]
\centering
\caption{Within-patient H1$\to$H2 replication ($\mathrm{Rep}$) and presence-matched chance ($\mathrm{Rep}_0$). $n_1$: relations in H1 graphs (per patient).}
\label{tab:rel-replication}
\scriptsize
\setlength{\tabcolsep}{3pt}
\begin{tabular}{@{}lcccc@{}}
\toprule
Method & $n_1$ & $\mathrm{Rep}$ [95\% CI] & $\mathrm{Rep}_0$ [95\% CI] & $\mathrm{Rep}-\mathrm{Rep}_0$ [95\% CI] \\
\midrule
\multicolumn{5}{@{}l}{\emph{VitalDB (3{,}442 patients)}}\\
CKG cross-correlation & 26{,}988 (7.84) & 0.185 [0.179, 0.191] & 0.184 [0.180, 0.188] & 0.001 [$-$0.003, 0.006] \\
CKG Granger-precedes & 34{,}499 (10.02) & 0.275 [0.267, 0.282] & 0.262 [0.256, 0.268] & 0.013 [0.009, 0.017] \\
CKG (union) & 47{,}423 (13.78) & 0.352 [0.345, 0.360] & 0.347 [0.340, 0.354] & 0.005 [0.002, 0.008] \\
Pairwise Granger + BH & 12{,}837 (3.73) & 0.177 [0.168, 0.186] & 0.161 [0.155, 0.168] & 0.016 [0.010, 0.021] \\
PCMCI+ (400 cases) & 397 (0.99) & 0.065 [0.040, 0.089] & 0.063 [0.054, 0.072] & 0.002 [$-$0.021, 0.024] \\
Ours, association & 42{,}880 (12.46) & 0.341 [0.334, 0.349] & 0.317 [0.312, 0.323] & 0.024 [0.020, 0.027] \\
Ours, policy & 58{,}416 (16.97) & 0.430 [0.424, 0.437] & 0.344 [0.339, 0.350] & 0.086 [0.084, 0.088] \\
Ours, response & 34{,}209 (9.94) & 0.244 [0.239, 0.249] & 0.160 [0.157, 0.163] & 0.084 [0.081, 0.087] \\
\addlinespace
\multicolumn{5}{@{}l}{\emph{MIMIC-IV-WDB (51 patients)}}\\
CKG cross-correlation & 116 (2.27) & 0.216 [0.092, 0.322] & 0.179 [0.098, 0.246] & n/a \\
CKG (union) & 713 (13.98) & 0.683 [0.619, 0.743] & 0.638 [0.578, 0.695] & n/a \\
Ours, association & 405 (7.94) & 0.491 [0.481, 0.499] & 0.286 [0.278, 0.292] & n/a \\
\bottomrule
\end{tabular}
\end{table}

\subsection{Held-out log-likelihood of concept onsets}
\label{app:relations-ll}

We evaluate the association family on the H2 rows of every patient. All models are fitted on H1 as pooled logistic hazards without patient fixed effects, with the same subsampling and adjustment covariates. The models are:
\begin{itemize}[leftmargin=*,itemsep=1pt,topsep=2pt]
\item \emph{base}: no source terms;
\item \emph{pooled}: all six sources;
\item \emph{typed}: only the sources admitted by rule (i) on H1;
\item \emph{CKG-gated}: the source exposure features multiplied by an indicator that the patient's H1 CKG graph contains the edge;
\item \emph{shrunk-patient}: pooled plus the per-patient deviations below, shrunk by empirical Bayes;
\item \emph{per-patient}: pooled plus the unshrunk deviations $\delta_p$ (App.~\ref{app:relations-deviations}), estimated on H1 by maximum likelihood with a normal penalty of standard deviation 3.
\end{itemize}
The metric is the change in held-out log-likelihood relative to base, in nats per H2 onset (52{,}482 onsets on VitalDB), with 95\% CIs from a patient bootstrap with 2{,}000 draws.

Each source also carries an indicator that it is unobserved at the source lag. This indicator is an adjustment, not an edge exposure, and we report three codings of it. In the primary coding it enters with its source (all sources in pooled and CKG-gated, admitted sources in typed) but not in base. In coding (a) the indicators enter every model, including base. In coding (b) they are dropped from every model. Table~\ref{tab:rel-ll} gives the results. On VitalDB, the shrunk-patient model equals the pooled model in every coding, because the DerSimonian-Laird between-patient variance estimate \citep{dersimonian1986meta} is 0 for every association relation: the penalized per-patient estimates are less dispersed than their standard errors imply (Cochran's heterogeneity statistic $Q$ is below its degrees of freedom, the number of patient-level estimates minus one). On MIMIC-IV-WDB (51 patients, primary coding), the pooled, typed, CKG-gated and shrunk-patient models do not differ from base (all CIs cover 0), and typed minus CKG-gated is $-0.005$ [$-0.016$, 0.007]. The per-patient model gives $-0.074$ [$-0.123$, $-0.037$], 0.075 [0.047, 0.117] below the shrunk-patient model. On VitalDB, source terms fitted on the first half improve the second-half likelihood over base only when the indicators are dropped (coding (b), typed $+0.019$); association relations therefore carry little predictive information beyond the base hazard. Typed minus CKG-gated and typed minus pooled are positive in every coding, and the per-patient model lies 0.257-0.258 nats per onset below the pooled model in every coding.

\begin{table}[!htbp]
\centering
\caption{Held-out log-likelihood relative to base (nats per onset, VitalDB) under three codings of unobserved sources.}
\label{tab:rel-ll}
\scriptsize
\begin{tabular}{@{}lccc@{}}
\toprule
Model & Primary & (a) indicators in base & (b) indicators dropped \\
\midrule
Pooled (= shrunk-patient) & $-$0.039 [$-$0.045, $-$0.033] & $-$0.023 [$-$0.030, $-$0.017] & 0.003 [$-$0.001, 0.007] \\
Typed & $-$0.029 [$-$0.035, $-$0.024] & $-$0.014 [$-$0.020, $-$0.008] & 0.019 [0.015, 0.022] \\
CKG-gated & $-$0.074 [$-$0.080, $-$0.067] & $-$0.058 [$-$0.065, $-$0.051] & $-$0.020 [$-$0.024, $-$0.015] \\
Per-patient & $-$0.296 [$-$0.314, $-$0.279] & $-$0.281 [$-$0.299, $-$0.263] & $-$0.255 [$-$0.272, $-$0.239] \\
\addlinespace
Typed $-$ CKG-gated & 0.045 [0.040, 0.049] & 0.044 [0.040, 0.049] & 0.038 [0.034, 0.043] \\
Typed $-$ pooled & 0.010 [0.008, 0.011] & 0.010 [0.008, 0.011] & 0.016 [0.014, 0.017] \\
Shrunk-patient $-$ per-patient & 0.257 [0.241, 0.274] & n/a & n/a \\
\bottomrule
\end{tabular}
\end{table}

\subsection{Patient-level deviations}
\label{app:relations-deviations}

A patient-level relation is a deviation $\delta_p$ added to the pooled logit hazard of patient $p$ at rows where the source is exposed. It is tested by a score test, with the pooled linear predictor as an offset and the patient intercept profiled out, using all at-risk rows. Multiplicity is controlled by Benjamini-Yekutieli (BY) at level 0.05 over all patient $\times$ relation tests of the three families \citep{benjamini2001control}. On real data this admits 2{,}434 deviations in 1{,}471 patients out of 350{,}496 tests. On the first two replicates of each null kind it admits 2{,}100 and 1{,}981 deviations (transplant; 336{,}192 and 335{,}663 tests) and 1{,}760 and 1{,}812 deviations (shift; 357{,}433 and 357{,}653 tests). In a post hoc re-test, empirical $p$-values from the patient-level statistics of the same relation in these four null replicates, followed by BY, admit no deviation on real data and none on any null replicate scored against the other three. Empirical $p<0.05$ occurs in 7.4\% of real tests and 5.0\% of null tests. Adding penalized per-patient deviations to the pooled association model lowers the held-out log-likelihood by 0.257 [0.241, 0.274] nats per onset (Table~\ref{tab:rel-ll}). Our graphs therefore store no patient-specific relation weight; every instantiated relation carries the population estimate.

\section{Beat-level couplings and patient specificity}
\label{app:fast}

This appendix specifies the beat pipeline, the delay-aware pulse arrival time, the epoch statistics and the nulls, reports the full results behind Sec.~\ref{sec:exp-specificity}, and adds construct-validity checks and a measurement of the monitor's display latency. Unless MIMIC-IV-WDB is named, analyses use VitalDB \citep{lee2022vitaldb}: ECG lead II, invasive arterial pressure and the photoplethysmogram (PPG) at 500~Hz, the capnogram at 62.5~Hz, and 1-s monitor numerics, all on a common clock in seconds from case start. Analyses follow protocols fixed before the outcomes were computed (App.~\ref{app:prereg}); items marked \emph{post hoc} or \emph{exploratory} are outside the protocols. Unless stated otherwise, CIs are 95\% percentile intervals from 2,000 case-bootstrap replicates \citep{efron1994introduction}.

\subsection{Beat extraction and quality control}
\label{app:fast-beats}

For beat $n$ we write $t^{\mathrm R}_n$ for the R-peak time and $\mathrm{RRI}_n=t^{\mathrm R}_{n+1}-t^{\mathrm R}_n$ (ms). $\mathrm{SBP}_n$ is the systolic peak of the pulse ejected by beat $n$; it normally occurs inside $\mathrm{RRI}_n$, so $\mathrm{SBP}_n\to\mathrm{RRI}_n$ is causally admissible and $\mathrm{RRI}_n\to\mathrm{SBP}_n$ is not. $\mathrm{DBP}_n$ is the end-diastolic minimum before that pulse, $\mathrm{PP}_n=\mathrm{SBP}_n-\mathrm{DBP}_n$ the pulse pressure, and $\phi_n$ the respiratory phase at $t^{\mathrm R}_n$.

\paragraph{Detectors.}
R peaks come from a Pan-Tompkins-type detector \citep{pan1985real}: 5-18~Hz band-pass, derivative, squaring, 120-ms moving-window integration, an adaptive block-wise threshold, a 250-ms refractory period, search-back in R-R gaps longer than 1.6 times the running median, rejection of T waves (a candidate within 360~ms of the previous peak with less than half its integrated energy), and refinement to the R apex with the lead polarity estimated per segment. Arterial systolic peaks are taken on the 16-Hz low-passed pressure with prominence at least 5~mmHg and at least 0.4 times the running median prominence, which rejects dicrotic notches; the pulse foot is the minimum between consecutive systolic peaks. $\mathrm{SBP}_n$ is the first systolic peak at least 50~ms after $t^{\mathrm R}_n$ whose delay lies within 150~ms of the segment's median R-to-systole delay and that occurs no later than 100~ms after $t^{\mathrm R}_{n+1}$. PPG pulses are detected on the 0.4-10~Hz band-passed signal; the foot is located by the intersecting-tangent method, i.e., the intersection of the tangent at the point of maximum upslope with the horizontal line through the preceding minimum. The respiratory phase is the Hilbert phase of the capnogram band-passed around its dominant frequency (Welch estimate within 0.08-0.8~Hz); phase wraps delimit breaths.

\paragraph{Beat gating.}
A beat is valid iff $\mathrm{RRI}_n\in[300,2000]$~ms and lies within 20\% of the 11-beat running median (the beat after an invalid interval is also dropped), a matched arterial pulse exists, $\mathrm{SBP}_n\in[40,250]$, $\mathrm{DBP}_n\in[15,150]$ and $\mathrm{PP}_n\in[10,150]$~mmHg, and $\mathrm{SBP}_n$ lies within 30~mmHg of its 11-beat running median.

\paragraph{Case quality control.}
Of the 3,442 cases, 3,227 have ECG and arterial waveforms, 2,962 have at least 50\% valid beats, and 2,954 have a case-level bias of beat-derived HR against the monitor HR within $\pm$5~bpm; all 2,954 have at least 20~min of valid beats. Of these, 2,944 have at least one usable epoch (105,505 usable epochs; 94,482 under controlled ventilation).

\subsection{Delay-aware pulse arrival time}
\label{app:fast-pat}

$\mathrm{PAT}_n$ is the time from $t^{\mathrm R}_n$ to the foot of the corresponding PPG pulse. In VitalDB the PPG foot follows the R peak by about 0.65~s (the physiological arrival time plus a device delay of the PPG output), so a fixed search window of 80-600~ms after the R peak retains a median of 3.1\% of beats. We therefore match feet in a delay-aware way. PPG feet are detected on the 500-Hz waveform in 10-min chunks. For each case $p$, $D_p$ is the mode (10-ms bins) of all foot-minus-R lags in $(0,1.5]$~s over valid beats. Each R peak is assigned the foot nearest to $t^{\mathrm R}_n+D_p$ if it lies within $\pm\min(0.15~\mathrm{s},\,0.4\times\text{median RRI})$; otherwise $\mathrm{PAT}_n$ is missing. $\mathrm{PAT}_n$ therefore includes a per-case device delay that is constant within the case; only within-case variation is interpreted. Across the 2,954 cases, $D_p$ has median 0.655~s (5th-95th percentile 0.605-0.695~s).

PAT quality control excludes cases whose PPG clock alignment cannot be verified (24 cases; 2,930 remain), whose $D_p$ lies outside $[0.45, 0.90]$~s (2,917 remain) or whose PAT coverage of valid beats is below 0.80 (2,893 remain; median coverage 0.990 over all 2,954 cases). Of these, 2,884 have at least one epoch with an estimable PAT-SBP slope (103,382 of 103,439 usable epochs). The positive-control bar and the null readings below were registered before any full-cohort delay-aware PAT outcome was computed; App.~\ref{app:prereg}, deviation (v), lists the results known at that time.

\subsection{Epochs and statistics}
\label{app:fast-epochs}

Blocks are maximal runs of valid beats with gaps of at most 60~s, kept if at least 600~s long, and are cut into consecutive 5-min epochs $e$. An epoch is usable iff it has at least 100 beats, at least 80\% valid beats and at least 150 valid beats; statistics require at least 60 valid beats. An epoch is under controlled ventilation iff it contains at least 30 breaths, the coefficient of variation of breath duration is at most 0.15, the ventilator set rate is recorded, and the capnogram rate is within 2~min$^{-1}$ of it. For a beat series $x_n$, $x^{\mathrm{dt}}_n$ denotes $x_n$ minus its centered $W$-beat running mean over valid beats ($W=31$ unless stated), defined where at least $\lfloor W/2\rfloor$ beats in the window are valid.
\begin{itemize}[leftmargin=*,itemsep=1pt,topsep=2pt]
\item \textbf{PAT-SBP slope} $b_{p,e}$: ordinary least-squares slope of $\mathrm{PAT}^{\mathrm{dt}}_n$ on $\mathrm{SBP}^{\mathrm{dt}}_n$ over valid beats with both values (at least 30 pairs), in ms/mmHg; the Pearson correlation of the same pairs is reported alongside. Expected sign: negative \citep{mukkamala2015toward}.
\item \textbf{Pulse pressure variation (PPV) and systolic pressure variation (SPV)} (phase available for more than 80\% of beats; population summaries use controlled-ventilation epochs only): per breath with at least 3 valid beats and at least 70\% valid beats, $100\,(\mathrm{PP}_{\max}-\mathrm{PP}_{\min})/\big((\mathrm{PP}_{\max}+\mathrm{PP}_{\min})/2\big)$ \citep{michard2000relation} and $\mathrm{SBP}_{\max}-\mathrm{SBP}_{\min}$; the epoch value is the median over breaths.
\item \textbf{RSA amplitude} (phase and population-summary conditions as for PPV): least-squares fit $\mathrm{RRI}^{\mathrm{dt}}_n=a_0+a_1\cos\phi_n+a_2\sin\phi_n+\varepsilon_n$ with a 15-beat detrend and at least 60 beats; amplitude $2\sqrt{a_1^2+a_2^2}$ (peak to trough, ms).
\item \textbf{Sequence baroreflex sensitivity}, following \citet{parati1988evaluation}: runs of at least 3 beats in which SBP changes by at least 1~mmHg per beat and $\mathrm{RRI}_{n+\ell}$ by at least 5~ms per beat in the same direction, with correlation at least 0.85, at beat lags $\ell\in\{0,1,2\}$; the statistic is the mean RRI-on-SBP slope over these sequences pooled across lags (ms/mmHg).
\item \textbf{Causal beat ARX baroreflex} (autoregressive model with exogenous input), after \citet{porta2013model}: $\mathrm{RRI}^{\mathrm{dt}}_n=c_0+\sum_{i=1}^{2}c_i\mathrm{RRI}^{\mathrm{dt}}_{n-i}+\sum_{\ell=0}^{2}\beta_\ell\mathrm{SBP}^{\mathrm{dt}}_{n-\ell}+\varepsilon_n$ by least squares; the statistic is $\sum_\ell\beta_\ell$ (ms/mmHg). Expected sign: positive.
\item \textbf{RMSSD}, the root mean square of successive differences of valid R-R intervals (a signal property, no coupling), and epoch \textbf{mean HR} ($60{,}000$ divided by the mean valid RRI) and \textbf{mean SBP}.
\end{itemize}
Positive-control rule: the population sign of a coupling is accepted iff the 95\% CI of the share of cases whose median epoch statistic has the expected sign has lower bound at least 0.90.

\subsection{Nulls and signal-property surrogates}
\label{app:fast-nulls}

\paragraph{Within-case nulls for the population sign.}
For every epoch with an estimable slope we draw five null data sets and recompute $b_{p,e}$ with the same estimator. (N-a) SBP and its validity mask are circularly shifted against PAT by $\ell_{\mathrm{sh}}\sim\mathcal U(60,240)$~s within the 300-s epoch, so the shift is at least 60~s in both directions. (N-b) PAT of epoch $e$ is paired, beat by beat from each epoch's first beat, with SBP of another usable epoch of the same case. Both nulls break beat alignment. We report the null case share of negative medians (mean over draws) and the paired bootstrap CI of the observed minus null share. The sign is read as not an artifact of the estimator iff the CI of the difference lies above 0 under both nulls.

\paragraph{Signal-property surrogates.}
For the patient-specificity analysis each statistic has a surrogate computed on the same epoch that keeps its marginal signal properties but destroys the coupling: for PAT-SBP, SBP circularly shifted against PAT by $\ell_{\mathrm{sh}}\sim\mathcal U\{30,\dots,n_e-30\}$ beats ($n_e$ beats in the epoch; mean over 5 shifts); for PPV, PPV recomputed after subtracting from $\mathrm{PP}_n$ its component locked to the respiratory phase (least-squares fit on two harmonics of $\phi_n$); for RSA, the amplitude recomputed after adding to the phase of each breath its own independent offset $\delta\sim\mathcal U(0,2\pi)$ (mean over 5 draws); for sequence baroreflex sensitivity, SBP circularly shifted by $\ell_{\mathrm{sh}}\sim\mathcal U\{20,\dots,n_e-20\}$ beats (mean over 5 shifts). RMSSD has no surrogate.

\paragraph{Per-patient tests.}
The per-patient surrogate tests that instantiate mechanistic edges in the graphs, and their evaluation on null data, are specified and reported in App.~\ref{app:graphs-fast}. The baroreflex edge is not tested per patient because its population sign does not meet the positive-control rule.

\subsection{Population results}
\label{app:fast-pop}

Table~\ref{tab:fast-pop} gives the population signs and distributions, and Table~\ref{tab:fast-signnulls} the within-case nulls. The PAT-SBP sign is unchanged when PAT quality control is relaxed (93.6-93.9\%). Because the PPG foot could shift with pulse amplitude, which co-varies with SBP, we also regressed PAT on detrended SBP and detrended PPG amplitude (post hoc); the median slope decreases by 15\% and 92.4\% of cases stay negative. Within case, epoch-level PAT shortens with MAP (CV7w in Table~\ref{tab:fast-cv}; $-0.56$~ms/mmHg after adjustment for HR, post hoc). In 50 MIMIC-IV-WDB intensive-care patients \citep{moody2022mimic4wdb} (exploratory), where the PPG foot follows the R peak by a median of 439~ms, so the fixed 80-600~ms window applies, the detrended PAT-SBP correlation is negative in all 50 patients and in 87\% [83, 91] of 1,536 epochs.

\begin{table}[h]
\centering
\caption{Population sign and distribution of beat-level statistics. Case share: cases whose median epoch statistic has the expected sign; epoch share with case-cluster CI; last column: median of case medians. PPV, SPV, RSA: controlled-ventilation epochs.}
\label{tab:fast-pop}
\resizebox{\ifdim\width>\linewidth\linewidth\else\width\fi}{!}{%
\begin{tabular}{lcrrccc}
\toprule
Statistic & Sign & Cases & Epochs & Case share (\%) & Epoch share (\%) & Median [95\% CI] \\
\midrule
PAT-SBP slope (ms/mmHg) & $-$ & 2,884 & 103,382 & 94.3 [93.4, 95.1] & 86.6 [85.9, 87.2] & $-0.58$ [$-0.59$, $-0.56$] \\
PAT-SBP correlation & $-$ & 2,884 & 103,382 & 94.3 [93.4, 95.1] & 86.6 [85.9, 87.2] & $-0.18$ [$-0.19$, $-0.17$] \\
\quad no PAT quality control & $-$ & 2,942 & 105,022 & 93.6 [92.6, 94.5] & 86.1 [85.4, 86.7] & $-0.57$ [$-0.59$, $-0.56$] \\
\quad clock exclusion only & $-$ & 2,920 & 104,473 & 93.9 [93.0, 94.8] & 86.3 [85.6, 86.9] & $-0.57$ [$-0.59$, $-0.56$] \\
\quad PPG-amplitude adjusted (post hoc) & $-$ & 2,884 & n/a & 92.4 [91.4, 93.4] & n/a & $-0.49$ [$-0.51$, $-0.47$] \\
Beat ARX baroreflex (ms/mmHg) & $+$ & 2,944 & 105,505 & 63.3 [61.5, 64.9] & 59.3 [58.4, 60.2] & 0.13 [0.11, 0.15] \\
\midrule
PPV (\%) & & 2,915 & 94,476 & & & 5.60 [5.50, 5.72] \\
SPV (mmHg) & & 2,915 & 94,476 & & & 4.18 [4.10, 4.26] \\
RSA amplitude (ms) & & 2,915 & 94,476 & & & 5.95 [5.76, 6.14] \\
Sequence baroreflex sensitivity (ms/mmHg) & & 2,938 & 82,266 & & & 4.77 [4.69, 4.85] \\
Mean HR (bpm) & & 2,944 & 105,505 & & & 70.46 [69.87, 70.98] \\
Mean SBP (mmHg) & & 2,944 & 105,505 & & & 116.20 [115.60, 116.74] \\
\bottomrule
\end{tabular}}
\end{table}

\begin{table}[h]
\centering
\caption{Within-case nulls for the PAT-SBP sign (N-a, N-b: mean over 5 draws; surrogate: slope averaged over 5 shifts per epoch). Observed case share negative: 94.3\%; pp: percentage points.}
\label{tab:fast-signnulls}
\resizebox{\ifdim\width>\linewidth\linewidth\else\width\fi}{!}{%
\begin{tabular}{lrccc}
\toprule
Null & Cases & Null case share (\%) & Null epoch share (\%) & Observed $-$ null (pp) \\
\midrule
N-a: circular shift of SBP by 60-240~s & 2,884 & 49.1 [48.3, 50.0] & 49.8 & $+45.2$ [44.0, 46.4] \\
N-b: pairing with another epoch of the case & 2,880 & 52.3 [51.5, 53.2] & 50.4 & $+42.0$ [40.8, 43.2] \\
Signal-property surrogate & 2,884 & 48.5 [46.7, 50.3] & 49.9 & $+45.8$ [43.8, 47.8] \\
\bottomrule
\end{tabular}}
\end{table}

\subsection{Patient specificity}
\label{app:fast-spec}

\paragraph{Protocol.}
The notation follows Sec.~\ref{sec:exp-specificity}. The early and late halves come from the same anesthetic record, so the test measures within-record stability of a case-level value, not a trait that persists across admissions. Each statistic is winsorized at the 1st and 99th percentiles of its values over usable epochs. The primary split (S1) orders a case's finite epoch values in time and takes the first half (rounded down) as the early half; a case is eligible if each half has at least 3 values. From the early halves we compute the pooled epoch-weighted mean $\mu$, the pooled within-case variance $\sigma^2_{\mathrm w}$, the method-of-moments between-case variance $\sigma^2_{\mathrm b}=\max\{0,\widehat{\mathrm{Var}}_p(\hat\theta^{\mathrm E}_p)-\overline{\sigma^2_{\mathrm w}/N_p}\}$ \citep[cf.][]{dersimonian1986meta}, the shrinkage weight $\xi_p=\sigma^2_{\mathrm b}/(\sigma^2_{\mathrm b}+\sigma^2_{\mathrm w}/N_p)$ and $\hat m_p=\mu+\xi_p(\hat\theta^{\mathrm E}_p-\mu)$. The MSE of each prediction against the late-half epoch values is averaged within and then across cases; the difference is $\Delta\mathrm{MSE}=\mathrm{MSE}(\hat m)-\mathrm{MSE}(\mu)$ and the relative difference is $\mathrm{MSE}(\hat m)/\mathrm{MSE}(\mu)-1$. All quantities, including $\mu$, $\sigma^2_{\mathrm w}$ and $\sigma^2_{\mathrm b}$, are refitted in each of 2,000 case-bootstrap replicates. $r_{\mathrm{EL}}$ is the Pearson correlation of early and late half means. A cross-patient pairing null permutes late halves across cases (1,000 permutations) and gives one-sided $p$-values for $r_{\mathrm{EL}}$; Holm's procedure is applied over the five primary statistics (PAT-SBP slope, PPV, RSA, sequence baroreflex sensitivity, RMSSD). Adjustments are residuals from a least-squares regression, fitted once on early-half epochs, on epoch mean HR and SBP (adjusted, primary), additionally on the signal-property surrogate (surrogate-adjusted), or additionally on remifentanil and propofol effect-site concentrations (0 if not recorded) and a propofol-recorded indicator (drug-adjusted). Decision rule: a statistic is patient-specific iff the CI of $\mathrm{MSE}(\hat m)-\mathrm{MSE}(\mu)$ lies below 0 both unadjusted and adjusted. Secondary splits: S2 interleaves epochs on the 5-min grid (indices $\equiv 0$ versus $\equiv 2 \bmod 4$, so halves are at least 5~min apart); S3 shuffles epoch order within case before splitting (200 draws; median reported), a ceiling for a case-specific component that does not drift; S4 uses only the first 3 epochs of each S1 half (15~min each). Persistence relative to the operating point is the paired case-bootstrap difference between $r_{\mathrm{EL}}$ of the adjusted statistic and $r_{\mathrm{EL}}$ of unadjusted mean HR or mean SBP, on common cases.

\paragraph{Results.}
All five primary statistics are patient-specific under the rule (Table~\ref{tab:fast-spec}); permutation $p=0.001$ for each (the smallest attainable with 1,000 permutations), Holm-adjusted $0.005$. The same holds after drug adjustment ($r_{\mathrm{EL}}$ 0.63-0.72) and, for PPV and RSA, within controlled-ventilation epochs only (adjusted $r_{\mathrm{EL}}=0.69$ [0.67, 0.71] and 0.66 [0.63, 0.69]; 2,847 cases). For PAT-SBP, the signal-property surrogate itself has $r_{\mathrm{EL}}=0.03$ [$-0.03$, 0.08], so the surrogate adjustment does not probe signal properties; adjusting instead for epoch PAT level or the case's modal lag $D_p$ (post hoc) gives $r_{\mathrm{EL}}=0.64$ [0.61, 0.67] and 0.62 [0.59, 0.66]. Table~\ref{tab:fast-splits} gives the split variants, and Table~\ref{tab:fast-spec-op} compares persistence with the operating point: all five statistics are less persistent than mean HR, and their differences from mean SBP lie between $-0.05$ and 0.03. S1 lies below the S3 ceiling (0.80-0.92) for every statistic. With three epochs per half (S4), $r_{\mathrm{EL}}$ is 0.23-0.43, and the MSE criterion holds for PAT-SBP, sequence baroreflex sensitivity and RMSSD but not for PPV or RSA. A separately registered test of the same five statistics and decision rule on 50 MIMIC-IV-WDB patients (41-47 eligible per statistic) and 24 VitalDB cases (7-24 eligible) declared none patient-specific in any of the 10 statistic-cohort cells (MIMIC-IV-WDB adjusted $r_{\mathrm{EL}}$ 0.43-0.71), so the within-record result was not replicated in these two smaller cohorts.

\begin{table}[h]
\centering
\caption{Patient specificity of beat-level statistics (split S1). Adjusted: residual after epoch mean HR and SBP. Rel.\ $\Delta$MSE: $\mathrm{MSE}(\hat m)/\mathrm{MSE}(\mu)-1$ (adjusted).}
\label{tab:fast-spec}
\footnotesize
\setlength{\tabcolsep}{3pt}
\resizebox{\ifdim\width>\linewidth\linewidth\else\width\fi}{!}{%
\begin{tabular}{@{}lrcccc@{}}
\toprule
 & & \multicolumn{3}{c}{$r_{\mathrm{EL}}$} & \\
\cmidrule(lr){3-5}
Statistic & Cases & unadjusted & adjusted & surrogate-adjusted & Rel.\ $\Delta$MSE \\
\midrule
PAT-SBP slope & 2,861 & 0.66 [0.63, 0.69] & 0.67 [0.63, 0.70] & 0.67 [0.63, 0.70]$^{a}$ & $-0.18$ [$-0.21$, $-0.16$] \\
PPV & 2,921 & 0.68 [0.65, 0.70] & 0.69 [0.66, 0.71] & 0.71 [0.69, 0.74] & $-0.27$ [$-0.30$, $-0.24$] \\
RSA amplitude & 2,921 & 0.67 [0.64, 0.70] & 0.66 [0.63, 0.69] & 0.65 [0.62, 0.68] & $-0.21$ [$-0.25$, $-0.18$] \\
Sequence baroreflex sens. & 2,782 & 0.63 [0.60, 0.67] & 0.63 [0.59, 0.67] & 0.16 [0.08, 0.25]$^{b}$ & $-0.17$ [$-0.20$, $-0.14$] \\
RMSSD & 2,921 & 0.73 [0.69, 0.76] & 0.72 [0.68, 0.75] & n/a & $-0.28$ [$-0.32$, $-0.24$] \\
\midrule
Mean HR & 2,921 & 0.85 [0.84, 0.86] & & & \\
Mean SBP & 2,921 & 0.68 [0.66, 0.70] & & & \\
SPV & 2,921 & 0.73 [0.71, 0.75] & 0.72 [0.70, 0.74] & & \\
Beat ARX baroreflex & 2,921 & 0.62 [0.58, 0.65] & 0.61 [0.58, 0.65] & & \\
PAT-SBP correlation & 2,861 & 0.74 [0.72, 0.76] & 0.75 [0.73, 0.76] & & \\
\bottomrule
\multicolumn{6}{@{}l}{\scriptsize $^{a}$Surrogate $r_{\mathrm{EL}}=0.03$; see text. $^{b}$2,770 cases; MSE criterion not met ($\Delta$MSE CI above 0).}
\end{tabular}}
\end{table}

\begin{table}[h]
\centering
\caption{Persistence relative to the operating point: adjusted $r_{\mathrm{EL}}$ minus $r_{\mathrm{EL}}$ of mean HR or mean SBP (paired case bootstrap on common cases).}
\label{tab:fast-spec-op}
\footnotesize
\begin{tabular}{@{}lcc@{}}
\toprule
Statistic & $\Delta r$ vs.\ mean HR & $\Delta r$ vs.\ mean SBP \\
\midrule
PAT-SBP slope & $-0.18$ [$-0.22$, $-0.15$] & $-0.02$ [$-0.05$, 0.02] \\
PPV & $-0.16$ [$-0.19$, $-0.14$] & 0.00 [$-0.03$, 0.03] \\
RSA amplitude & $-0.19$ [$-0.22$, $-0.16$] & $-0.03$ [$-0.06$, 0.01] \\
Sequence baroreflex sens. & $-0.20$ [$-0.24$, $-0.16$] & $-0.05$ [$-0.09$, $-0.00$] \\
RMSSD & $-0.13$ [$-0.17$, $-0.10$] & 0.03 [$-0.01$, 0.08] \\
\bottomrule
\end{tabular}
\end{table}

\begin{table}[h]
\centering
\caption{Split variants of the patient-specificity test ($r_{\mathrm{EL}}$, adjusted). S2: interleaved halves; S3: within-case shuffle (ceiling); S4: three epochs per half. MSE: whether the $\Delta$MSE CI lies below 0. S2 has 2,389-2,751 eligible cases; S3 and S4 use the S1 cases.}
\label{tab:fast-splits}
\resizebox{\ifdim\width>\linewidth\linewidth\else\width\fi}{!}{%
\begin{tabular}{lccccc}
\toprule
Statistic & S2 & S3 & S4 & MSE (S2) & MSE (S4) \\
\midrule
PAT-SBP slope & 0.83 [0.81, 0.85] & 0.86 [0.85, 0.87] & 0.23 [0.19, 0.28] & yes & yes \\
PPV & 0.93 [0.92, 0.93] & 0.92 [0.91, 0.93] & 0.31 [0.27, 0.35] & yes & no \\
RSA amplitude & 0.90 [0.88, 0.91] & 0.90 [0.89, 0.91] & 0.36 [0.32, 0.40] & yes & no \\
Sequence baroreflex sens. & 0.78 [0.75, 0.81] & 0.80 [0.79, 0.82] & 0.34 [0.28, 0.39] & yes & yes \\
RMSSD & 0.88 [0.86, 0.89] & 0.90 [0.89, 0.91] & 0.43 [0.38, 0.49] & yes & yes \\
\bottomrule
\end{tabular}}
\end{table}

\paragraph{Slow response relations.}
The same split-half test is applied to the response relations of ClosedLoopBench (App.~\ref{app:benchmark}) with action events as units, with 30-s windows (all 3,442 cases) and on the 2-s grid for a 500-case subsample (cases with vasoactive infusions first, then the longest recordings). A case is eligible if it has at least 6 events of the relation's direction with a clean pre-event period and at least 50\% coverage of the vital sign; events are split in time and each half needs at least 3 finite outcomes. Three response contrasts are computed per event at the relation's horizon (60~s for the ventilator, 120~s for drugs): the observed change (post-event mean minus the last pre-event level over $[t-30~\mathrm{s},t)$; primary), the deviation from the local pre-event trend line, and the bias-corrected interrupted-trend (kink) contrast of the benchmark, which removes case-level drift. A relation is patient-specific iff the CI of $\mathrm{MSE}(\hat m)-\mathrm{MSE}(\mu)$ lies below 0 for both the observed change and the kink contrast; CIs use 1,000 case-bootstrap replicates. No relation passes at either resolution (Table~\ref{tab:slow-spec}). Per-case means have a higher MSE than the pooled mean (CI above 0) in 31 of 33 relation-contrast pairs at 30~s and 30 of 33 at 2~s (11 relations $\times$ 3 contrasts; nitroglycerin$\to$MAP, with 2 eligible cases, is not estimable); the shrunk estimate is better in 0 of 33 at each resolution. At 2~s (10-278 cases per relation), $r_{\mathrm{EL}}$ ranges from $-0.40$ to 0.28 over the 33 pairs, and no CI lies above 0. The 18-relation ClosedLoopBench protocol (Table~\ref{tab:prereg}) registered the same test; under it, no relation is patient-specific at either resolution.

\begin{table}[h]
\centering
\caption{Patient specificity of slow response relations, 30-s windows. Cases: eligible for the observed change. $\Delta$MSE in squared outcome units (mmHg$^2$ for MAP and EtCO$_2$, bpm$^2$ for HR); a shrunk-minus-pooled value of 0 means the estimated between-case variance is 0.}
\label{tab:slow-spec}
\resizebox{\ifdim\width>\linewidth\linewidth\else\width\fi}{!}{%
\begin{tabular}{lrcccc}
\toprule
 & & \multicolumn{2}{c}{$r_{\mathrm{EL}}$ [95\% CI]} & \multicolumn{2}{c}{$\Delta$MSE, observed change [95\% CI]} \\
\cmidrule(lr){3-4}\cmidrule(lr){5-6}
Relation & Cases & Observed change & Kink contrast & Per-case $-$ pooled & Shrunk $-$ pooled \\
\midrule
Ventilator rate $\uparrow\to$ EtCO$_2$ $\downarrow$ & 302 & 0.03 [$-$0.12, 0.17] & $-$0.06 [$-$0.17, 0.06] & 2.67 [1.33, 4.24] & 0.00 [$-$0.03, 0.17] \\
Ventilator rate $\downarrow\to$ EtCO$_2$ $\uparrow$ & 194 & 0.13 [$-$0.03, 0.28] & 0.10 [$-$0.04, 0.23] & 0.75 [0.30, 1.18] & 0.00 [$-$0.06, 0.03] \\
Propofol $\uparrow\to$ MAP $\downarrow$ & 249 & $-$0.01 [$-$0.13, 0.11] & 0.06 [$-$0.07, 0.19] & 143 [87.6, 205] & 0.60 [$-$0.19, 12.7] \\
Propofol $\downarrow\to$ MAP $\uparrow$ & 375 & 0.08 [$-$0.03, 0.18] & $-$0.03 [$-$0.15, 0.07] & 201 [138, 275] & $-$0.11 [$-$1.89, 8.35] \\
Remifentanil $\uparrow\to$ HR $\downarrow$ & 925 & 0.01 [$-$0.09, 0.10] & 0.07 [$-$0.04, 0.18] & 25.9 [20.2, 33.0] & 1.10 [0.01, 3.57] \\
Remifentanil $\uparrow\to$ MAP $\downarrow$ & 898 & $-$0.05 [$-$0.21, 0.09] & 0.02 [$-$0.06, 0.10] & 151 [102, 218] & 9.37 [0.13, 30.4] \\
Remifentanil $\downarrow\to$ HR $\uparrow$ & 914 & 0.00 [$-$0.09, 0.09] & 0.02 [$-$0.06, 0.10] & 15.9 [12.9, 18.7] & 0.15 [$-$0.02, 0.70] \\
Remifentanil $\downarrow\to$ MAP $\uparrow$ & 834 & 0.04 [$-$0.02, 0.10] & 0.00 [$-$0.07, 0.07] & 189 [146, 237] & 1.76 [$-$0.51, 8.27] \\
Phenylephrine $\uparrow\to$ MAP $\uparrow$ & 18 & $-$0.11 [$-$0.50, 0.33] & $-$0.31 [$-$0.79, 0.62] & 87.7 [12.4, 160] & 0.19 [$-$0.14, 48.7] \\
Phenylephrine $\uparrow\to$ HR $\downarrow$ & 16 & $-$0.36 [$-$0.62, $-$0.05] & $-$0.10 [$-$0.63, 0.50] & 13.7 [4.92, 21.1] & 0.00 [0.00, 1.54] \\
Norepinephrine $\uparrow\to$ MAP $\uparrow$ & 10 & 0.05 [$-$0.44, 0.56] & $-$0.14 [$-$0.71, 0.19] & 114 [$-$19.1, 280] & 2.47 [$-$6.44, 50.0] \\
\bottomrule
\end{tabular}}
\end{table}

\subsection{Construct-validity checks}
\label{app:fast-cv}

Seven construct-validity (CV) checks, directional associations fixed from physiology before analysis, are listed in Table~\ref{tab:fast-cv}. An item holds iff its 95\% CI excludes 0 in the expected direction, is reversed iff it excludes 0 in the other direction, and is not shown otherwise. CV7 relates age to between-case PAT, which contains the per-case device delay; it was declared in advance not to count toward the tally. CV7w, registered with the delay-aware PAT analysis, is a within-case consistency check of the same PAT-pressure mechanism and is not counted either. Of the six counted items, four hold, one is reversed and one is not shown. The registered CV3 estimate is dominated by a single epoch with a recorded propofol effect-site concentration of 3,501~$\mu$g\,mL$^{-1}$; excluding epochs above 20~$\mu$g\,mL$^{-1}$ (one epoch; deviation (ii) in App.~\ref{app:prereg}) gives $-0.017$ [$-0.031$, $-0.001$]. For CV1, adjusting for case mean HR and SBP (post hoc) leaves the reversal (partial Spearman $\rho=0.06$ [0.03, 0.10]). For CV7, case PAT tracks the modal lag $D_p$ (Spearman $\rho=0.94$), and after subtracting $D_p$ the association with age is $\rho=+0.04$.

\begin{table}[h]
\centering
\caption{Construct-validity checks. Spearman $\rho$, within-case slope (case fixed effects), difference of group medians (CV5) or median within-case difference (CV6), with 95\% case-bootstrap CI; age items use adult cases. BRS: sequence baroreflex sensitivity; $C_e$: effect-site concentration; pp: percentage points.}
\label{tab:fast-cv}
\footnotesize
\setlength{\tabcolsep}{3pt}
\resizebox{\ifdim\width>\linewidth\linewidth\else\width\fi}{!}{%
\begin{tabular}{@{}lp{0.30\linewidth}crcl@{}}
\toprule
Item & Association & Expected & Cases (epochs) & Estimate [95\% CI] & Outcome \tabularnewline
\midrule
CV1 & \raggedright Age vs.\ case BRS & $<0$ & 2,904 & $\rho=0.09$ [0.05, 0.12] & reversed \tabularnewline
CV2 & \raggedright Age vs.\ case RSA amplitude (controlled ventilation) & $<0$ & 2,881 & $\rho=-0.04$ [$-0.08$, $-0.01$] & holds \tabularnewline
CV3 & \raggedright Within-case log BRS vs.\ propofol $C_e$ (per $\mu$g\,mL$^{-1}$) & $<0$ & 1,557 (41,341) & $-0.0002$ [$-0.0276$, $-0.0002$] & holds \tabularnewline
CV4 & \raggedright Within-case log BRS vs.\ BIS (per unit) & $>0$ & 2,508 (70,846) & $0.0001$ [$-0.0008$, $0.0010$] & not shown \tabularnewline
CV5 & \raggedright Case PPV, vasopressor minus no-vasopressor cases (pp) & $>0$ & 1,755 / 1,160 & $+0.93$ [0.71, 1.18] & holds \tabularnewline
CV6 & \raggedright Within-case PPV, first 30\,min after induction minus ${\geq}\,60$\,min (pp) & $>0$ & 1,580 & $+1.34$ [1.18, 1.52] & holds \tabularnewline
CV7 & \raggedright Age vs.\ case PAT & $<0$ & 2,853 & $\rho=-0.155$ [$-0.193$, $-0.115$] & not counted \tabularnewline
\midrule
CV7w & \raggedright Within-case epoch PAT vs.\ MAP (ms/mmHg) & $<0$ & 2,878 (103,375) & $-0.652$ [$-0.673$, $-0.631$] & \begin{tabular}[t]{@{}l@{}}holds\\(consistency check)\end{tabular} \tabularnewline
\bottomrule
\end{tabular}}
\end{table}

\subsection{Monitor display latency}
\label{app:fast-latency}

Vital-level relations in ClosedLoopBench are estimated from monitor numerics, which the monitor produces with a processing delay. We measured it on the 2,954 cases. Beat HR ($60{,}000/\mathrm{RRI}_n$, placed at $t^{\mathrm R}_{n+1}$, when the interval is complete) and beat MAP (the mean pressure of each pulse, placed at its systolic peak) were averaged into 1-s bins and cross-correlated with the monitor's HR and arterial MAP after subtracting a centered 61-s moving mean, for lags from $-30$ to $+30$~s (positive: monitor later); a case enters if its peak correlation is at least 0.5, and the lag is refined by parabolic interpolation. Monitor HR lags beat HR by a median of 6.94~s [6.88, 7.02] (1,310 cases; IQR 5.99-8.02) and monitor MAP lags beat MAP by 5.95~s [5.83, 6.02] (688 cases; IQR 4.95-7.57); on unfiltered series, which admit almost all cases, the lags are 6.99~s (2,906 cases) and 6.71~s (2,954 cases). The HR-minus-MAP lag is 0.97~s [0.73, 1.13] in the 342 cases that pass both inclusion criteria and 0.33~s [0.24, 0.42] on unfiltered series (2,906 cases). A 2-s lagged regression of HR on its own and MAP's lags 1-5 (expected: negative summed MAP coefficient) has the expected sign in 24.8\% [23.3, 26.4] of 2,954 cases on monitor numerics, in 26.7\% [25.1, 28.3] after shifting monitor HR by each case's relative lag (the median relative lag, 0.97~s, where it is not estimable), and in 53.7\% [51.8, 55.5] on beat-derived 2-s series (exact McNemar $p<10^{-6}$ for each pairwise comparison). Emulating pure delays or causal averaging of the measured size on beat-derived series (exploratory) gives 38.3-47.4\%, so these processing models do not reproduce the 24.8\% of the monitor numerics.

\section{Concept grounding details}
\label{app:grounding}

This appendix gives the cohort, reference and protocols behind Sec.~\ref{sec:exp-grounding} and the full per-concept results. CIs are 95\% patient-cluster percentile intervals from 2,000 bootstrap resamples \citep{efron1994introduction}; within a concept and split, all models and the rule detector are evaluated on the same windows with the same resample weights, so paired differences are valid. Analyses follow protocols fixed before the outcomes were computed (App.~\ref{app:prereg}); items marked \emph{post hoc} were added afterwards.

\subsection{Cohort, splits and models}
\label{app:grounding-cohort}

The cohort consists of the 200 MIMIC-IV-WDB records of 198 patients \citep{moody2022mimic4wdb,goldberger2000physiobank}, split into 140 training, 30 validation and 30 test records, stratified by quartile of ICU length of stay from MIMIC-IV \citep{johnson2023mimic} (records without a link are assigned to training); no training patient contributes an evaluation record. Windows are 30~s long with a 15-s stride at 62.47~Hz; ECG, PPG and respiration are z-scored per segment and arterial pressure enters as $(\mathrm{ABP}/\mathrm{mmHg}-85)/20$. Of the 30 test and 30 validation records, 26 and 27 have processed windows. The test split is primary. A window is evaluated for a concept if it contains the concept's home modality (its waveform in Table~\ref{tab:concept-vocab}), passes the signal-quality flag and has a monitor reference; at most 8 windows per concept fail the quality flag, and these are not evaluated.

All six models share the architecture of App.~\ref{app:concepts}: per-modality encoders, cross-modal fusion and seven linear concept heads that output the concept probability $\hat q_{p,c}(w)$ for window $w$. Supervised training uses 150,000 windows from 23 training patients, with checkpoint selection on 20,000 windows from 8 validation patients (40 epochs; the selected checkpoints come from epochs 2-13). Rule-supervised heads are trained on the rule detector's labels; monitor-supervised heads are trained on the same windows with the monitor labels of App.~\ref{app:grounding-ref}. Each type is trained with three seeds, numbered 1-3. Table~\ref{tab:label-agreement} (App.~\ref{app:concepts-labels}) gives the agreement of the two label sources on the training windows. Operating thresholds $\lambda_c$: for rule-supervised heads, the threshold maximizing the F1 score against rule labels on validation windows (grid 0.05-0.95, step 0.05); for monitor-supervised heads, a fixed 0.5. A secondary, reference-calibrated threshold for rule-supervised heads minimizes validation error against the monitor reference on the same grid. Validation windows are therefore used for checkpoint selection (8 patients) and for threshold fitting (all validation patients). One person contributes one validation and one test record, which are analyzed as separate patients; this validation record is not among the 8 checkpoint-selection patients but enters the threshold fits.

The graph experiments (Sec.~\ref{sec:exp-graphs}, App.~\ref{app:graphs}) use these models only on MIMIC-IV-WDB: their concept states are the activations of rule-supervised seed 1 at thresholds fitted on training and validation windows (App.~\ref{app:concepts-model}). VitalDB graphs use no model; their concept states are thresholds on the 30-s medians of the monitor numerics (App.~\ref{app:data-products}).

\subsection{Monitor reference and time alignment}
\label{app:grounding-ref}

The rule detector estimates the rate from the autocorrelation of the home-modality window and, for pressure, takes the minimum (hypotension) or maximum (hypertension) of sustained 10-s sub-window means; its continuous estimate is used for AUROC and its thresholded value is exactly the rule label. The monitor reference for a window is the median of the monitor numeric over the window (at least 10 valid samples at 1~Hz; validity ranges 20-250~bpm for rates, 2-80\,/min for respiration, 20-200~mmHg for pressure), thresholded at the definition in Table~\ref{tab:concept-vocab}. For pressure the numeric is the invasive arterial mean (channel ABPm, or ARTm when ABPm is absent). The cuff mean pressure (NBPm) nearest to the window center within 300~s is a second, physically separate pressure reference. For pressure, the two sources therefore define labels differently: the rule uses an extremum of sustained 10-s sub-window means of the waveform, the reference the median of the monitor numeric over the 30-s window. Against this reference the rule detector's pressure AUROCs (0.988 for hypotension, 0.994 for hypertension; Table~\ref{tab:grounding-full}) exceed those of the monitor-supervised heads, so we claim no AUROC gain for the pressure concepts.

Windows are mapped to the record clock through the multi-segment layout header (no unmapped windows; 881 segment headers cross-checked with 0 mismatches); time stamps of the monitor numerics are counter ticks at 999.56~Hz. Alignment was validated by lag scans. The waveform-mean arterial pressure matches the monitor ABPm with a median absolute error of 0.46~mmHg at lag 0, with its minimum at $+6$~s (monitor averaging); on windows more than 24~h into a record the best lag is $+4$~s under the counter-tick clock and $-40$~s under a millisecond clock, which confirms the former. The best lags of waveform-derived rates are $+9$~s (ECG HR against monitor HR) and $+13$~s (PPG against pulse rate; respiration against impedance respiratory rate). Using the best lag instead of lag 0 changes 0.2\% (hypertension) to 8.1\% (tachypnea) of reference labels; lag 0 is used throughout.

\subsection{Per-concept results}
\label{app:grounding-results}

Table~\ref{tab:grounding-full} gives AUROC, the area under the precision-recall curve (AUPRC), binary error and balanced accuracy for every concept and score, and Table~\ref{tab:grounding-paired} the paired differences between monitor- and rule-supervised heads pooled over seeds (the mean over seeds of the per-seed difference, computed on every resample). On the validation split the mean AUROC is $0.967\pm0.001$ (monitor-supervised), $0.885\pm0.017$ (rule-supervised) and 0.800 (rule detector); on the 19 validation patients not used for checkpoint selection it is $0.966\pm0.001$, $0.899\pm0.016$ and 0.808.

Positive reference windows come from 20, 5, 22, 25 and 25 of the 25 test patients for the five rate concepts and from 6 and 7 of the 8 for hypotension and hypertension. Scoring against a second monitor numeric (post hoc) gives paired AUROC gains of $+0.086$ [0.019, 0.167] for tachycardia against the SpO$_2$ pulse rate and $+0.175$ [0.051, 0.282] for PPG tachycardia against the ECG heart rate, and $-0.012$ [$-0.259$, 0.146] for bradycardia against the SpO$_2$ pulse rate. Against the reference-calibrated rule-supervised threshold, the monitor-supervised error difference has a CI below 0 for tachycardia, PPG tachycardia, tachypnea, bradypnea and hypotension; extending that threshold grid to 0.005-0.995 with a never-fire option (post hoc) gives $+0.001$ [$-0.033$, 0.051] for bradycardia and $+0.020$ [0.002, 0.035] for bradypnea.

App.~\ref{app:supp-pressure} scores the pressure heads against the cuff pressure, with and without an arterial waveform.

\begin{table}[h]
\centering
\caption{Per-concept results on the test split against the monitor reference. Heads: mean $\pm$ s.d.\ over 3 seeds at their operating thresholds. Prev.: reference prevalence, which equals the error of a classifier that never fires.}
\label{tab:grounding-full}
\resizebox{\ifdim\width>\linewidth\linewidth\else\width\fi}{!}{%
\begin{tabular}{llcccc}
\toprule
Concept (patients / windows / prev.) & Score & AUROC & AUPRC & Error & Balanced accuracy \\
\midrule
Tachycardia (ECG) & Rule detector & 0.823 [0.707, 0.901] & 0.464 & 0.111 & 0.886 \\
\quad 25 / 234,277 / 0.184 & Rule-supervised & $0.899\pm0.028$ & $0.739\pm0.036$ & $0.144\pm0.013$ & $0.824\pm0.028$ \\
 & Monitor-supervised & $0.990\pm0.001$ & $0.965\pm0.004$ & $0.047\pm0.005$ & $0.948\pm0.004$ \\
\midrule
Bradycardia (ECG) & Rule detector & 0.941 [0.866, 0.961] & 0.132 & 0.066 & 0.940 \\
\quad 25 / 234,277 / 0.018 & Rule-supervised & $0.887\pm0.031$ & $0.147\pm0.111$ & $0.129\pm0.053$ & $0.827\pm0.020$ \\
 & Monitor-supervised & $0.985\pm0.007$ & $0.578\pm0.213$ & $0.024\pm0.005$ & $0.943\pm0.016$ \\
\midrule
Tachycardia (PPG) & Rule detector & 0.533 [0.438, 0.684] & 0.441 & 0.120 & 0.673 \\
\quad 25 / 230,672 / 0.174 & Rule-supervised & $0.812\pm0.016$ & $0.684\pm0.012$ & $0.121\pm0.002$ & $0.665\pm0.005$ \\
 & Monitor-supervised & $0.979\pm0.001$ & $0.914\pm0.006$ & $0.056\pm0.004$ & $0.899\pm0.016$ \\
\midrule
Tachypnea & Rule detector & 0.584 [0.493, 0.664] & 0.610 & 0.254 & 0.681 \\
\quad 25 / 226,117 / 0.374 & Rule-supervised & $0.814\pm0.004$ & $0.747\pm0.006$ & $0.254\pm0.004$ & $0.683\pm0.006$ \\
 & Monitor-supervised & $0.924\pm0.002$ & $0.888\pm0.004$ & $0.166\pm0.004$ & $0.795\pm0.007$ \\
\midrule
Bradypnea & Rule detector & 0.633 [0.546, 0.713] & 0.067 & 0.382 & 0.643 \\
\quad 25 / 226,117 / 0.053 & Rule-supervised & $0.674\pm0.015$ & $0.097\pm0.011$ & $0.464\pm0.010$ & $0.604\pm0.011$ \\
 & Monitor-supervised & $0.888\pm0.018$ & $0.371\pm0.039$ & $0.073\pm0.006$ & $0.734\pm0.040$ \\
\midrule
Hypotension & Rule detector & 0.988 [0.984, 0.996] & 0.853 & 0.087 & 0.951 \\
\quad 8 / 40,211 / 0.099 & Rule-supervised & $0.972\pm0.009$ & $0.695\pm0.098$ & $0.139\pm0.011$ & $0.922\pm0.005$ \\
 & Monitor-supervised & $0.983\pm0.002$ & $0.924\pm0.009$ & $0.023\pm0.000$ & $0.946\pm0.003$ \\
\midrule
Hypertension & Rule detector & 0.994 [0.992, 0.996] & 0.681 & 0.017 & 0.991 \\
\quad 8 / 40,211 / 0.034 & Rule-supervised & $0.995\pm0.001$ & $0.737\pm0.021$ & $0.018\pm0.002$ & $0.986\pm0.003$ \\
 & Monitor-supervised & $0.987\pm0.003$ & $0.837\pm0.026$ & $0.016\pm0.002$ & $0.798\pm0.023$ \\
\bottomrule
\end{tabular}}
\end{table}

\begin{table}[h]
\centering
\caption{Paired differences, monitor- minus rule-supervised heads, pooled over seeds [95\% CI]. Error and balanced accuracy at each type's operating threshold; correct on rule errors: share of windows where the rule label disagrees with the reference that the head classifies correctly. Balanced accuracy was not pre-specified.}
\label{tab:grounding-paired}
\resizebox{\ifdim\width>\linewidth\linewidth\else\width\fi}{!}{%
\begin{tabular}{lcccc}
\toprule
Concept & $\Delta$AUROC & $\Delta$Error & $\Delta$Correct on rule errors & $\Delta$Balanced accuracy \\
\midrule
Tachycardia (ECG) & $+0.091$ [0.023, 0.170] & $-0.097$ [$-0.168$, $-0.043$] & $+0.295$ [0.213, 0.478] & $+0.124$ [0.036, 0.204] \\
Bradycardia (ECG) & $+0.098$ [0.039, 0.257] & $-0.105$ [$-0.216$, $-0.018$] & $+0.294$ [0.129, 0.464] & $+0.116$ [0.020, 0.235] \\
Tachycardia (PPG) & $+0.167$ [0.040, 0.282] & $-0.064$ [$-0.149$, $-0.013$] & $+0.639$ [0.451, 0.742] & $+0.233$ [0.127, 0.308] \\
Tachypnea & $+0.110$ [0.080, 0.142] & $-0.088$ [$-0.125$, $-0.057$] & $+0.407$ [0.359, 0.453] & $+0.112$ [0.075, 0.150] \\
Bradypnea & $+0.214$ [0.082, 0.323] & $-0.392$ [$-0.500$, $-0.273$] & $+0.782$ [0.703, 0.836] & $+0.131$ [0.000, 0.238] \\
Hypotension & $+0.012$ [$-0.033$, 0.039] & $-0.116$ [$-0.183$, $-0.025$] & $+0.795$ [0.747, 0.871] & $+0.024$ [$-0.079$, 0.084] \\
Hypertension & $-0.008$ [$-0.032$, $-0.0005$] & $-0.002$ [$-0.009$, 0.012] & $+0.702$ [0.600, 0.763] & $-0.188$ [$-0.208$, $-0.053$] \\
\bottomrule
\end{tabular}}
\end{table}

\subsection{Counterfactual faithfulness protocol}
\label{app:grounding-protocol}

\paragraph{Keys.}
Edits are applied to keys, i.e.\ windows whose baseline is verified normal against the monitor: for ECG and PPG, a rule rate of 65-95~bpm within 5~bpm of the monitor HR or pulse rate; for respiration, 13-19\,/min within 2\,/min of the monitor respiratory rate; for arterial pressure, a window mean of 70-100~mmHg within 5~mmHg of ABPm with sustained 10-s sub-window means in $[67,103]$~mmHg; all with periodicity quality (normalized autocorrelation at the detected lag) of at least 0.5. Up to 25 keys per patient and family are sampled at random from the test and validation patients (including the 8 validation patients used for checkpoint selection): 960 keys from 43 patients (ECG), 985 from 46 (PPG), 923 from 46 (respiration) and 414 from 18 (pressure). Keys depend only on rule estimates and monitor references, so they are identical for every model.

\paragraph{T1: counterfactual edits.}
A time-warp by the factor $f$ = target/baseline (linear interpolation, zero-phase anti-alias filtering when $f>1$, random sub-sample phase, $f\in[0.3,3]$; the following windows supply the extra samples) multiplies every rate in the window by $f$. Heart-rate targets are 40, 45, 50, 55, 65, 70, 75, 85, 90, 95, 105, 110, 120, 130, 140 and 150~bpm; respiratory targets are 6, 8, 9, 10, 11, 13, 14, 16, 18, 19, 21, 22, 24, 27, 30 and 35\,/min. Heart-rate edits warp all pulsatile channels (ECG, PPG, arterial pressure) by the same $f$, because physiology has one heart rate; warping only the home channel is a secondary mode. Pressure edits scale the arterial waveform to a MAP of 45, 50, 55, 60, 62, 68, 70, 75, 85, 95, 100, 102, 108, 110, 120, 130 or 140~mmHg (a level shift that keeps pulse pressure is a secondary mode). The sham edit is the same operation with $f=1$ (the identity for pressure). Neutral edits multiply the home-channel amplitude by 0.6 and 1.6, which leaves the rate unchanged; for pressure they scale the pulsatile component around the window mean, and only edits that leave the label unchanged are counted. The counterfactual label is the target rate against the definition, and for pressure the rule's sustained sub-window statistics recomputed on the edited signal. The primary families are ECG for tachycardia and bradycardia, PPG for PPG tachycardia, respiration for tachypnea and bradypnea, and arterial pressure for hypotension and hypertension. Criteria: (C1) per key, the Spearman correlation between target and probability times the expected sign, averaged within and then across patients, has CI lower bound $>0$; (C2) the balanced accuracy of the thresholded probability against the counterfactual label has CI lower bound $>0.70$; (C3) under the sham edit, the mean absolute probability change and the share of flipped decisions both have CI upper bound $<0.05$; (C4) under neutral edits, the share of flipped decisions has CI upper bound $<0.10$. T1 passes if and only if C1-C4 hold.

\paragraph{Baselines.}
The randomized head replaces each head's weight vector by a random Gaussian direction of equal norm, keeping bias and feature standardization; the re-initialized model re-initializes encoders, fusion and heads; the rule detector is applied to the edited home-channel signal. For each trained model, its randomized head and a re-initialized model use matched-rate thresholds that activate the same fraction of windows as that model. In an artifact arm, a sinusoid of 3 times the channel standard deviation is added to the home channel after the physiological edit, at 2.1-2.7~Hz for ECG and PPG and 0.40-0.60~Hz for respiration (random phase and frequency per key).

\paragraph{T2: evidence pointers.}
A node's evidence pointers are its activation runs: maximal runs of home-modality windows with $\hat q_{p,c}(w)$ at or above threshold. Per patient and concept, up to 6 runs of 2-40 windows are sampled, with up to 12 pointer windows per run; the span of a run is the run extended by 8 windows on each side. Each pointer window is edited in the home channel only and the model is re-run: \emph{neg} swaps in a same-patient rule-negative window with $\hat q<$ threshold/2 outside any run; \emph{pos} swaps in a rule-positive window from another run; \emph{norm} warps the window to 80~bpm or 16\,/min using its own rule rate (pressure: scaled to MAP 85~mmHg); \emph{sham} applies $f=1$. $\mathrm{Share}_{\mathrm{op}}$ is the share of pointer windows still at or above threshold after operation op. Criteria: (P1) $\mathrm{Share}_{\mathrm{pos}}-\mathrm{Share}_{\mathrm{neg}}$ has CI lower bound $>0.5$; (P2) $\mathrm{Share}_{\mathrm{sham}}-\mathrm{Share}_{\mathrm{norm}}$ has CI lower bound $>0.5$; (P3) the drop in mean span probability after \emph{neg} edits in the pointers exceeds the drop after the same number of \emph{neg} edits at random span windows outside the run (CI lower bound $>0$). T2 passes if and only if P1-P3 hold. Content pools for \emph{neg} and \emph{pos} are defined by rule labels for all models.

\paragraph{T3: modality masking.}
Per patient and concept, up to 10 active and 10 random home-modality keys are re-run with each present modality removed from fusion, and with only the home modality kept. Criteria: (M1) for every other modality observed in at least 5 patients, the mean paired difference between the absolute probability change from masking the home modality and from masking that modality has CI lower bound $>0$; (M2) the agreement of the home-only decision with the full-model decision has CI lower bound $>0.85$. T3 passes if and only if M1 and M2 hold.

\subsection{Faithfulness results}
\label{app:grounding-faith}

Table~\ref{tab:faith-pass} gives the pass pattern of every model and Table~\ref{tab:faith-t1} the T1 criteria. The failed T1 criteria are C4 for bradycardia and bradypnea (rule-supervised seeds 1 and 2), C2-C4 for bradycardia (rule-supervised seed 3) and C2 for bradypnea (monitor-supervised seed 3). The randomized heads of the other five models fail T3 for every concept; two fail every test, and three pass T1 for one pressure concept (hypotension for rule-supervised seed 3 and monitor-supervised seed 3, hypertension for monitor-supervised seed 2) and T2 for hypotension. The re-initialized models of all six fail T1 and T3 for every concept. For rule-supervised seed 1 with thresholds calibrated on training and validation windows, the patient-mean probability curve crosses the operating threshold at 99.96 [99.42, 100.45]~bpm for tachycardia (definition 100~bpm), 59.7~bpm for bradycardia (60), 104.1~bpm for PPG tachycardia (100), 20.4\,/min for tachypnea (20), 13.2\,/min for bradypnea (12), 66.2~mmHg for hypotension (65) and 107.0~mmHg for hypertension (105). For monitor-supervised heads at 0.5 the crossings are 99.6-99.9, 57.1-58.5, 100.0-100.2, 21.4-21.7, 8.8-9.9, 65.0 and 107.5-108.6 across seeds. The rule estimate on edited windows lies within 5\% of the target for 96\% (ECG), 92\% (PPG) and 89\% (respiration) of edits.

Warping only the ECG lowers C2 for tachycardia from 0.91 [0.88, 0.94] to 0.71 [0.67, 0.75], so heart-rate heads combine the rate across pulsatile channels. The PPG-tachycardia head reads the PPG: a PPG-only warp passes C2 (0.90 [0.88, 0.92]), and masking the PPG changes its probability by 0.375 [0.325, 0.422] against 0.045 [0.031, 0.059] for the ECG. Under sinusoidal interference, the rule detector's C2 falls to 0.70 [0.65, 0.75] for tachycardia and 0.72 [0.68, 0.76] for bradycardia (rule-supervised seed 1: 0.85 [0.80, 0.88] and 0.74 [0.70, 0.78]). The respiratory interference arm is not interpretable, because its artifact frequency (0.40-0.60~Hz, 24-36\,/min) lies in the tachypnea range.

Table~\ref{tab:faith-t2t3} gives the pointer and masking criteria of rule-supervised seed 1; its randomized head fails T2 for all seven concepts (P1 at most 0.60). Two analyses locate the P2 failures (post hoc). For hypertension, editing the quantity that defines the label (maximum sub-window mean set to 95~mmHg) instead of setting MAP to 85~mmHg gives P2 = 0.93 [0.88, 0.97]; with the MAP edit, 46\% of edited windows remain rule-positive. For the rate concepts, restricting to pointer windows whose rule rate agrees with the monitor (within 5~bpm or 2\,/min) gives P2 of 0.99 (PPG tachycardia), 0.96 (tachypnea), 0.43 (tachycardia), 0.31 (bradycardia) and 0.24 (bradypnea).

\begin{table}[h]
\centering
\caption{Faithfulness pass pattern at each model's operating threshold (App.~\ref{app:grounding-cohort}). T1 per concept (P: pass, F: fail) and pass counts of T1-T3 out of 7. Randomized head and re-initialized model: those of rule-supervised seed 1. $^\dagger$Evaluated with the same criteria after registration (exploratory).}
\label{tab:faith-pass}
\resizebox{\ifdim\width>\linewidth\linewidth\else\width\fi}{!}{%
\begin{tabular}{lcccccccccc}
\toprule
 & \multicolumn{7}{c}{T1 by concept} & \multicolumn{3}{c}{Passes} \\
\cmidrule(lr){2-8}\cmidrule(lr){9-11}
Model & Tachy & Brady & Tachy (PPG) & Tachypnea & Bradypnea & Hypo & Hyper & T1 & T2 & T3 \\
\midrule
Rule-supervised, seed 1 & P & F & P & P & F & P & P & 5 & 3 & 5 \\
Rule-supervised, seed 2 & P & F & P & P & F & P & P & 5 & 4 & 4 \\
Rule-supervised, seed 3 & P & F & P & P & P & P & P & 6 & 4 & 5 \\
Monitor-supervised, seed 1 & P & P & P & P & P & P & P & 7 & 3 & 3 \\
Monitor-supervised, seed 2$^\dagger$ & P & P & P & P & P & P & P & 7 & 3 & 4 \\
Monitor-supervised, seed 3$^\dagger$ & P & P & P & P & F & P & P & 6 & 4 & 3 \\
\midrule
Randomized head & F & F & F & F & F & F & F & 0 & 0 & 0 \\
Re-initialized model & F & F & F & F & F & F & F & 0 & n/a & 0 \\
Rule detector & P & P & P & P & P & P & P & 7 & n/a & n/a \\
\bottomrule
\end{tabular}}
\end{table}

\begin{table}[h]
\centering
\caption{T1 criteria. Rule- and monitor-supervised heads: range over 3 seeds (rule-supervised seed 1 at thresholds calibrated on training and validation windows; its pass pattern is the same at the validation-calibrated thresholds). Baseline columns: C2 of the randomized head and re-initialized model of rule-supervised seed 1, and of the rule detector (point estimates).}
\label{tab:faith-t1}
\footnotesize
\setlength{\tabcolsep}{3pt}
\resizebox{\ifdim\width>\linewidth\linewidth\else\width\fi}{!}{%
\begin{tabular}{@{}lccccccccc@{}}
\toprule
 & \multicolumn{2}{c}{C1 $\rho$} & \multicolumn{2}{c}{C2 balanced accuracy} & \multicolumn{2}{c}{C4 neutral flips} & \multicolumn{3}{c}{C2, baselines} \\
\cmidrule(lr){2-3}\cmidrule(lr){4-5}\cmidrule(lr){6-7}\cmidrule(lr){8-10}
Concept & Rule-sup. & Monitor-sup. & Rule-sup. & Monitor-sup. & Rule-sup. & Monitor-sup. & \begin{tabular}[b]{@{}c@{}}Random\\head\end{tabular} & Re-init. & Detector \\
\midrule
Tachycardia (ECG) & 0.444-0.488 & 0.699-0.726 & 0.888-0.913 & 0.989-0.995 & 0.027-0.031 & 0.004-0.009 & 0.36 & 0.52 & 0.96 \\
Bradycardia (ECG) & 0.278-0.377 & 0.518-0.549 & 0.740-0.794 & 0.832-0.878 & 0.087-0.118 & 0.003-0.006 & 0.52 & 0.46 & 0.98 \\
Tachycardia (PPG) & 0.545-0.586 & 0.705-0.741 & 0.879-0.914 & 0.996-0.998 & 0.000 & 0.000 & 0.56 & 0.46 & 0.92 \\
Tachypnea & 0.742-0.772 & 0.925-0.948 & 0.870-0.907 & 0.883-0.896 & 0.011-0.020 & 0.021-0.028 & 0.72 & 0.51 & 0.96 \\
Bradypnea & 0.736-0.765 & 0.876-0.919 & 0.889-0.903 & 0.728-0.757 & 0.074-0.099 & 0.038-0.057 & 0.53 & 0.50 & 0.98 \\
Hypotension & 0.565-0.634 & 0.323-0.374 & 0.993-0.994 & 0.980-0.985 & 0.006-0.012 & 0.000-0.001 & 0.44 & 0.29 & 1.00 \\
Hypertension & 0.655-0.691 & 0.479-0.617 & 0.978-0.981 & 0.879-0.911 & 0.004-0.005 & 0.000 & 0.44 & 0.66 & 1.00 \\
\bottomrule
\end{tabular}}
\end{table}

\begin{table}[h]
\centering
\caption{Pointer (T2) and masking (T3) criteria, rule-supervised seed 1 at thresholds calibrated on training and validation windows (T3 passes 4 of 7 here and 5 of 7 at the validation-calibrated thresholds). $|\Delta\hat q|$: mean absolute probability change when the home modality, or the most influential other modality, is masked.}
\label{tab:faith-t2t3}
\footnotesize
\setlength{\tabcolsep}{3pt}
\resizebox{\ifdim\width>\linewidth\linewidth\else\width\fi}{!}{%
\begin{tabular}{@{}lrccccccc@{}}
\toprule
 & \multicolumn{5}{c}{T2 evidence pointers} & \multicolumn{3}{c}{T3 masking} \\
\cmidrule(lr){2-6}\cmidrule(lr){7-9}
Concept & \begin{tabular}[b]{@{}r@{}}Pointer\\windows\end{tabular} & P1 & P2 & P3 & Pass & \begin{tabular}[b]{@{}c@{}}$|\Delta\hat q|$\\home / other\end{tabular} & M2 & Pass \\
\midrule
Tachycardia (ECG) & 1,188 & 0.72 [0.64, 0.79] & 0.36 [0.28, 0.43] & 0.073 [0.064, 0.082] & F & 0.315 / 0.122 & 0.83 [0.78, 0.87] & F \\
Bradycardia (ECG) & 1,059 & 0.48 [0.39, 0.57] & 0.34 [0.26, 0.43] & 0.062 [0.053, 0.072] & F & 0.319 / 0.224 & 0.73 [0.68, 0.79] & F \\
Tachycardia (PPG) & 778 & 0.94 [0.92, 0.96] & 0.88 [0.80, 0.93] & 0.093 [0.082, 0.105] & P & 0.375 / 0.045 & 0.89 [0.85, 0.93] & F \\
Tachypnea & 1,196 & 0.95 [0.93, 0.96] & 0.81 [0.75, 0.85] & 0.092 [0.084, 0.100] & P & 0.382 / 0.053 & 0.96 [0.94, 0.97] & P \\
Bradypnea & 1,654 & 0.95 [0.94, 0.96] & 0.48 [0.44, 0.52] & 0.065 [0.057, 0.073] & F & 0.390 / 0.074 & 0.90 [0.87, 0.92] & P \\
Hypotension & 493 & 0.98 [0.96, 0.99] & 0.89 [0.83, 0.94] & 0.123 [0.099, 0.153] & P & 0.533 / 0.031 & 0.96 [0.94, 0.98] & P \\
Hypertension & 550 & 0.98 [0.96, 0.99] & 0.60 [0.50, 0.71] & 0.197 [0.172, 0.225] & F & 0.492 / 0.005 & 0.98 [0.96, 0.99] & P \\
\bottomrule
\end{tabular}}
\end{table}

\subsection{Evidence-deletion metrics without re-running the model}
\label{app:grounding-eraser}

The tests above re-run the model on edited inputs. Evidence-deletion metrics \citep{deyoung2020eraser} can instead be computed from stored window probabilities; this shortcut is uninformative. On two random blocks of 160 home-modality windows per patient and concept, the rationale $\mathcal{W}_R$ is the set of windows at or above threshold, the presence score $\mathrm{Pres}(\mathcal{W})$ is the mean of the $|\mathcal{W}_R|$ largest stored probabilities in a window set $\mathcal{W}$, comprehensiveness is $\mathrm{Pres}(\text{all})-\mathrm{Pres}(\text{all}\setminus\mathcal{W}_R)$, sufficiency is $\mathrm{Pres}(\text{all})-\mathrm{Pres}(\mathcal{W}_R)$, and insertion-deletion is the difference of the areas under the presence curves when windows are inserted into or deleted from the block in order of stored probability. The rank variant replaces probabilities by within-block percentile ranks. Because $\mathcal{W}_R$ is exactly the set of the $|\mathcal{W}_R|$ highest stored probabilities, $\mathrm{Pres}(\mathcal{W}_R)=\mathrm{Pres}(\text{all})$ and sufficiency is 0 for every model; on ranks, comprehensiveness and insertion-deletion depend only on $|\mathcal{W}_R|$ and the block size. Because this computation never re-runs the model, it cannot test whether the evidence drives the prediction. Table~\ref{tab:eraser} compares rule-supervised seed 1 with its randomized head: the rank-based metrics agree within their CIs for every concept, and the probability-based comprehensiveness differs through the spread of the stored probabilities.

\begin{table}[h]
\centering
\caption{Evidence-deletion metrics on stored activations, rule-supervised seed 1 / its randomized head. Sufficiency is 0.000 for both in every concept.}
\label{tab:eraser}
\footnotesize
\resizebox{\ifdim\width>\linewidth\linewidth\else\width\fi}{!}{%
\begin{tabular}{lccc}
\toprule
Concept & Comprehensiveness & Comprehensiveness (rank) & Insertion$-$deletion (rank) \\
\midrule
Tachycardia (ECG) & 0.666 / 0.134 & 0.253 / 0.248 & 0.475 / 0.475 \\
Bradycardia (ECG) & 0.594 / 0.092 & 0.208 / 0.215 & 0.474 / 0.474 \\
Tachycardia (PPG) & 0.502 / 0.076 & 0.231 / 0.206 & 0.475 / 0.475 \\
Tachypnea & 0.577 / 0.119 & 0.257 / 0.264 & 0.470 / 0.470 \\
Bradypnea & 0.556 / 0.158 & 0.362 / 0.368 & 0.471 / 0.471 \\
Hypotension & 0.915 / 0.156 & 0.420 / 0.391 & 0.421 / 0.426 \\
Hypertension & 0.912 / 0.078 & 0.129 / 0.129 & 0.475 / 0.474 \\
\bottomrule
\end{tabular}}
\end{table}

\section{Graph statistics and case studies}
\label{app:graphs}

This appendix defines the content of our patient graphs and the correlational baseline (CKG). It also gives the full evidence for Sec.~\ref{sec:exp-graphs} and two case studies.

\subsection{Graph schema}
\label{app:graphs-schema}

Each patient graph is one JSON document. All times are in seconds on the case clock. Beat references are row indices into the patient's beat table (App.~\ref{app:data}). Table~\ref{tab:graph-schema} lists the objects and their fields. No relation instance stores a patient-specific weight: each carries the population estimate and the patient's evidence pointers. Fast edges (the mechanistic edges of Sec.~\ref{sec:method-fast}) are tested per patient and store a descriptive patient estimate (App.~\ref{app:graphs-fast}).

\begin{table}[!htbp]
\centering
\caption{Objects of a patient graph.}
\label{tab:graph-schema}
\scriptsize
\begin{tabular}{@{}p{0.2\linewidth}p{0.74\linewidth}@{}}
\toprule
Object & Fields \\
\midrule
Concept node $v=(c,\kappa)$ & node id; concept code (SNOMED CT \citep{donnelly2006snomed} where one exists, otherwise a local code); preferred term; tier $\kappa$; evidence set $\Pi(v)$: (record, modality, window start, window end, mean activation) per activation run \\
Episode & id; concept; node; start step and number of steps; start and end time; onset-valid flag (the run starts after two observed off steps) \\
Beat episode & the episode fields plus first and last beat row, number of beats, defining statistic and its median \\
Action node & agent; kind (infusion target, infusion rate, ventilator setting); unit; time-stamped actions (time, value before, value after) \\
Variable node & vital (HR, MAP, EtCO$_2$) or beat variable (respiratory phase, beat SBP, RRI, PAT); unit; number of observed windows \\
Relation & id; type$_r$; level (concept or vital); source; target; action and direction; sign of the population estimate and, for vital-level relations, expected sign $\sigma_r$; estimand; population estimate and 95\% CI; lag bins or horizon $h$; sign status and its basis; estimator evidence (odds ratio, $p$-value, exposed events and patients, maxima over null replicates) \\
Edge (relation instance) & id; relation id; source node; target node; sign and sign status; population estimate and CI; pointers: (source reference, target reference, lag, exposure label, source window, target window) \\
Fast relation & id (F1-F4); status (admitted or abstained) with its reason; estimand; unit; population reference \\
Fast edge & source and target variable nodes; patient estimate with epoch-cluster bootstrap CI; patient-level $p$-value and its Holm-adjusted value; tested epochs; pointers: (epoch start and end, first and last beat row, epoch estimate with CI, per-epoch $p$-value and its BH-adjusted value, beats, breaths, epoch summaries such as PPV, SPV and RSA amplitude (App.~\ref{app:fast-epochs}) or mean SBP, MAP and HR) \\
\bottomrule
\end{tabular}
\end{table}

\subsection{Relation registry and sign status}
\label{app:graphs-status}

The registry holds the population relations that may be instantiated. On VitalDB it has 81 concept-level relations (App.~\ref{app:relations}) and 10 vital-level relations. The vital-level relations are those admitted by the typed pooled estimator (App.~\ref{app:benchmark}) among the 18 action-vital relations of App.~\ref{app:benchmark-30s}. The estimator was run at 30\,s on all 3{,}442 cases and at 2\,s on the 500-case subsample of App.~\ref{app:fast-spec}, calibrated against a single cross-patient transplant draw ($K=1$). Admission requires the calibrated CI to exclude 0 and a Benjamini-Yekutieli-adjusted $p\le0.05$ over the 18 relations \citep{benjamini2001control}; the stability condition of Eq.~\ref{eq:admission} is not applied. Each relation also stores the share of nine calibrators that admit it: the eight cross-patient transplants of App.~\ref{app:benchmark} (one of which is the admission draw) and the random-time stream. On MIMIC-IV-WDB the registry has the 6 association relations admitted in the 51-patient fit. The status rules below were fixed before the graphs were built.
\begin{itemize}[leftmargin=*,itemsep=1pt,topsep=2pt]
\item \emph{Vital-level relations.} \emph{Sign-stable} if admitted with the expected sign at one or more resolutions and never admitted with the opposite sign; \emph{sign-unresolved} if admitted with the opposite sign at either resolution.
\item \emph{Concept-level association and policy relations.} \emph{Sign-stable} if admitted by rule (i) (App.~\ref{app:relations-admission}) in both pooled half-record fits (H1 and H2: first and second halves of every record; App.~\ref{app:relations-admitted}) with the full-data sign. \emph{Sign-unresolved} if either half admits the opposite sign, or if the sign contradicts a pre-specified sign check. Otherwise \emph{unconfirmed}.
\item \emph{Concept-level response relations.} Never sign-stable, because the family fails its pre-specified sign checks (App.~\ref{app:relations-signs}). Sign-unresolved under the same two conditions; otherwise unconfirmed.
\item \emph{MIMIC-IV-WDB association relations.} The half-split rule, applied to the MIMIC-IV-WDB fits.
\end{itemize}
Table~\ref{tab:graph-registry} gives the counts. The two sign-unresolved vital-level relations are the responses of HR and MAP to remifentanil decreases, which are admitted with the sign opposite to pharmacology at 30\,s. The sign-unresolved concept-level relation is remifentanil$\uparrow\to$ hypotension, which failed its sign check (Table~\ref{tab:rel-signs}).

\begin{table}[!htbp]
\centering
\caption{Population relation registry by level, type and sign status.}
\label{tab:graph-registry}
\small
\begin{tabular}{@{}lllccc@{}}
\toprule
Cohort & Level & Type & Sign-stable & Sign-unresolved & Unconfirmed \\
\midrule
VitalDB & concept & association & 12 & 0 & 16 \\
 & concept & policy & 23 & 0 & 10 \\
 & concept & response & 0 & 1 & 19 \\
 & vital & policy & 3 & 0 & 0 \\
 & vital & response & 5 & 2 & 0 \\
MIMIC-IV-WDB & concept & association & 4 & 0 & 2 \\
\bottomrule
\end{tabular}
\end{table}

\subsection{Instantiation and evidence pointers}
\label{app:graphs-pointers}

A registry relation is instantiated in patient $p$ if and only if $p$ has at least one evidence pair. Each pair becomes a pointer. Pairs follow the exposure and lag structure of the estimator, with steps $\Delta=15$\,s:
\begin{itemize}[leftmargin=*,itemsep=1pt,topsep=2pt]
\item \emph{Association} $A\to B$: an episode of $B$ that starts at a valid onset at step $i$, together with an episode of $A$ that covers step $i-2$ (exposure ``state'') or a valid onset of $A$ at step $i'$ with $2\le i-i'\le 40$ (exposures ``onset 0-1\,min'', ``1-3\,min'' and ``3-10\,min'').
\item \emph{Policy} $A\to(a,d)$: a change of $(a,d)$ at step $i$, together with an episode of $A$ that covers step $i-1$ or a valid onset of $A$ at step $i'$ with $1\le i-i'\le 40$.
\item \emph{Response} $(a,d)\to B$: a change of $(a,d)$ at step $j$ and a valid onset of $B$ at $i$ with $5\le i-j\le 40$.
\item \emph{Vital-level response}: an intention-to-treat event of the relation's action and direction at time $\tau$ (no change of that action in the preceding 180\,s), with the vital observed in the baseline window $[\tau-30\,\mathrm{s},\tau)$ and in the outcome window $[\tau+h-15\,\mathrm{s},\tau+h+15\,\mathrm{s}]$; the lag is $h$.
\item \emph{Vital-level policy}: the same event set, with the vital observed in $[\tau-150\,\mathrm{s},\tau-120\,\mathrm{s}]$ and $[\tau-30\,\mathrm{s},\tau)$.
\end{itemize}
Node-level edges connect the endpoint nodes of the pointers: the tier node of each episode, the action node, or the variable node. One relation instance can therefore produce several node-level edges. For relations with $\hat\omega_r<0$ (7 association, 11 policy and 16 response relations), the pointers are exposed events that occurred despite the lower population hazard. Pointers record the events that instantiate a relation in the patient (its provenance). They are not evidence of the relation's direction or sign in that patient, and they do not make a relation patient-specific.

\textbf{Agreement with the estimator.} For each relation we count the patients with at least one pointer that carries the $\omega_r$ exposure. This count equals the estimator's number of contributing patients for 28 of 28 association relations, 20 of 20 response relations and 17 of 33 policy relations. For the other 16 policy relations the pointer rule counts 1-19 more patients, which is 147 of 23{,}998 patient-relation instances (0.6\%). The pointer rule does not apply the estimator's at-risk condition (recorded dose $>0$ at step $i-1$). It also counts source onsets in the onset lag bins, which are not part of the $\omega_r$ exposure (the source on at the source lag); 36.2\% (association) and 43.5\% (policy) of instances rest on such onset-bin pointers only. Of the 183 onset-bin coefficients of admitted association and policy relations, 61 have the sign opposite to $\omega_r$, and 23 of these have a CI that excludes 0.

\textbf{Pointer versus presence.} A presence rule instantiates a relation whenever the source has an episode or change anywhere in the record and the target has an onset or change anywhere. It would create 166{,}563 concept-level instances on VitalDB, and 96{,}309 of them (57.8\%) have at least one in-window evidence pair. Every pointer-based instance is also a presence-rule instance.

\textbf{Pointer verification.} On 100 random VitalDB cases we re-derived concept states and action events from the raw monitor records with an independent implementation of the same definitions. We then checked every pointer: the source is on or has an onset at the claimed step, the target onset is valid, the lag is inside the relation's window, the action is present at the claimed time, and the vital is observed in the claimed windows. All 12{,}346 pointers are valid. By construction, every relation edge has at least one pointer (191{,}490 of 191{,}490 node-level edges on VitalDB and 2{,}586 of 2{,}586 on MIMIC-IV-WDB). On 100 random graphs with beat-level content, all 7{,}594 fast-edge pointers have beat ranges inside their epochs and reproduce the stored epoch estimates. This includes all 2{,}053 PAT-SBP pointers in these graphs. All 6{,}589 beat episodes have valid beat ranges. An independent re-implementation of the fast-coupling tests on 40 cases reproduces the stored epoch amplitudes exactly. A recomputation of the PAT-SBP edge on 82 cases reproduces every instantiation decision under the real data and all three within-patient nulls.

\subsection{Correlational KG construction (CKG)}
\label{app:graphs-ckg}

CKG is a correlational construction: it applies lagged cross-correlation and Granger tests, as used for graphs from physiological time series, and Jaccard co-occurrence to concept series. It is not a reproduction of the time-delay-stability method of \citet{bashan2012network}, which tracks the stability of the cross-correlation delay over consecutive windows and tests it against surrogate data. CKG uses the same concept series and the same concept $\times$ tier nodes as our graphs, on the 15\,s grid. Unobserved steps count as off. A concept is active in a window when its probability is at least its activation threshold. On VitalDB the probabilities are the 0/1 concept states from monitor numerics; on MIMIC-IV-WDB they are the concept-head activations of rule-supervised seed 1. For every unordered pair of concepts $(A,B)$ in the same record, CKG applies three rules:
\begin{itemize}[leftmargin=*,itemsep=1pt,topsep=2pt]
\item \emph{Precedes/follows.} Compute the normalized cross-correlation of the mean-centered binary activation series of $A$ and $B$ at every lag $k$ with $|k|\le 120$ steps ($\pm30$\,min), and take the lag $k^\ast$ of the maximum. If the peak is at least 0.15 and $|k^\ast|\Delta\ge15$\,s, emit a PRECEDES edge from every tier node of the leading concept to every tier node of the lagging concept, and the mirror FOLLOWS edge.
\item \emph{Granger-precedes} \citep{granger1969investigating}. For each direction, run a bivariate Granger $F$-test on the concept probability series at lag orders $\{1,4,12\}$ steps (15\,s, 1\,min, 3\,min). The test needs at least 60 samples and non-zero variance. If the minimum $p$-value over the three lag orders is below 0.05, emit a GRANGER\_PRECEDES edge for every tier-node pair in that direction.
\item \emph{Co-occurrence.} If the Jaccard overlap of the two sets of active windows is at least 0.10, emit a symmetric pair of CO\_OCCURS edges for every tier-node pair.
\end{itemize}
CKG applies no multiplicity control, no negative-control calibration and no action or variable nodes, and it stores no edge pointers. Per VitalDB graph it emits 188.11 edges: 43.31 PRECEDES, 43.31 FOLLOWS, 72.39 GRANGER\_PRECEDES and 29.10 CO\_OCCURS. Removing mirror FOLLOWS edges and one edge of each CO\_OCCURS pair leaves 130.25. Per MIMIC-IV-WDB graph it emits 120.91 edges: 5.57, 5.57, 80.22 and 29.54, in the same order. The CKG temporal layer is the union of the precedes/follows and Granger-precedes rules. In Sec.~\ref{sec:exp-graphs}, CKG replication and false relations are counted as directed concept pairs of this layer among the 48 cross-vital ordered pairs.

\subsection{Graph size}
\label{app:graphs-size}

Table~\ref{tab:graph-size} gives per-patient means. Graphs with beat-level content are the 2{,}954 VitalDB cases that pass beat-level quality control (App.~\ref{app:data}). The other 488 graphs record their registry of fast relations and the reason they have no beat content. MIMIC-IV-WDB graphs have no action records, no beat layer and association relations only. Their concept$\times$tier nodes and evidence pointers are the activation runs of rule-supervised seed 1 (App.~\ref{app:concepts-model}), and their association relations are estimated on the same activations. Their concept episodes are numerous (median 1{,}128 per patient) because concept-head activations switch often at the 15\,s resolution.

\begin{table}[!htbp]
\centering
\caption{Per-patient graph size (means). VitalDB: all 3{,}442 cases and the 2{,}954 with beat-level content; MIMIC-IV-WDB: 167 records. Sign status: sign-stable / sign-unresolved / unconfirmed. n/a: not represented.}
\label{tab:graph-size}
\scriptsize
\begin{tabular}{@{}lccc@{}}
\toprule
 & VitalDB, all & VitalDB, beat content & MIMIC-IV-WDB \\
\midrule
Concept $\times$ tier nodes & 13.12 & 13.48 & 14.05 \\
Beat-concept $\times$ tier nodes & 6.25 & 7.29 & n/a \\
Action nodes & 2.42 & 2.46 & n/a \\
Variable nodes & 6.35 & 6.98 & n/a \\
Nodes, total & 28.14 & 30.20 & 14.05 \\
\addlinespace
Concept episodes & 43.31 & 44.54 & 2{,}073.14 \\
Beat episodes & 60.94 & 71.00 & n/a \\
\addlinespace
Relation instances & 35.74 & 37.24 & 2.14 \\
\quad sign status & 20.44 / 2.00 / 13.31 & 21.37 / 2.11 / 13.76 & 1.21 / 0.00 / 0.93 \\
\quad policy / response / assoc.\ & 14.80 / 10.84 / 10.11 & 15.47 / 11.28 / 10.49 & 0 / 0 / 2.14 \\
Relation edges (node level) & 55.63 & 57.91 & 15.49 \\
\quad policy / response / assoc.\ & 21.24 / 13.42 / 20.97 & 22.24 / 13.97 / 21.69 & 0 / 0 / 15.49 \\
Fast edges (F1 / F2 / F4) & 0.84 / 0.83 / 0.74 & 0.98 / 0.97 / 0.86 & n/a \\
Edges, total & 58.05 & 60.72 & 15.49 \\
\addlinespace
Relation-edge pointers & 125.75 & 131.99 & 395.27 \\
Fast-edge epoch pointers & 67.49 & 78.63 & n/a \\
\addlinespace
CKG nodes / edges & 13.12 / 188.11 & 13.48 / 195.03 & 14.05 / 120.91 \\
\bottomrule
\end{tabular}
\end{table}

\subsection{False relation instances under null streams}
\label{app:graphs-fe}

\textbf{Our graphs.} For each of the 38 null replicates of App.~\ref{app:relations-admission}, we recompute population admission with the full rule, where rule (ii) uses the other 37 replicates. We then instantiate every null-admitted relation in every graph with the pointer rule, applied to the null-mapped series. Every such instance is false, and $\mathrm{FE}_p$ counts them over all 176 candidate concept-level relations. We report $\overline{\mathrm{FE}}$, the mean of $\mathrm{FE}_p$ over patients (on MIMIC-IV-WDB, the 51 patients of the relation fit) and over the 19 replicates of each null kind. The 95\% CI is a percentile bootstrap over the 19 replicate means. A two-stage bootstrap over replicates and then patients gives [0.015, 0.106] (transplant) and [0.032, 0.190] (shift) on VitalDB.

On VitalDB, 12 of the 19 transplant replicates admit no null relation. The other seven give 0.015-0.395 false instances per graph, for a mean of 0.057. Under the shift null, 10 of 19 replicates give zero and the other nine give 0.011-0.589, for a mean of 0.103. An admitted null relation is instantiated in 1-56\% of VitalDB graphs and in 90-100\% of MIMIC-IV-WDB graphs, whose records are longer and have more episodes. Of all (graph, replicate) pairs, 5.5\% (transplant) and 10.1\% (shift) contain a false instance on VitalDB, and 9.5\% and 25.4\% on MIMIC-IV-WDB. False node-level edges per graph are 0.081 and 0.154 on VitalDB and 0.583 and 1.903 on MIMIC-IV-WDB. A real VitalDB graph has 27.98 concept-level relation instances, so false instances amount to 0.20\% and 0.37\% of them. Instantiating the same null-admitted relations by presence instead of by pointers would give 0.20 and 0.25 false instances per graph.

\textbf{CKG and per-patient baselines.} These methods test concept pairs within each patient. For CKG and pairwise Granger, all eight concept series are transplanted or circularly shifted, and each null-mapped series is tested as the source against the real series of every concept on another vital. All 48 cross-vital ordered pairs are then null pairs, and $\mathrm{FE}_p$ counts those with an edge. PCMCI+ fits one multivariate model per case, so it uses a block null: the HR and EtCO$_2$ concepts are replaced by their null-mapped series, the MAP and respiratory-rate concepts stay real, and $\mathrm{FE}_p$ counts edges among the 32 cross-group ordered pairs (BH control remains over the 48 pairs). Table~\ref{tab:fe-baselines} reports the first replicate ($k=1$) on all patients. On replicates $k=1,\dots,5$ of 300 random VitalDB cases, CKG gives 12.63 [12.25, 13.03] (replicate means 12.34-12.84) under the transplant null and 10.18 [9.66, 10.72] (9.96-10.34) under the shift null. On VitalDB, 26.7\% (transplant) and 21.7\% (shift) of the null pairs with both concepts observed receive a CKG edge, and 99.7\% and 97.4\% of graphs contain at least one false CKG edge. These per-patient designs differ from ours, which admits relations at the population level over 176 candidate concept-level relations with 38 null replicates; the per-graph counts therefore compare constructions, not a matched test.

\begin{table}[!htbp]
\centering
\caption{False edges per graph for per-patient methods (App.~\ref{app:relations-replication}; first replicate; 95\% CI from a patient bootstrap). Null pairs per patient: 48 (CKG, pairwise Granger) and 32 (PCMCI+, block null). PCMCI+ on VitalDB uses 400 cases.}
\label{tab:fe-baselines}
\scriptsize
\begin{tabular}{@{}lcccc@{}}
\toprule
 & \multicolumn{2}{c}{VitalDB ($3{,}442$)} & \multicolumn{2}{c}{MIMIC-IV-WDB ($51$)} \\
\cmidrule(lr){2-3}\cmidrule(l){4-5}
Method & transplant & shift & transplant & shift \\
\midrule
CKG cross-correlation & 7.45 & 6.48 & 1.04 & 1.16 \\
CKG Granger-precedes & 7.94 & 5.54 & 10.69 & 8.94 \\
CKG (union) & 12.40 [12.23, 12.58] & 10.07 [9.86, 10.27] & 11.10 [9.75, 12.45] & 9.53 [8.21, 10.77] \\
Pairwise Granger + BH & 2.78 & 1.86 & 0.94 & 0.24 \\
PCMCI+ & 0.79 & 0.85 & 0.14 & 0.10 \\
\bottomrule
\end{tabular}
\end{table}

\textbf{Vital-level relations.} In a null-versus-null test, a cross-patient null draw is used as the input stream and another cross-patient draw as the calibrator. Across the 56 ordered pairs of draws at each resolution, no vital-level relation is admitted. With a random-time null stream as input and each of the eight transplants as calibrator, 30\,s admission yields EtCO$_2$ rising $\to$ ventilator rate$\uparrow$ for 8 of 8 calibrators, MAP rising $\to$ remifentanil$\uparrow$ for 2 of 8, and MAP falling $\to$ propofol$\downarrow$ for 1 of 8. Instantiating these admissions would place 1.15 false vital-level edges in the average graph. The first relation is instantiated in 89\% of graphs. At 2\,s no relation is admitted under the random-time input. MAP rising $\to$ remifentanil$\uparrow$ is also admitted on the real data at 30\,s, with all nine calibrators.

\subsection{Fast-coupling layer}
\label{app:graphs-fast}

The fast-coupling registry is the same in every graph (App.~\ref{app:fast} gives the beat pipeline and the population analyses):
\begin{itemize}[leftmargin=*,itemsep=1pt,topsep=2pt]
\item \emph{F1, respiration $\to$ beat SBP.} Harmonic amplitude of 15-beat-detrended beat SBP on the respiratory phase. Admitted for per-patient testing.
\item \emph{F2, respiration $\to$ RRI} (respiratory sinus arrhythmia). The same statistic on $\mathrm{RRI}_n$. Admitted for per-patient testing.
\item \emph{F3, beat SBP $\to$ RRI} (baroreflex). Abstained: the beat-level slope has the expected positive sign in 63.3\% [61.5, 64.9] of 2{,}944 cases, below the positive-control bar of 90\%. The reason is stored in the graph.
\item \emph{F4, PAT $\leftrightarrow$ beat SBP.} Epoch slope $b_{p,e}$ of detrended $\mathrm{PAT}_n$ on detrended $\mathrm{SBP}_n$. Admitted: the case slope is negative in 94.3\% [93.4, 95.1] of 2{,}884 cases. The edge is stored as undirected: it asserts an association, not a causal direction. Absolute PAT includes a constant per-case device delay, so only within-case variation is used.
\end{itemize}

\textbf{Per-patient test.} The epochs $e$ are the usable 5-min epochs of App.~\ref{app:fast-epochs} with at least 60 beats. F1 and F2 use epochs with controlled ventilation; F4 uses all usable epochs. For each epoch, a $p$-value is computed from 499 surrogates. For F1 and F2 the surrogates add an independent uniform phase offset to each breath. For F4 they circularly shift SBP against PAT by 30 to $n_e-30$ beats, where $n_e$ is the number of beats in the epoch; this test is one-sided, for a negative slope. The patient statistic is the mean over tested epochs, and its $p$-value is computed against the corresponding means of the surrogate draws. A patient needs at least three tested epochs. Holm control at family-wise level 0.05 \citep{holm1979simple} is applied over the patient's eligible relations among F1, F2 and F4. Pointers are the epochs whose per-epoch $p$-value passes BH control at level 0.05 within patient and relation. An edge is instantiated if and only if Holm control passes and the edge has at least one pointer. The edge stores the patient mean with an epoch-cluster bootstrap CI as a descriptive estimate. Beat-level concept nodes (irregular R-R intervals, low heart-rate variability, high pulse-pressure variation and prolonged PAT; App.~\ref{app:concepts-beat}) add 7.29 concept $\times$ tier nodes and 71.00 beat episodes per graph with beat content. Each beat episode points to its first and last beat rows.

\textbf{Instantiation.} Table~\ref{tab:fast-nulls} gives instantiation rates among eligible patients. Graphs with beat content have 2.81 [2.80, 2.83] fast edges on average (8{,}312 edges in 2{,}954 graphs, all with pointers); the 2{,}937 eligible graphs have 2.83 [2.81, 2.84], and 99.7\% [99.5, 99.9] of them have at least one. F4 is instantiated in 2{,}554 patients. Of the other eligible patients, 303 fail Holm control and 20 pass it without a pointer. The median patient mean slope is $-0.613$\,ms/mmHg over instantiated edges (interquartile range $-0.838$ to $-0.435$) and $-0.571$ over all eligible patients. Of the 62{,}304 F4 pointers, 95.1\% have an epoch CI below 0 and none has a positive slope.

\begin{table}[!htbp]
\centering
\caption{Fast-edge instantiation (share of eligible patients [95\% case-bootstrap CI]) and unadjusted patient $p\le0.05$, under real data and null streams. Eligible: 2{,}904 (F1, F2) and 2{,}877 (F4) patients; N3: 600. N3 CI: exact binomial.}
\label{tab:fast-nulls}
\scriptsize
\setlength{\tabcolsep}{3pt}
\begin{tabular}{@{}lcccccc@{}}
\toprule
 & \multicolumn{2}{c}{F1 resp.\ $\to$ SBP} & \multicolumn{2}{c}{F2 resp.\ $\to$ RRI} & \multicolumn{2}{c}{F4 PAT $\leftrightarrow$ SBP} \\
\cmidrule(lr){2-3}\cmidrule(lr){4-5}\cmidrule(l){6-7}
Input & instantiated & $p\le0.05$ & instantiated & $p\le0.05$ & instantiated & $p\le0.05$ \\
\midrule
Real data & 100.0\% [100.0, 100.0] & 100.0\% & 98.3\% [97.8, 98.8] & 98.7\% & 88.8\% [87.7, 90.0] & 89.5\% \\
N1 shift & 97.3\% [96.8, 97.9] & 98.1\% & 72.8\% [71.1, 74.3] & 77.4\% & 0.1\% [0.0, 0.2] & 1.9\% \\
N2 surrogate & 0.1\% [0.0, 0.2] & 4.9\% & 0.4\% [0.2, 0.6] & 5.9\% & 0.0\% [0.0, 0.1] & 2.6\% \\
N-b across-epoch & n/a & n/a & n/a & n/a & 1.0\% [0.7, 1.4] & 6.8\% \\
N3 cross-case & 11.5\% [9.1, 14.3] & 14.2\% & 3.2\% [1.9, 4.9] & 4.7\% & n/a & n/a \\
\bottomrule
\end{tabular}
\end{table}

\textbf{Null streams and their interpretation.} The same test, Holm control and pointer rule are applied to null data, with new surrogate draws.
\begin{itemize}[leftmargin=*,itemsep=1pt,topsep=2pt]
\item \emph{N1, within-patient circular shift.} In each epoch the target series is rolled against its driver by a number of beats drawn uniformly from $[\lceil n_e/4\rceil,\lfloor 3n_e/4\rfloor]$. Controlled ventilation is nearly periodic, so rolling the target rotates the respiratory phase by a nearly constant angle and leaves the phase locking intact. The N1 rates for F1 and F2 therefore measure this property of shift nulls under a periodic driver, not the false-edge rate of the rule; this reading was stated before the analysis. F4 has no periodic driver, and its N1 rate is 0.1\%.
\item \emph{N2, surrogate draw.} The data are replaced by one independent draw from the test's own surrogate distribution. N2 is exchangeable with the surrogates and serves as an implementation check. Unadjusted patient $p\le0.05$ occurs in 4.9\% (F1), 5.9\% (F2) and 2.6\% (F4) of patients. Under N2, 0.5\% [0.3, 0.8] of eligible graphs contain a fast edge.
\item \emph{N-b, across-epoch pairing} (F4 only). The PAT of one epoch is paired with the SBP of another epoch of the same patient. N-b is not exchangeable with the surrogates. It gives 1.0\% false F4 edges, with unadjusted $p\le0.05$ in 6.8\% of patients.
\item \emph{N3, cross-case pairing} (F1 and F2; post hoc; 600 cases; Holm control over F1 and F2). The respiratory phase of one patient is paired with the beat series of another. It gives 11.5\% false F1 edges and 3.2\% false F2 edges. False F1 edges concentrate in pairs whose breaths-per-beat ratios match within 5\% (33.7\% of 86 pairs). The rate is 3.4\% of 294 pairs whose ratios differ by at least 20\%.
\end{itemize}
Across N1, N2 and N-b, the F4 false-edge rate is at most 1\%.

\subsection{Case studies}
\label{app:graphs-case}

\textbf{Case shown in Figure~\ref{fig:casestudy}.} The case was selected by a pre-specified rule: the case with the most prolonged-PAT episode time among cases with an instantiated PAT-SBP edge. It is case 2706. Its graph has 13 concept$\times$tier nodes with 58 episodes, 9 beat-concept$\times$tier nodes with 125 beat episodes, 2 action nodes (remifentanil target and ventilator rate) and 7 variable nodes (3 vital, 4 beat). Its 41 relation instances are realized as 73 node-level edges with 240 pointers. The fast layer has 3 edges with 190 epoch pointers. F1 has a patient amplitude of 3.05\,mmHg [2.84, 3.28] over 82 epochs (81 pointers), F2 3.26\,ms [2.81, 3.75] (65 pointers), and F4 a patient slope of $-0.45$\,ms/mmHg [$-0.81$, $-0.12$] over 84 epochs (44 pointers); each has a patient-level $p$-value of 0.002, the smallest attainable with 499 surrogates, and a Holm-adjusted value of 0.006. The longest prolonged-PAT episode spans 9{,}150-11{,}235\,s (persistent tier; beat rows 10{,}817-13{,}019), with a median PAT 34.4\,ms above the case baseline. F3 is abstained with its stored reason.

\begin{figure}[!htbp]
\centering
\includegraphics[width=\linewidth]{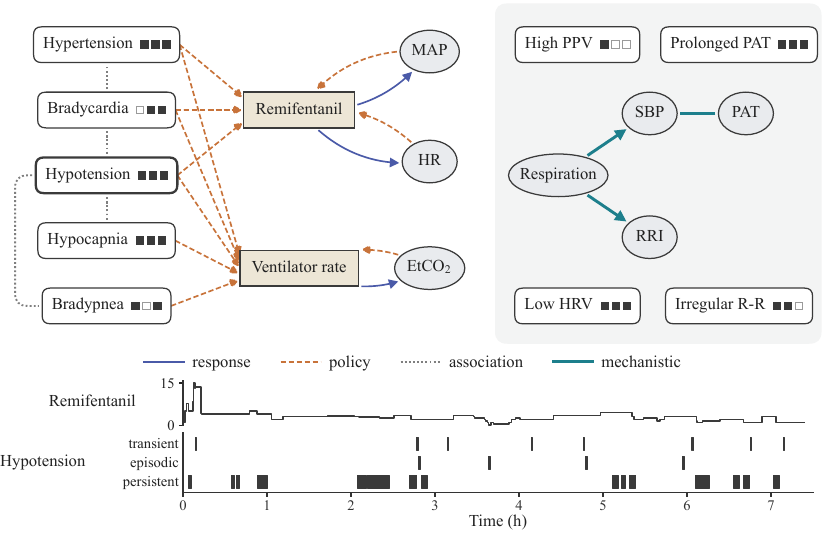}
\caption{Patient graph of VitalDB case 2706. Boxes: concepts (filled squares: tiers present, transient to persistent); shaded boxes: actions; ellipses: variables; gray region: beat level. Sign-stable relations are drawn aggregated over tiers and action direction. Bottom: remifentanil target (ng\,mL$^{-1}$) and the hypotension episodes (bold node) by tier.}
\label{fig:casestudy}
\end{figure}

\textbf{A second case with pointer detail.} A second case is the first, by case number, of three cases drawn at random, with a seed fixed before the graphs were built, from the 3{,}258 VitalDB cases that have at least one sign-stable policy relation, one sign-stable vital-level response relation and one association relation. It is case 52. Its graph has the following content:
\begin{itemize}[leftmargin=*,itemsep=1pt,topsep=2pt]
\item nine concept $\times$ tier nodes with 28 episodes (tachycardia 11, hypotension 10, hypocapnia 3, bradypnea 3, hypertension 1);
\item ten beat-concept $\times$ tier nodes with 69 beat episodes;
\item three action nodes (propofol target, 28 actions; remifentanil target, 29 actions; ventilator rate, 7 actions) and seven variable nodes;
\item 32 relation instances, realized as 40 node-level edges with 151 pointers, and three fast edges.
\end{itemize}
\textbf{Sign-stable relations.} All eight sign-stable concept-level relations are policy relations. Tachycardia $\to$ remifentanil$\uparrow$ (population $\hat\omega_r=0.55$ [0.45, 0.64]) has 17 pointers. The first links a tachycardia episode at 13{,}020-13{,}065\,s to a remifentanil target increase from 2.5 to 3.5\,ng\,mL$^{-1}$ at 13{,}414\,s (lag 375\,s, exposure ``onset 3-10\,min''). Eight vital-level relations are sign-stable: three policy relations (MAP rising and HR rising $\to$ remifentanil$\uparrow$; EtCO$_2$ rising $\to$ ventilator rate$\uparrow$) and five responses (EtCO$_2$ to ventilator-rate increases and decreases; MAP to propofol increases; HR and MAP to remifentanil increases). For example, an increase of the ventilator rate from 13 to 15\,/min at 8{,}867\,s points to the EtCO$_2$ window 60\,s later.

\textbf{Sign-unresolved and unconfirmed relations.} The responses of HR and MAP to remifentanil decreases are sign-unresolved, with four pointers each; the first points to a target decrease from 7 to 2\,ng\,mL$^{-1}$ at 810\,s. The concept-level response remifentanil$\uparrow\to$ hypotension is also sign-unresolved. The remaining 13 instances are unconfirmed: seven concept-level responses, five policy relations and one association.

\textbf{Fast couplings.} F1 has a patient amplitude of 4.32\,mmHg [3.98, 4.68] over 44 controlled-ventilation epochs, all of them pointers. F2 has 8.65\,ms [7.86, 9.56], also with 44 pointers. F4 has a patient slope of $-0.59$\,ms/mmHg [$-0.72$, $-0.47$] over 48 epochs, with 32 pointers. Each has a patient-level $p$-value of 0.002, the smallest attainable with 499 surrogates, and a Holm-adjusted value of 0.006. F3 is abstained with its stored reason.

\section{Pre-specified protocols and reproducibility}
\label{app:prereg}

\paragraph{Protocols.}
Every confirmatory analysis was specified in a written protocol that fixed the cohort and quality rules, the estimands and estimators, the null constructions, the decision rules and, where applicable, predictions. Each protocol was written and hashed (SHA-256) before its confirmatory run; in the appendices, \emph{registered} means written and hashed in this way. No external registry was used; each protocol states when it was written. Every confirmatory output records the protocol hash, and most analyses check the hash before running; the benchmark protocols also archive the registered code. Each protocol lists what had been seen before it was hashed: functional checks on small case subsets (for the concept-level relations, exploratory pilots on 250 and 400 cases that shaped the design, with all cases used in the confirmatory run), data-availability counts and the outcomes of analyses registered earlier in Table~\ref{tab:prereg}. Predictions are reported whatever their outcome.

\begin{table}[!t]
\centering
\caption{Confirmatory analyses. Hash: first eight hexadecimal digits of the protocol's SHA-256.}
\label{tab:prereg}
\scriptsize
\setlength{\tabcolsep}{4pt}
\begin{tabular}{@{}p{2.6cm}p{1.3cm}p{3.6cm}p{5.6cm}@{}}
\toprule
\raggedright Analysis (appendix) & \raggedright Hash & \raggedright Fixed in advance & \raggedright Registered outcome \tabularnewline
\midrule
\raggedright ClosedLoopBench: 29 relations, 2~s, 3,442 cases (C) & \raggedright \texttt{418281f1} & \raggedright grid, clock; relation signs, horizons, domains; 12 estimators; 8 cross-patient derangements and a random-time arm; admission rule; 6 predictions & \raggedright held: ${\le}\,1$ false admission per held-out transplant draw; ${\ge}\,4$ uncalibrated null discoveries (3 estimators); gas-exchange relations recovered more often than drug relations (8 of 12 estimators); ${\ge}\,1$ wrong-signed drug admission. Not met: ${\ge}\,80$\% of gas-exchange admissions correct (76\%); FiO$_2{\to}$SpO$_2$ admitted by an event design \tabularnewline
\addlinespace[1pt]
\raggedright ClosedLoopBench: 18 relations, 30~s (3,442 cases) and 2~s (500-case subsample) (C) & \raggedright \texttt{5e578794} & \raggedright calibration with $K = 8$ draws, stability requirement, admission rule; 5 predictions; patient-specificity test & \raggedright held: ${\le}\,1$ false admission per held-out transplant draw; ${\ge}\,4$ of 18 uncalibrated null discoveries (4 estimators, 30~s); wrong-signed remifentanil-decrease admission by VARX, the typed pooled estimator (30~s). Not met: ${\ge}\,2$ of 6 policy relations with none wrong (VARX: 2 correct, 1 wrong); fewer wrong-signed responses for the sequential-trial contrast than for VARX (2 each at 30~s). Patient-specificity rule met by 0 of 11 relations (shrunk better in 0/33 rows at each resolution) \tabularnewline
\addlinespace[1pt]
\raggedright Slow-response specificity (F) & \raggedright \texttt{482db32f} & \raggedright early/late split; shrunk-versus-pooled prediction rule; 11 relations & \raggedright 0/11 relations pass \tabularnewline
\addlinespace[1pt]
\raggedright Clinician-in-the-loop simulator (D) & \raggedright \texttt{7b75ce95}\newline +\texttt{e9e25964} & \raggedright policies, drugs, control pools, horizons; 14 hypotheses (Table~\ref{tab:sim-hyp}); amendment: descriptive phase-grid analysis & \raggedright 9 of 14 held, including: eligible pool unbiased and future-inaction pool biased for an inert drug; eligible pool biased under unrecorded information. Not met: L3 (reversed-policy sign held at 1 of 3 horizons), L3b (standard policy, decreases, at 110~s), L4 (eligible pool biased for an active drug at 10~s), E1 (kink with eligible pool, $+0.32$ [0.15, 0.48] at 10~s) and B1 (beat-level bias range 0.063 in 1 of 10 cells) \tabularnewline
\addlinespace[1pt]
\raggedright Concept-level typed relations (E) & \raggedright \texttt{43947eaf} & \raggedright concepts, onsets, 3 families; pooled hazard with patient fixed effects; BH plus 38 null replicates; sign checks; split-half replication, held-out likelihood & \raggedright policy sign checks 5/5 correct; response sign checks 0/6 correct, 2 wrong \tabularnewline
\addlinespace[1pt]
\raggedright Patient graphs, concept layer (H) & \raggedright \texttt{c767688d} & \raggedright pointer rule; sign status; false instances under 38 null replicates; prediction ${<}\,0.5$ per graph & \raggedright held: 0.057 (transplant) and 0.103 (shift) false instances per graph \tabularnewline
\addlinespace[1pt]
\raggedright Beat concepts and fast couplings in graphs (F, H) & \raggedright \texttt{eab2d583}\newline +\texttt{20f873f5} & \raggedright beat concepts, cohort thresholds; fast-edge registry, per-patient tests; null graphs; addendum: PAT-SBP edge, \emph{prolonged PAT} & \raggedright respiration$\to$SBP in 100\% and RSA in 98.3\% of eligible cases; baroreflex abstained; circular-shift null not valid for ventilator-locked couplings; PAT-SBP in 88.8\% of 2,877 eligible cases, null false-edge rate ${\le}\,1\%$ \tabularnewline
\addlinespace[1pt]
\raggedright Fast couplings at scale (F) & \raggedright \texttt{af14df06} & \raggedright gates; epoch statistics; baroreflex bar (CI lower bound ${\ge}\,0.90$); 7 construct-validity directions; monitor latency; specificity rule & \raggedright baroreflex bar not met (63.3\% [61.5, 64.9]); 4 of the 6 counted construct-validity items hold (CV7 not counted); specificity rule met for PPV, RSA, sequence baroreflex sensitivity and RMSSD \tabularnewline
\addlinespace[1pt]
\raggedright PAT analyses (F) & \raggedright \texttt{230b57c5} & \raggedright PAT source, gates; within-case nulls; bar for the slope sign; specificity rule & \raggedright bar met (94.3\% [93.4, 95.1] of 2,884 cases); specificity rule met ($r_{\mathrm{EL}} = 0.67$ [0.63, 0.70]) \tabularnewline
\addlinespace[1pt]
\raggedright Fast-coupling specificity, MIMIC-IV-WDB (50) and VitalDB (24) (F) & \raggedright \texttt{6d88096d} & \raggedright 5 beat-level statistics; split-half prediction rule; signal-property surrogates & \raggedright rule met in 0 of 10 statistic-cohort cells \tabularnewline
\addlinespace[1pt]
\raggedright Concept heads, six models (G) & \raggedright \texttt{ce7ed6fe} & \raggedright models, splits, reference, thresholds; hypotheses for PPG tachycardia, bradycardia, tachypnea and bradypnea; faithfulness across seeds & \raggedright mean AUROC 0.962 (monitor-supervised) vs.\ 0.865 (rule-supervised); lower error and more rule errors corrected for all 4; lower error than the rule-supervised head at its reference-calibrated threshold not shown for bradycardia \tabularnewline
\addlinespace[1pt]
\raggedright Counterfactual faithfulness (G) & \raggedright \texttt{6037da73} & \raggedright edits, baselines and pass criteria of 3 tests & \raggedright counterfactual test (T1) 5/7 concepts (randomized head and re-initialized model 0/7); evidence pointers (T2) 3/7; modality masking (T3) 4/7 \tabularnewline
\bottomrule
\end{tabular}
\end{table}

\paragraph{Deviations.}
The protocols' deviation logs record the following. (i)~Unit tests and reporting scripts were added after registration; no registered computation changed. (ii)~A construct-validity slope (CV3, baroreflex sensitivity against propofol effect-site concentration; App.~\ref{app:fast-cv}) was dominated by one epoch with an implausible concentration (3,501~$\mu$g\,mL$^{-1}$); a sensitivity analysis without it is reported as a deviation. (iii)~Faithfulness runs for two additional monitor-supervised seeds were added after registration with unchanged criteria and are labeled exploratory. (iv)~When the PAT-SBP edge was added, the Holm family of per-patient fast-edge tests was amended to include it. (v)~The 90\% positive-control bar for the PAT-SBP sign was registered after two exploratory results were known: negative PAT-SBP slopes in 56.5\% of cases on the beats retained by a fixed 80-600~ms foot-search window, and in 8 of 9 cases with delay-aware matching; no full-cohort delay-aware outcome had been computed. (vi)~Inference of five of the six concept models started before the concept-head protocol was hashed; it produced probabilities only, and no metric was computed before hashing.

\paragraph{Analyses outside the protocols.}
The main analyses that were not pre-specified are the admissions of the typed pooled estimator that populate the vital-level graph layer; the maintenance-phase split of propofol responses (App.~\ref{app:benchmark}); the PPG-amplitude, PAT-level and modal-lag adjustments of the PAT analyses (App.~\ref{app:fast}) and the cross-case null for fast edges (App.~\ref{app:graphs-fast}); balanced accuracy of the concept heads (App.~\ref{app:grounding}); and the synthetic calibration study (App.~\ref{app:supp-synthetic}). The recovery of ventilation$\to$EtCO$_2$ relations by the design-based estimators is an output of the registered benchmark run but was not a registered prediction.

\paragraph{Compute.}
Encoder pre-training used one NVIDIA A100 GPU. Native-rate extraction of the beat, breath, 1-s and PAT tables ran on cloud CPU instances (about 12 CPU-hours for the first three). All other computation ran on an Apple M4 Max laptop: the ClosedLoopBench run on 3,442 cases took 48~min with four worker processes, epoch extraction for the fast couplings 18~min, the PAT analyses 2~min, construction of 3,442 patient graphs 11~min with eight processes, and the synthetic study 24~min with 13 processes. Concept-model inference ran on the laptop GPU.

\paragraph{Software.}
Python 3.11 with PyTorch 2.12, NumPy 2.4, SciPy 1.17, pandas 3.0, statsmodels 0.14, scikit-learn 1.9, tigramite 5.2 (PCMCI+), lingam 1.13 (VAR-LiNGAM), wfdb 4.3 and vitaldb 1.7.2. Bootstrap confidence intervals are percentile intervals \citep{efron1994introduction} with fixed seeds.

\section{Supplementary results}
\label{app:supp}

\subsection{Synthetic calibration of edge tests}
\label{app:supp-synthetic}

On synthetic concept series with known lagged graphs, this study checks whether per-patient edge tests keep false discoveries near the nominal level under serial dependence and shared drivers, and how CKG behaves. It was not pre-specified (App.~\ref{app:prereg}).

\paragraph{Design.}
Each synthetic patient has seven concept series, one per concept head (App.~\ref{app:concepts}), of $T \in \{240, 960, 2400\}$ steps (15-s stride, 1-10~h). Four generators are used. \emph{Global null}: independent autoregressive series with persistence $\phi_{\mathrm{AR}} \in \{0, 0.5, 0.9, 0.97\}$. \emph{Sparse}: a stable vector autoregression with the same persistence and 7 random directed edges at lags $1$ to $\ell_{\max} = 4$, with coefficients of random sign and magnitude $\sqrt{1-\phi_{\mathrm{AR}}^2}$ times a draw from $\mathcal{U}(0.3, 0.6)$. \emph{Confounded}: independent autoregressive series plus one slow shared driver (autoregressive coefficient 0.99) loaded on every concept with its own delay of 0-4 steps; no direct edge exists. \emph{Closed-loop physiology}: HR, MAP and respiratory rate simulated under drug step schedules and mapped to concepts; the truth is the reduced-form lagged graph between different vitals. Latent series are observed either as probabilities (a logistic readout with noise) or as binary activations (thresholding). Every setting has 50 patients (20 for PCMCI+). The 54 non-confounded settings are 24 global-null, 24 sparse and 6 physiology settings; 24 settings are confounded. CKG applies the rules of App.~\ref{app:graphs-ckg} to the observed series on their original scale, with activation threshold 0.5 and a Granger $p$-value threshold equal to the false-discovery-rate (FDR) level $\alpha$; we score its Granger-precedes edges alone and its temporal layer (the union of precedes/follows and Granger-precedes edges). The other methods analyze probability readouts on the log-odds scale. Only lagged directed edges between distinct concepts are scored; contemporaneous links of PCMCI+ and VAR-LiNGAM are ignored. A setting violates FDR control at level $\alpha = 0.1$ when the mean false-discovery proportion minus two standard errors exceeds $\alpha$.

\paragraph{Circular-shift tests.}
A shift test compares a dependence statistic with its values after time-shifting one series against the other; \citet{harris2020shift} obtains a conservative test from non-circular shifts, and \citet{liang2025false} bound the false-positive rate of $p$-values computed over shifts of both series (circular shifts for whole-series statistics) when the series are stationary and independent; the bounds exceed the nominal level by a factor or by error terms that depend on mixing, and the circular-shift result also adds a margin to the shifted statistics and requires a stable statistic. Surrogate-based nulls with FDR control are standard in network inference \citep{kramer2009network,novelli2019large}. Our lattice shift test is a surrogate procedure related to these tests, but their guarantees do not cover it; we assess its calibration by the simulations below, including settings that violate its assumptions. The tested statistic is a conditional Granger statistic: the partial $R^2$ of lags $1$ to $\ell_{\max}$ of the source's AR($\ell_{\max}$) innovations, given lags $1$ to $\ell_{\max}$ of the six other concepts, target included. Only the source innovations are shifted, by multiples of $s = \max(\ell_{\max}+1, \lceil (1+r)/(1-r) \rceil)$, where $r$ is their lag-1 autocorrelation clipped to $[0, 0.995]$; for the innovations of every synthetic series, $s = \ell_{\max}+1 = 5$. Prewhitening leaves $T - \ell_{\max}$ steps; when $s$ does not divide this length, the earliest steps are dropped, leaving $T' = s\lfloor (T-\ell_{\max})/s \rfloor$ steps on which the observed and shifted statistics are computed. The shifts $ks$, $k = 1, \dots, T'/s-1$, and the identity form a cyclic group of $T'/s$ rotations (47, 191 and 479 at $T = 240$, 960 and 2400), all of which are used; the $p$-value is the fraction of them whose statistic is at least the observed one, so it cannot fall below $s/T'$ (0.021 at $T = 240$). This $p$-value would be exact if the source innovations were independent of the other concepts and had the same distribution under every rotation; rotation joins the end of the series to its start, so any serial dependence left in the innovations makes this approximate. Rotation also removes the source's alignment with every concept, not only the target, so the surrogates approximate the null of no direct edge only when conditioning on the other concepts' past blocks indirect paths and no unrecorded driver is shared (see (iii) and (iv) below). The circular-shift nulls for concept relations likewise shift by multiples of a step $s_p$ of at least the maximum lag plus one (App.~\ref{app:relations-admission}). Decisions use Benjamini-Hochberg (BH; \citealp{benjamini1995controlling}) or Benjamini-Yekutieli (BY; \citealp{benjamini2001control}) control over the 42 ordered pairs of a patient.

\begin{table}[!htbp]
\centering
\caption{Synthetic edge tests at FDR level $\alpha = 0.1$. FDR violations over the 54 non-confounded settings; global-null family-wise error rate (FWER) averaged over the 24 null settings; false edges per graph averaged over the 24 shared-driver settings.}
\label{tab:supp-synthetic}
\footnotesize
\begin{tabular}{@{}lccc@{}}
\toprule
Method & FDR violations & Global-null & Shared driver: \\
 & (of 54) & FWER & false edges per graph \\
\midrule
CKG (union) & 54 & 1.00 & 35.8 \\
CKG Granger-precedes & 54 & 1.00 & 32.8 \\
PCMCI+, per-link level & 43 & 0.94 & 11.5 \\
PCMCI+, BH over 42 pairs & 3 & 0.10 & 1.95 \\
VAR-LiNGAM & 41 & 0.34 & 14.6 \\
Conditional Granger $F$-test, BY & 9 & 0.06 & 4.81 \\
Lattice shift test, BH & 3 & 0.03 & 2.54 \\
Lattice shift test, BY & 0 & 0.00 & 0.79 \\
Lattice shift test, BH, driver recorded & n/a & n/a & 0.41 \\
\bottomrule
\end{tabular}
\end{table}

\paragraph{Results.}
Table~\ref{tab:supp-synthetic} summarizes the comparison. The baselines receive the same series: PCMCI+ \citep{runge2020discovering} at a per-link level $\alpha$, or with BH over the 42 pairs applied to each pair's Bonferroni-adjusted minimum $p$-value over lags; VAR-LiNGAM \citep{hyvarinen2010estimation}, with an edge for any lagged coefficient above 0.01 in absolute value; and conditional Granger $F$-tests \citep{granger1969investigating} from a vector autoregression of order $\ell_{\max}$ on all seven series.
(i)~Lattice $p$-values are close to nominal under independence for every persistence and readout. In a separate simulation of the global null (108 cells: three statistics, four persistence values, three lengths and three readouts including the latent series; 4,200 $p$-values per cell), the rejection rate is at most 0.060 at level 0.05 and at most 0.110 at level 0.1. The FWER of the lattice test with BH is at most 0.12 at $\alpha = 0.1$ over the 24 global-null settings (50 graphs each), within binomial error of the nominal level. Shift sets drawn from a minimum-gap window instead of a lattice are anti-conservative: for one independent pair at $T = 240$ and $\phi_{\mathrm{AR}} = 0.9$, $\Pr(p \le 0.01) = 0.055$ over 3,000 replications.
(ii)~CKG has no error control: its temporal layer gives 8-27 false edges per graph under the global null, a family-wise error rate of 1.00 in every global-null setting, and FDR violations in all 54 non-confounded settings.
(iii)~Baselines given the same multiplicity correction are competitive: PCMCI+ with BH over the same 42 pairs violates FDR in 3 of 54 settings, as does the lattice test with BH; conditional Granger $F$-tests with BY violate it in 9. All three BH violations of the lattice test occur in sparse settings with binary readouts at $T = 2400$ (FDR 0.18-0.21). Conditioning on on/off readouts does not remove indirect dependence: in a separate simulation of sparse graphs, pairs without a direct edge but linked through other concepts are rejected at level 0.05 at rates up to 0.099 with binary readouts, against at most 0.049 on the latent series. The power of the lattice test on sparse graphs grows with record length (true-positive rate 0.02, 0.53 and 0.67 at $T = 240$, 960 and 2400 with BH; 0.00, 0.00 and 0.52 with BY).
(iv)~An unrecorded shared driver produces false edges for every method (2.54 per graph for the lattice test with BH). Conditioning on the driver, as recorded clinician actions allow, reduces this to 0.41.

\subsection{Pressure concepts against the cuff and without an arterial line}
\label{app:supp-pressure}

Hypotension and hypertension are read from the arterial waveform. On the MIMIC-IV-WDB test split we score the pressure heads against three references (Table~\ref{tab:supp-pressure}): the monitor's mean arterial pressure from the same transducer (8 patients, 40,211 windows); the oscillometric cuff mean pressure nearest the window center within 300~s while the arterial line is present, a physically separate sensor (8 patients, 2,030 windows); and the cuff pressure in windows without an arterial waveform, where the fused model sees only ECG, PPG and respiration (21 patients, 55,557 windows). Against the arterial numeric, the seed-mean AUROC is at least 0.97 for both supervision sources, reflecting the pressure level carried by the arterial input, which is scaled but not standardized (App.~\ref{app:data-windows}). Against the cuff, the heads reach 0.95-0.96 for hypertension and 0.62-0.65 for hypotension, similar to the rule detector (0.98 and 0.62; CIs are wide with 8 patients). Without the arterial waveform, AUROC across the six models and both concepts ranges from 0.386 to 0.530: pressure concepts are not inferred from ECG, PPG or respiration. The concept-head protocol (Table~\ref{tab:prereg}) specified these comparisons without hypotheses.

\begin{table}[!htbp]
\centering
\caption{Pressure-concept AUROC on the MIMIC-IV-WDB test split by reference (8, 8 and 21 patients). Heads: mean $\pm$ s.d.\ over three seeds; rule detector: 95\% patient-cluster bootstrap CI.}
\label{tab:supp-pressure}
\footnotesize
\setlength{\tabcolsep}{3pt}
\begin{tabular}{@{}llccc@{}}
\toprule
Concept & Model & Arterial numeric & Cuff, line in place & Cuff, no line \\
\midrule
Hypotension & rule-supervised & $0.972 \pm 0.009$ & $0.647 \pm 0.022$ & $0.452 \pm 0.068$ \\
 & monitor-supervised & $0.983 \pm 0.002$ & $0.623 \pm 0.063$ & $0.401 \pm 0.015$ \\
 & rule detector & 0.988 [0.984, 0.996] & 0.617 [0.285, 0.964] & n/a \\
\midrule
Hypertension & rule-supervised & $0.995 \pm 0.001$ & $0.962 \pm 0.011$ & $0.452 \pm 0.025$ \\
 & monitor-supervised & $0.987 \pm 0.003$ & $0.951 \pm 0.006$ & $0.513 \pm 0.007$ \\
 & rule detector & 0.994 [0.992, 0.996] & 0.977 [0.889, 0.989] & n/a \\
\bottomrule
\end{tabular}
\end{table}

\end{document}